\let\classAND\AND
\let\AND\relax
\let\AND\classAND
\newcommand{\ie}{i.e.\@\xspace}
\newcommand{\eg}{e.g.\@\xspace}
\newcommand{\etc}{etc.\@\xspace}
\newcommand{\cf}{cf.\@\xspace} %
\newcommand{\wrt}{w.r.t.\@\xspace} %
\newcommand{\wolog}{w.l.o.g.\@\xspace} %
\newcommand{\ith}[1][i]{\ensuremath{{#1}^{th}}\xspace}
\DeclarePairedDelimiter{\parenthesis}{(}{)}
\DeclarePairedDelimiter{\brackets}{[}{]}
\DeclarePairedDelimiter{\braces}{\{}{\}}
\DeclarePairedDelimiter{\scprod}{\langle}{\rangle}
\newcommand{\inv}[1]{\ensuremath{#1^{-1}}}
\newcommand{\transpose}[1]{\ensuremath{#1^{\top}}}
\newcommand{\diag}[1]{\ensuremath{\mathrm{diag}\parenthesis{#1}}}
\newcommand{\rank}[1]{\ensuremath{\mathrm{rank}\parenthesis{#1}}}
\newcommand{\mat}[1]{\ensuremath{\boldsymbol{\mathrm{#1}}}}
\newcommand{\myvec}[1]{\ensuremath{\boldsymbol{#1}}}
\newcommand{\derivative}[2]{\ensuremath{\dfrac{\partial #1}{\partial #2}}}
\newcommand{\rows}[2]{\ensuremath{\brackets{#1}_{#2:}}}
\newcommand{\norm}[1]{\ensuremath{\left\Vert#1\right\Vert}}
\newcommand{\normsquared}[1]{\ensuremath{\norm{#1}^2}}
\newcommand{\abs}[1]{\ensuremath{\left|#1\right|}}
\newcommand{\expectation}[1]{\ensuremath{\mathbb{E}_{#1}}}
\newcommand{\rr}[1]{\ensuremath{\mathbb{R}^{#1}}}
\newcommand{\prob}{\mathbb{P}}
\NewDocumentCommand{\dist}{o}{p\IfValueT{#1}{_{#1}}}
\newcommand{\myxmark}{\text{\ding{55}}}
\newcommand*{\defeq}{\mathrel{\rlap{%
			\raisebox{0.3ex}{$\m@th\cdot$}}%
		\raisebox{-0.3ex}{$\m@th\cdot$}}%
	=}
\newcommand*{\eqdef}{=\mathrel{\rlap{%
			\raisebox{0.3ex}{$\m@th\cdot$}}%
		\raisebox{-0.3ex}{$\m@th\cdot$}}}
\newcommand{\patrik}[1]{\textcolor{cyan}{[\textbf{Patrik:} #1]}}
\newcommand{\expinner}[3]{\ensuremath{e^{\transpose{#1}\!\parenthesis{#2} {#1}\parenthesis{#3}/\!\gls{temp}}}}
\newcommand{\expinnerpos}{\expinner{\gls{enc}}{\gls{obs}}{\gls{obspos}}}
\newcommand{\expinnerneg}{\expinner{\gls{enc}}{\gls{obs}}{\gls{obsneg}_i}}
\newcommand{\optionalindex}[1]{\ifthenelse{\isempty{#1}}{}{_{#1}}}
\newglossaryentry{aux}{
    name        = \ensuremath{\mathrm{\boldsymbol{u}}} ,
    description = {auxiliary variable} ,
    type        = abbrev,
}
\newglossaryentry{im}{
    name        = \ensuremath{\mathrm{Im}} ,
    description = {image space} ,
    type        = abbrev,
}
\newglossaryentry{ker}{
    name        = \ensuremath{\mathrm{Ker}} ,
    description = {kernel space} ,
    type        = abbrev,
}
\newglossaryentry{r2}{
    name        = \ensuremath{R^2} ,
    description = {coefficient of determination} ,
    type        = abbrev,
}
\newglossaryentry{kronecker}{
    name        = \ensuremath{\otimes} ,
    description = {Kronecker product} ,
    type        = abbrev,
}
\newglossaryentry{loss}{
    name        = \ensuremath{\mathcal{L}} ,
    description = {loss function} ,
    type        = abbrev,
}
\newglossaryentry{numenv}{
    name        = \ensuremath{\abs{E}} ,
    description = {number of environments} ,
    type        = abbrev,
}
\newglossaryentry{lr}{
    name        = \ensuremath{\eta} ,
    description = {learning rate} ,
    type        = abbrev,
}
\newglossaryentry{hypersphere}{
    name        = \ensuremath{\mathcal{S}} ,
    description = {hypersphere} ,
    type        = abbrev,
}
\newcommand{\Sd}[1][d-1]{\ensuremath{\gls{hypersphere}^{#1}}\xspace}
\newglossaryentry{dec}{
    name        = \ensuremath{\boldsymbol{f}} ,
    description = {decoder map $\gls{Latent}\to\gls{Obs}$} ,
    type        = abbrev,
}
\newglossaryentry{deccomp}{
    name        = \ensuremath{f} ,
    description = {decoder map component} ,
    type        = abbrev,
}
\newglossaryentry{enc}{
    name        = \ensuremath{\boldsymbol{g}} ,
    description = {encoder map $\gls{Obs}\to\gls{Latent}$} ,
    type        = abbrev,
}
\newglossaryentry{numdata}{
    name        = \ensuremath{n} ,
    description = {number of samples} ,
    type        = abbrev,
}
\newglossaryentry{observations}{type=abbrev,name=Observations,description={\nopostdesc}}
\newglossaryentry{obs}{
    name        = \ensuremath{\boldsymbol{x}} ,
    description = {observation vector} ,
    type        = abbrev,
    parent      = observations,
}
\newglossaryentry{obscomp}{
    name        = \ensuremath{x} ,
    description = {observation single component} ,
    type        = abbrev,
    parent      = observations,
}
\newglossaryentry{Obs}{
    name        = \ensuremath{\mathcal{X}} ,
    description = {observation space} ,
    type        = abbrev,
    parent      = observations,
}
\newglossaryentry{obsdim}{
    name        = \ensuremath{D} ,
    description = {dimensionality of the observation space \gls{Obs}} ,
    type        = abbrev,
    parent      = observations,
}
\newglossaryentry{obsmat}{
    name        = \ensuremath{\mat{X}} ,
    description = {observation matrix of \rr{\gls{numdata}\times\gls{obsdim}}} ,
    type        = abbrev,
    parent      = observations,
}
\newglossaryentry{obspos}{
    name        = \ensuremath{\tilde{\boldsymbol{x}}} ,
    description = {positive observation vector} ,
    type        = abbrev,
    parent      = observations,
}
\newglossaryentry{obsneg}{
    name        = \ensuremath{{\boldsymbol{x}}^{-}} ,
    description = {negative observation vector} ,
    type        = abbrev,
    parent      = observations,
}
\newglossaryentry{labels}{type=abbrev,name=Labels,description={\nopostdesc}}
\newglossaryentry{label}{
    name        = \ensuremath{\boldsymbol{y}} ,
    description = {label vector} ,
    type        = abbrev,
    parent      = labels,
}
\newglossaryentry{labelhat}{
    name        = \ensuremath{\widehat{\boldsymbol{y}}} ,
    description = {estimated label vector} ,
    type        = abbrev,
    parent      = labels,
}
\newglossaryentry{labelcomp}{
    name        = \ensuremath{y} ,
    description = {label component} ,
    type        = abbrev,
    parent      = labels,
}
\newglossaryentry{labelcomphat}{
    name        = \ensuremath{\widehat{y}} ,
    description = {label component} ,
    type        = abbrev,
    parent      = labels,
}
\newglossaryentry{labelset}{
    name        = \ensuremath{\mathcal{Y}} ,
    description = {label set} ,
    type        = abbrev,
    parent      = labels,
}
\newglossaryentry{labeldim}{
    name        = \ensuremath{C} ,
    description = {number of classes in the label set \gls{labelset}} ,
    type        = abbrev,
    parent      = labels,
}
\newglossaryentry{latents}{type=abbrev,name=Latents,description={\nopostdesc}}
\newglossaryentry{latent}{
    name        = \ensuremath{\boldsymbol{z}} ,
    description = {latent vector} ,
    type        = abbrev,
    parent     = latents,
}
\newglossaryentry{latentcomp}{
    name        = \ensuremath{z} ,
    description = {latent single component} ,
    type        = abbrev,
    parent     = latents,
}
\newglossaryentry{Latent}{
    name        = \ensuremath{\mathcal{Z}} ,
    description = {latents} ,
    type        = abbrev,
    parent     = latents,
}
\newglossaryentry{latentdim}{
    name        = \ensuremath{d} ,
    description = {dimensionality of the latent space \gls{Latent}} ,
    type        = abbrev,
    parent     = latents,
}
\newglossaryentry{latentmat}{
    name        = \ensuremath{\mat{Z}} ,
    description = {latent matrix of \rr{\gls{numdata}\times\gls{latentdim}}} ,
    type        = abbrev,
    parent      = latents,
}
\newglossaryentry{latentpos}{
    name        = \ensuremath{\tilde{\boldsymbol{z}}} ,
    description = {positive latent vector} ,
    type        = abbrev,
    parent      = latents,
}
\newglossaryentry{latentneg}{
    name        = \ensuremath{\boldsymbol{z}^{-}} ,
    description = {negative latent vector} ,
    type        = abbrev,
    parent      = observations,
}
\newglossaryentry{sigmaz}{
    name        = \ensuremath{\boldsymbol{\sigma}_{\gls{latentcomp}}} ,
    description = {std of \gls{latentcomp}} ,
    type        = abbrev,
    parent     = latents,
}
\newglossaryentry{content}{
    name        = \ensuremath{\boldsymbol{z}^{c}} ,
    description = {content latent vector} ,
    type        = abbrev,
    parent     = latents,
}
\newglossaryentry{contentcomp}{
    name        = \ensuremath{z^{c}} ,
    description = {content latent single component} ,
    type        = abbrev,
    parent     = latents,
}
\newglossaryentry{Content}{
    name        = \ensuremath{\mathcal{Z}^{c}} ,
    description = {content} ,
    type        = abbrev,
    parent     = latents,
}
\newglossaryentry{contentdim}{
    name        = \ensuremath{d_{c}} ,
    description = {dimensionality of \gls{content}} ,
    type        = abbrev,
    parent     = latents,
}
\newglossaryentry{sigmac}{
    name        = \ensuremath{\boldsymbol{\sigma}_{c}} ,
    description = {std of \gls{contentcomp}} ,
    type        = abbrev,
    parent     = latents,
}
\newglossaryentry{style}{
    name        = \ensuremath{\boldsymbol{z}^{s}} ,
    description = {style latent vector} ,
    type        = abbrev,
    parent     = latents,
}
\newglossaryentry{stylecomp}{
    name        = \ensuremath{z^{s}} ,
    description = {style latent single component} ,
    type        = abbrev,
    parent     = latents,
}
\newglossaryentry{Style}{
    name        = \ensuremath{\mathcal{Z}^{s}} ,
    description = {style} ,
    type        = abbrev,
    parent     = latents,
}
\newglossaryentry{styledim}{
    name        = \ensuremath{d_{s}} ,
    description = {dimensionality of \gls{style}} ,
    type        = abbrev,
    parent     = latents,
}
\newglossaryentry{sigmas}{
    name        = \ensuremath{\boldsymbol{\sigma}_{s}} ,
    description = {std of \gls{stylecomp}} ,
    type        = abbrev,
    parent     = latents,
}
\newglossaryentry{modality}{
    name        = \ensuremath{\boldsymbol{z}^{m}} ,
    description = {modality-specific latent vector} ,
    type        = abbrev,
    parent     = latents,
}
\newglossaryentry{modalitycomp}{
    name        = \ensuremath{z^{m}} ,
    description = {modality-specific  latent single component} ,
    type        = abbrev,
    parent     = latents,
}
\newglossaryentry{Modality}{
    name        = \ensuremath{\mathcal{Z}^{m}} ,
    description = {latent subspace of \gls{modality}} ,
    type        = abbrev,
    parent     = latents,
}
\newglossaryentry{modalitydim}{
    name        = \ensuremath{d_{m}} ,
    description = {dimensionality of \gls{modality}} ,
    type        = abbrev,
    parent     = latents,
}
\newglossaryentry{algebra}{type=abbrev,name=Algebra,description={\nopostdesc}}
\newglossaryentry{identity}{
    name        = \ensuremath{\boldsymbol{\mathrm{I}}} ,
    description = { identity matrix} ,
    type        = abbrev,
    parent      = algebra,
}
\newglossaryentry{ones}{
    name        = \ensuremath{\boldsymbol{\mathrm{1}}} ,
    description = {a vector of ones} ,
    type        = abbrev,
    parent      = algebra,
}
\newglossaryentry{zeros}{
    name        = \ensuremath{\boldsymbol{\mathrm{0}}} ,
    description = {a vector of zeros} ,
    type        = abbrev,
    parent      = algebra,
}
\newglossaryentry{jacobian}{
    name        = \ensuremath{\boldsymbol{\mathrm{J}}} ,
    description = {Jacobian matrix} ,
    type        = abbrev,
    parent      = algebra,
}
\newglossaryentry{hessian}{
    name        = \ensuremath{\boldsymbol{\mathrm{H}}} ,
    description = {Hessian matrix} ,
    type        = abbrev,
    parent      = algebra,
}
\newglossaryentry{d}{
    name        = \ensuremath{\boldsymbol{\mathrm{D}}} ,
    description = {diagonal matrix} ,
    type        = abbrev,
    parent      = algebra,
}
\newglossaryentry{o}{
    name        = \ensuremath{\boldsymbol{\mathrm{O}}},
    description = {orthogonal matrix} ,
    type        = abbrev,
    parent      = algebra,
}
\newglossaryentry{scalar}{
    name        = \ensuremath{\alpha} ,
    description = {scalar field} ,
    type        = abbrev,
    parent      = algebra,
}
\newglossaryentry{perm}{
    name        = \ensuremath{\mathbb{P}} ,
    description = {group of permutation matrices} ,
    type        = abbrev,
    parent      = algebra,
}
\newglossaryentry{p}{
    name        = \ensuremath{\mat{P}},
    description = {permutation matrix} ,
    type        = abbrev,
    parent      = algebra,
}
\newglossaryentry{prob}{type=abbrev,name=Probability theory,description={\nopostdesc}}
\newglossaryentry{cov}{
    name        = \ensuremath{\boldsymbol{\mathrm{\Sigma}}},
    description = {covariance matrix} ,
    type        = abbrev,
    parent      = prob,
}
\newglossaryentry{mean}{
    name        = \ensuremath{\boldsymbol{\mu}},
    description = {mean} ,
    type        = abbrev,
    parent      = prob,
}
\newglossaryentry{std}{
    name        = \ensuremath{\boldsymbol{\sigma}},
    description = {standard deviation} ,
    type        = abbrev,
    parent      = prob,
}
\newglossaryentry{entropy}{
    name        = \ensuremath{\mathrm{H}} ,
    description = {entropy} ,
    type        = abbrev,
    parent      = prob,
}
\newglossaryentry{expfamparam}{
    name        = \ensuremath{\boldsymbol{\theta}} ,
    description = {parameter of exponential family} ,
    type        = abbrev,
    parent      = prob,
}
\newglossaryentry{expfamnatparam}{
    name        = \ensuremath{\boldsymbol{\eta}} ,
    description = {natural parameter of exponential family} ,
    type        = abbrev,
    parent      = prob,
}
\newglossaryentry{expfamsuffstat}{
    name        = \ensuremath{T(\gls{obs})} ,
    description = {sufficient statistics of exponential family} ,
    type        = abbrev,
    parent      = prob,
}
\newglossaryentry{expfamlogpartition}{
    name        = \ensuremath{A} ,
    description = {log parition function of exponential family (depends on \gls{expfamnatparam})} ,
    type        = abbrev,
    parent      = prob,
}
\newglossaryentry{wishart}{
    name        = \ensuremath{\mathcal{W}} ,
    description = {Wishart distribution} ,
    type        = abbrev,
    parent      = prob,
}
\newglossaryentry{normal}{
    name        = \ensuremath{\mathcal{N}} ,
    description = {normal distribution} ,
    type        = abbrev,
    parent      = prob,
}
\newglossaryentry{matrixnormal}{
    name        = \ensuremath{\mathcal{MN}} ,
    description = {normal distribution} ,
    type        = abbrev,
    parent      = prob,
}
\newglossaryentry{causal}{type=abbrev,name=Causality,description={\nopostdesc}}
\newglossaryentry{cause}{
    name        = \ensuremath{\boldsymbol{N}},
    description = {noise (independent)  variable vector} ,
    type        = abbrev,
    parent      = causal,
}
\newglossaryentry{causecomp}{
    name        = \ensuremath{N},
    description = {noise (independent)  variable component} ,
    type        = abbrev,
    parent      = causal,
}
\newglossaryentry{Cause}{
    name        = \ensuremath{\mathcal{N}} ,
    description = {space of the noise variables} ,
    type        = abbrev,
    parent      = causal,
}
\newglossaryentry{effect}{
    name        = \ensuremath{\boldsymbol{X}},
    description = {observation vector} ,
    type        = abbrev,
    parent      = causal,
}
\newglossaryentry{effectcomp}{
    name        = \ensuremath{X},
    description = {observation component} ,
    type        = abbrev,
    parent      = causal,
}
\newglossaryentry{Effect}{
    name        = \ensuremath{\mathcal{X}} ,
    description = {space of the effect variables} ,
    type        = abbrev,
    parent      = causal,
}
\newglossaryentry{pa}{
    name        = \ensuremath{\boldsymbol{Pa}},
    description = {parents of \gls{effect}} ,
    type        = abbrev,
    parent      = causal,
}
\newglossaryentry{nondesc}{
    name        = \ensuremath{\boldsymbol{ND}},
    description = {non-descendants of \gls{effect}} ,
    type        = abbrev,
    parent      = causal,
}
\newglossaryentry{nondescminuspa}{
    name        = \ensuremath{\boldsymbol{\overline{ND}}},
    description = {non-descendants of \gls{effect}, excluding its parents} ,
    type        = abbrev,
    parent      = causal,
}
\newglossaryentry{semf}{
    name        = \ensuremath{\boldsymbol{f}},
    description = {structural assignment in \glspl{sem}} ,
    type        = abbrev,
    parent      = causal,
}
\newglossaryentry{semfcomp}{
    name        = \ensuremath{f},
    description = {a component of \gls{semf}} ,
    type        = abbrev,
    parent      = causal,
}
\newglossaryentry{order}{
    name        = \ensuremath{\pi},
    description = {causal ordering} ,
    type        = abbrev,
    parent      = causal,
}
\newglossaryentry{indexset}{
    name        = \ensuremath{\mathcal{I}},
    description = {index set} ,
    type        = abbrev,
    parent      = causal,
}
\newglossaryentry{adjacency}{
    name        = \ensuremath{\boldsymbol{\mathcal{A}}} ,
    description = {adjacency matrix of a \glspl{sem}} ,
    type        = abbrev,
    parent      = causal,
}
\newglossaryentry{connectivity}{
    name        = \ensuremath{\boldsymbol{\mathcal{C}}} ,
    description = {connectivity matrix of a \glspl{sem}} ,
    type        = abbrev,
    parent      = causal,
}
\newglossaryentry{dependency}{
    name        = \ensuremath{\mathcal{D}} ,
    description = {dependency matrix of a \glspl{sem}} ,
    type        = abbrev,
    parent      = causal,
}
\newglossaryentry{seq}{
    name        = \ensuremath{\sim_{\acrshort{dag}}} ,
    description = {structural equivalence} ,
    type        = abbrev,
    parent      = causal,
}
\newglossaryentry{contrastive}{type=abbrev,name=Contrastive Learning,description={\nopostdesc}}
\newglossaryentry{clloss}{
    name        = \ensuremath{\mathcal{L}_{\mathrm{\acrshort{cl}}}} ,
    description = {contrastive loss function} ,
    type        = abbrev,
    parent      = contrastive,
}
\newglossaryentry{alignloss}{
    name        = \ensuremath{\mathcal{L}_{\mathrm{align}}} ,
    description = {alignment term in \gls{clloss}} ,
    type        = abbrev,
    parent      = contrastive,
}
\newglossaryentry{uniformloss}{
    name        = \ensuremath{\mathcal{L}_{\mathrm{uniform}}} ,
    description = {uniformity term in \gls{clloss}} ,
    type        = abbrev,
    parent      = contrastive,
}
\newglossaryentry{temp}{
    name        = \ensuremath{{\boldsymbol{\tau}}} ,
    description = {temperature in \gls{clloss}} ,
    type        = abbrev,
    parent      = contrastive,
}
\newglossaryentry{numneg}{
    name        = \ensuremath{M} ,
    description = {number of negative samples} ,
    type        = abbrev,
    parent      = contrastive,
}
\newglossaryentry{vaes}{type=abbrev,name=\acrlongpl{vae},description={\nopostdesc}}
\newglossaryentry{q}{
    name        = \ensuremath{q_{\gls{encpar}}(\gls{latent}|\gls{obs})} ,
    description = {variational posterior of the \acrshort{vae}, mapping $\gls{obs}\mapsto\gls{latent}$ parametrized by \gls{encpar}} ,
    type        = abbrev,
    parent      = vaes,
}
\newglossaryentry{qopt}{
    name        = \ensuremath{q_{\widehat{\gls{encpar}}}(\gls{latent}|\gls{obs})} ,
    description = {optimal variational posterior of the \acrshort{vae}, mapping $\gls{obs}\mapsto\gls{latent}$ parametrized by \gls{encpar}} ,
    type        = abbrev,
    parent      = vaes,
}
\newglossaryentry{encpar}{
    name        = \ensuremath{\boldsymbol{\phi}} ,
    description = {parameters of the variational posterior \gls{q}} ,
    type        = abbrev,
    parent      = vaes,
}
\newglossaryentry{encparopt}{
    name        = \ensuremath{\widehat{\boldsymbol{\phi}}} ,
    description = {optimal parameters of the variational posterior \gls{q}} ,
    type        = abbrev,
    parent      = vaes,
}
\newglossaryentry{var_family}{
    name        = \ensuremath{\mathcal{Q}} ,
    description = {distribution family of the variational posterior \gls{q} } ,
    type        = abbrev,
    parent      = vaes,
}
\newglossaryentry{pz}{
    name        = \ensuremath{p_0(\gls{latent})} ,
    description = {latent prior distribution} ,
    type        = abbrev,
    parent      = vaes,
}
\newglossaryentry{px}{
    name        = \ensuremath{p_{\gls{decpar}}(\gls{obs})} ,
    description = {marginal likelihood } ,
    type        = abbrev,
    parent      = vaes,
}
\newglossaryentry{pdata}{
    name        = \ensuremath{p(\gls{obs})} ,
    description = {data distribution } ,
    type        = abbrev,
    parent      = vaes,
}
\newglossaryentry{mean_enc}{
    name        = \ensuremath{\mu_{\gls{latent}|\gls{obs}}} ,
    description = {mean encoder of the \acrshort{vae}, \ie, $\expectation{\gls{latent}\sim\gls{q}}\parenthesis{\gls{latent}}$, mapping $\gls{obs}\mapsto\gls{latent}$} ,
    type        = abbrev,
    parent      = vaes,
}
\newglossaryentry{var_cov}{
    name        = \ensuremath{\gls{cov}^{\gls{encpar}}_{\gls{latent}|\gls{obs}}} ,
    description = {covariance matrix of \gls{q}} ,
    type        = abbrev,
    parent      = vaes,
}
\newglossaryentry{sigmak}{
    name        = \ensuremath{{\sigma}_{k}^{\gls{encpar}}(\gls{obs})^{2}} ,
    description = {variance of \gls{q} in dimension $k$} ,
    type        = abbrev,
    parent      = vaes,
}
\newglossaryentry{sigmaopt}{
    name        = \ensuremath{\boldsymbol{\sigma}^{\gls{encparopt}}(\gls{obs})^{2}} ,
    description = {optimal variance of \gls{q}} ,
    type        = abbrev,
    parent      = vaes,
}
\newglossaryentry{sigmaoptk}{
    name        = \ensuremath{{\sigma}_{k}^{\gls{encparopt}}(\gls{obs})^{2}} ,
    description = {optimal variance of \gls{q} in dimension $k$} ,
    type        = abbrev,
    parent      = vaes,
}
\newglossaryentry{mu}{
    name        = \ensuremath{\boldsymbol{\mu}^{\gls{encpar}}(\gls{obs})} ,
    description = {mean of \gls{q}} ,
    type        = abbrev,
    parent      = vaes,
}
\newglossaryentry{muk}{
    name        = \ensuremath{{\mu}_{k}^{\gls{encpar}}(\gls{obs})} ,
    description = {mean of \gls{q} in dimension $k$} ,
    type        = abbrev,
    parent      = vaes,
}
\newglossaryentry{muopt}{
    name        = \ensuremath{\boldsymbol{\mu}^{\gls{encparopt}}(\gls{obs})} ,
    description = {optimal mean of \gls{q}} ,
    type        = abbrev,
    parent      = vaes,
}
\newglossaryentry{muoptk}{
    name        = \ensuremath{{\mu}_{k}^{\gls{encparopt}}(\gls{obs})} ,
    description = {optimal mean of \gls{q} in dimension $k$} ,
    type        = abbrev,
    parent      = vaes,
}
\newglossaryentry{gamma}{
    name        = \ensuremath{\gamma} ,
    description = {square root of the precision of the \gls{vae} decoder} ,
    type        = abbrev,
    parent      = vaes,
}
\newglossaryentry{betaloss}{
    name        = \ensuremath{\mathcal{L}_{\beta}} ,
    description = {\betavae loss function} ,
    type        = abbrev,
    parent      = vaes,
}
\newglossaryentry{pxz}{
    name        = \ensuremath{p_{\gls{decpar}}(\gls{obs}|\gls{latent})} ,
    description = {conditional distribution of the decoded samples of the \acrshort{vae}, mapping $\gls{latent}\mapsto\gls{obs}$, parametrized by \gls{decpar}} ,
    type        = abbrev,
    parent      = vaes,
}
\newglossaryentry{pzx}{
    name        = \ensuremath{p_{\gls{decpar}}(\gls{latent}|\gls{obs})} ,
    description = {true posterior distribution of the decoded samples of the \acrshort{vae}, mapping $\gls{obs}\mapsto\gls{latent}$, parametrized by \gls{decpar}} ,
    type        = abbrev,
    parent      = vaes,
}
\newglossaryentry{decpar}{
    name        = \ensuremath{\boldsymbol{\theta}} ,
    description = {parameters of the decoder \gls{pxz}} ,
    type        = abbrev,
    parent      = vaes,
}
\newglossaryentry{invdeccomp}{
    name        = \ensuremath{{g}^{\gls{decpar}}} ,
    description = {inverse decoder component} ,
    type        = abbrev,
    parent      = vaes,
}
\newglossaryentry{invdec}{
    name        = \ensuremath{\mathrm{\boldsymbol{g}}^{\gls{decpar}}} ,
    description = {inverse decoder} ,
    type        = abbrev,
    parent      = vaes,
}
\newglossaryentry{distortion}{
    name        = \ensuremath{D} ,
    description = {Distortion of \cite{alemi_fixing_2018}, the same as the reconstruction term of the \acrshort{elbo} for $\beta=1$} ,
    type        = abbrev,
    parent      = vaes,
}
\newglossaryentry{rate}{
    name        = \ensuremath{R} ,
    description = {Rate of \cite{alemi_fixing_2018}, the same as the \acrshort{kld} term of the \acrshort{elbo} for $\beta=1$} ,
    type        = abbrev,
    parent      = vaes,
}
\newglossaryentry{lindec}{
    name        = \ensuremath{\boldsymbol{\mathrm{W}}} ,
    description = {weight matrix of a linear decoder} ,
    type        = abbrev,
    parent      = vaes,
}
\newglossaryentry{linenc}{
    name        = \ensuremath{\boldsymbol{\mathrm{V}}} ,
    description = {weight matrix of a linear encoder} ,
    type        = abbrev,
    parent      = vaes,
}
\newglossaryentry{imas}{type=abbrev,name=\acrlong{ima},description={\nopostdesc}}
\newglossaryentry{mixing}{
    name        = \ensuremath{\inv{g}} ,
    description = {inverse of the learned unmixing of the \acrshort{ima}, mapping $\gls{latent}\mapsto\gls{obs}$ } ,
    type        = abbrev,
    parent      = imas,
}
\newglossaryentry{lin_mixing}{
    name        = \ensuremath{A} ,
    description = {ground-truth \emph{linear} mixing process of the \acrshort{ima}, mapping $\gls{latent}\mapsto\gls{obs}$ } ,
    type        = abbrev,
    parent      = imas,
}
\newglossaryentry{cima_local}{
    name        = \ensuremath{c_{\acrshort{ima}}} ,
    description = {local \acrshort{ima} contrast } ,
    type        = abbrev,
    parent      = imas,
}
\newglossaryentry{cima_global}{
    name        = \ensuremath{C_{\acrshort{ima}}} ,
    description = {global \acrshort{ima} contrast } ,
    type        = abbrev,
    parent      = imas,
}
\newglossaryentry{source}{
    name        = \ensuremath{s} ,
    description = {sources (\acrshort{ica} equivalent of latents)} ,
    type        = abbrev,
    parent      = imas,
}
\newglossaryentry{rec_s}{
    name        = \ensuremath{\boldsymbol{y}} ,
    description = {reconstructed sources} ,
    type        = abbrev,
    parent      = imas,
}
\newglossaryentry{p_source}{
    name        = \ensuremath{p_{\gls{latent}}} ,
    description = {source distribution} ,
    type        = abbrev,
    parent      = imas,
}
\newglossaryentry{imaloss}{
    name        = \ensuremath{\mathcal{L}_{\gls{ima}}} ,
    description = {\gls{ima} loss function} ,
    type        = abbrev,
    parent      = imas,
}
\NewDocumentCommand{\cima}{ O{\gls{dec}} O{\gls{latent}}  }{\ensuremath{\gls{cima_local} ( #1\!,  #2) }\xspace}
\NewDocumentCommand{\Cima}{ O{\gls{dec}} O{\ensuremath{p_0}  }}{\ensuremath{\gls{cima_global} ( #1,  #2) }\xspace}
\newglossaryentry{gps}{type=abbrev,name=\acrlongpl{gp},description={\nopostdesc}}
\newglossaryentry{gpr}{
    name        = \ensuremath{\mathcal{GP}} ,
    description = {Gaussian Process} ,
    type        = abbrev,
    parent      = gps,
}
\newglossaryentry{gpker}{
    name        = \ensuremath{k} ,
    description = {kernel function} ,
    type        = abbrev,
    parent      = gps,
}
\newglossaryentry{gpcov}{
    name        = \ensuremath{\mathcal{K}} ,
    description = {$\gls{numdata}\times\gls{numdata}$ covariance matrix of a \acrshort{gp}} ,
    type        = abbrev,
    parent      = gps,
}
\newacronym{mpa}{MPA}{Measure Preserving Automorphism}
\newacronym{iid}{i.i.d.}{independent and identically distributed}
\newacronym{vmf}{vMF}{von Mises-Fisher}
\newacronym{nivmf}{nivMF}{non-isotropic von Mises-Fisher}
\newacronym{pd}{PD}{positive definite}
\newacronym{psd}{PSD}{positive semi-definite}
\newacronym{nd}{ND}{negative definite}
\newacronym{nsd}{NSD}{negative semi-definite}
\newacronym{ode}{ODE}{Ordinary Differential Equation}
\newacronym{pde}{PDE}{Partial Differential Equation}
\newacronym{lhs}{LHS}{left hand side}
\newacronym{rhs}{RHS}{right hand side}
\newacronym{rv}{RV}{random variable}
\newacronym{ae}{AE}{AutoEncoder}
\newacronym{lae}{LAE}{Linear Autoencoder}
\newacronym{vae}{VAE}{Variational Autoencoder}
\newacronym{cvvae}{CV-VAE}{Constant-Variance Variational Autoencoder}
\newacronym{ivae}{iVAE}{Identifiable Variational Autoencoder}
\newacronym{rae}{RAE}{Regularized Autoencoder}
\newacronym{grae}{GRAE}{Gaussian Regularized Autoencoder}
\newacronym{lvm}{LVM}{latent variable model}
\newacronym[longplural=Gaussian Processes]{gp}{GP}{Gaussian Process}
\newacronym{gplvm}{GPLVM}{Gaussian Process Latent Variable Model}
\newacronym{rbf}{RBF}{Radial Basis Function}
\newcommand{\betavae}{$\beta$-\gls{vae}\xspace}
\newacronym{kld}{KL}{Kullback-Leibler Divergence}
\newacronym{elbo}{{\text{\upshape ELBO}}}{evidence lower bound}
\newacronym{pca}{PCA}{Principal Component Analysis}
\newacronym{ppca}{PPCA}{Probabilistic Principal Component Analysis}
\newacronym{ebm}{EBM}{Energy-Based Model}
\newacronym{cca}{CCA}{Canonical Correlation Analysis}
\newacronym{mi}{MI}{Mutual Information}
\newacronym[longplural=Identifiable Exchangeable Mechanisms]{iem}{IEM}{Identifiable Exchangeable Mechanisms}
\newacronym[longplural=Independent Causal Mechanisms]{icm}{ICM}{Independent Causal Mechanisms}
\newacronym{sms}{SMS}{Sparse Mechanism Shift}
\newacronym{mss}{MSS}{Mechanism Shift Score}
\newacronym{sem}{SEM}{Structural Equation Model}
\newacronym{lingam}{LiNGAM}{Linear Non-Gaussian Acyclic Model}
\newacronym{dag}{DAG}{Directed Acyclic Graph}
\newacronym{anm}{ANM}{Additive Noise Model}
\newacronym{cd}{CD}{Causal Discovery}
\newacronym{crl}{CRL}{Causal Representation Learning}
\newacronym{hmm}{HMM}{Hidden Markov Model}
\newacronym{plr}{PLR}{Partially Linear Regression}
\newacronym{ica}{ICA}{Independent Component Analysis}
\newacronym{nlica}{NLICA}{nonlinear Independent Component Analysis}
\newacronym{bss}{BSS}{Blind Source Separation}
\newacronym{ima}{{\text{\upshape IMA}}}{Independent Mechanism Analysis}
\newacronym{igci}{IGCI}{Information Geometric Causal Inference}
\newacronym{cdf}{CdF}{Causal de Finetti}
\newacronym{nce}{NCE}{Noise Contrastive Estimation}
\newacronym{pcl}{PCL}{Permutation-Contrastive Learning}
\newacronym{tcl}{TCL}{Time-Contrastive Learning}
\newacronym{gencl}{GCL}{Generalized Contrastive Learning}
\newacronym{iia}{IIA}{Independent Innovation Analysis}
\newacronym{ar}{AR}{autoregressive}
\newacronym{var}{VAR}{Vector autoregressive}
\newacronym{nvar}{NVAR}{Nonlinear Vector AutoRegressive}
\newacronym{ai}{AI}{Artificial Intelligence}
\newacronym{ml}{ML}{Machine Learning}
\newacronym{dml}{DML}{Double Machine Learning}
\newacronym{oml}{OML}{Orthogonal Machine Learning}
\newacronym{dl}{DL}{Deep Learning}
\newacronym{rl}{RL}{Reinforcement Learning}
\newacronym{mbrl}{MBRL}{Model-Based Reinforcement Learning}
\newacronym{rlhf}{RLHF}{Reinforcement Learning from Human Feedback}
\newacronym{ssl}{SSL}{self-supervised learning}
\newacronym{cl}{CL}{Contrastive Learning}
\newacronym{dcl}{DCL}{Debiased Contrastive Learning}
\newacronym{scl}{SCL}{Spectral Contrastive Learning}
\newacronym{gcl}{GCL}{Graph Contrastive Learning}
\newacronym{alphacl}{$\alpha$-CL}{$\alpha$-Contrastive Learning}
\newacronym{arcl}{ArCL}{Augmentation-robust Contrastive Learning}
\newacronym{fce}{FCE}{Flow Contrastive Estimation}
\newacronym{pid}{PID}{parametric instance discrimination}
\newacronym{vince}{VINCE}{Variational InfoNCE}
\newacronym{rince}{RINCE}{Robust InfoNCE}
\newacronym{aggnce}{AggNCE}{Aggregated InfoNCE}
\newacronym{mcinfonce}{MCInfoNCE}{Monte-Carlo InfoNCE}
\newacronym{gmc}{GMC}{Geometric Multimodal Contrastive Learning}
\newacronym{looc}{LooC}{Leave-one-out Contrastive Learning}
\newacronym{npc}{NPC}{Negative-Positive Coupling}
\newacronym{cpc}{CPC}{Contrastive Predictive Coding}
\newacronym{nlp}{NLP}{Natural Language Processing}
\newacronym{gdl}{GDL}{Geometric Deep Learning}
\newacronym{msn}{MSN}{Masked Siamese Networks}
\newacronym{ifm}{IFM}{Implicit Feature Modification}
\newacronym{dnn}{DNN}{Deep Neural Network}
\newacronym{nn}{NN}{Neural Network}
\newacronym{ann}{ANN}{Artificial Neural Network}
\newacronym{fm}{FM}{Foundation Model}
\newacronym{llm}{LLM}{Large Language Model}
\newacronym{pcfg}{PCFG}{Probabilistic Context-Free Grammar}
\newacronym{icl}{ICL}{in-context learning}
\newacronym{nc}{NC}{Neural Collapse}
\newacronym{cdt}{CDT}{Class-Dependent Temperature}
\newacronym{mlp}{MLP}{Multi-Layer Perceptron}
\newacronym{fc}{FC}{Fully Connected}
\newacronym{strnn}{StrNN}{Structured Neural Network}
\newacronym{cn}{conv}{Convolutional layer}
\newacronym{cnn}{CNN}{Convolutional Neural Network}
\newacronym{gnn}{GNN}{Graph Neural Network}
\newacronym{ssm}{SSM}{State Space Model}
\newacronym{rnn}{RNN}{Recurrent Neural Network}
\newacronym{lstm}{LSTM}{Long Short-Term Memory}
\newacronym{gru}{GRU}{Gated Recurrent Unit}
\newacronym{relu}{ReLU}{Rectified Linear Unit}
\newacronym{bn}{BN}{Batch Normalization}
\newacronym{dbn}{DBN}{Decorrelated Batch Normalization}
\newacronym{gan}{GAN}{Generative Adversarial Network}
\newacronym{diayn}{DIAYN}{Diversity Is All You Need}
\newacronym{dads}{DADS}{DYnamics-Aware Discovery of Skills}
\newacronym{sac}{SAC}{Soft Actor Critic}
\newacronym{a2c}{A2C}{Advantage Actor Critic}
\newacronym{sgd}{SGD}{Stochastic Gradient Descent}
\newacronym{adam}{ADAM}{Adaptive Moment Estimation}
\newacronym{svd}{SVD}{Singular Value Decomposition}
\newacronym{wls}{WLS}{Weighted Least Squares}
\newacronym{sam}{SAM}{Sharpness-Aware Minimization}
\newacronym{samba}{SAMBA}{SAM-Based Autoencoder}
\newacronym{vi}{VI}{Variational Inference}
\newacronym{mfvi}{MFVI}{Mean Field Variational Inference}
\newacronym[longplural=data generating processes]{dgp}{DGP}{data generating process}
\newacronym{map}{MAP}{Maximum A Posteriori}
\newacronym{mle}{MLE}{maximum likelihood estimation}
\newacronym{etf}{ETF}{Equiangular Tight Frame}
\newacronym{mse}{MSE}{Mean Squared Error}
\newacronym{mae}{MAE}{Mean Absolute Error}
\newacronym{ce}{{\text{\upshape CE}}}{cross entropy}
\newacronym{sid}{SID}{Structural Intervention Distance}
\newacronym{shd}{SHD}{Structural Hamming Distance}
\newacronym{mcc}{MCC}{Mean Correlation Coefficient}
\newacronym{mig}{MIG}{Mutual Information Gap}
\newacronym{dci}{DCI}{Disentanglement Completeness Informativeness score}
\newacronym{arc}{ARC}{Average Relative Confusion}
\newacronym{acr}{ACR}{Average Confusion Ratio}
\newacronym{api}{API}{Application Programming Interface}
\newacronym{cpu}{CPU}{Central Processing Unit}
\newacronym{gpu}{GPU}{Graphics Processing Unit}
\newacronym{lti}{LTI}{Linear Time-Invariant}
\newacronym{zoh}{ZOH}{Zero-Order Hold}
\newacronym{gt}{{\text{\upshape GT}}}{ground truth}
\newacronym{ood}{OOD}{out-of-distribution}
\newacronym{oov}{OOV}{out-of-variable}
\newacronym{fsm}{FSM}{Finite State Machine}
\newacronym{rasp}{RASP}{Restricted-Access Sequence Processing Language}
\newacronym{ntk}{NTK}{Neural Tangent Kernel}
\newacronym{as}{a.s.}{almost surely}
\newacronym{alev}{a.e.}{almost everywhere}
\newacronym{sos}{SOS}{start-of-sequence}
\newacronym{eos}{EOS}{end-of-sequence}
  \def\gls#1{<#1>}%
  \def\glspl#1{<#1>}%
  \def\acrshort#1{<#1>}%
  \def\acrlong#1{<#1>}%
  \def\acrfull#1{<#1>}%
\definecolor{figblue}{HTML}{4A90E2}
\definecolor{figred}{HTML}{D0021B}
\definecolor{figpurple}{HTML}{7030A0}
\definecolor{figgreen}{HTML}{2CA02C}
\crefname{section}{\S}{\S\S}
\crefname{subsection}{\S}{\S\S}
\crefname{subsubsection}{\S}{\S\S}
\crefname{figure}{Fig.}{Figs.}
\crefname{prop}{Prop.}{Props.}
\crefname{appendix}{Appx.}{Appxs.}
\crefname{algorithm}{Alg.}{Algs.}
\crefname{theorem}{Thm.}{Thms.}
\crefname{reptheoreminner}{Thm.}{Thms.}
\crefname{inftheorem}{Inf.~Thm.}{Inf.~Thms.}
\crefname{conj}{Conj.}{Conjs.}
\crefname{equation}{Eq.}{Eqs.}
\crefname{definition}{Defn.}{Defns.}
\crefname{cor}{Cor.}{Cors.}
\crefname{lem}{Lem.}{Lems.}
\crefname{table}{Tab.}{Tabs.}
\crefname{assum}{Assum.}{Assums.}
\crefname{assums}{Assums.}{Assums.}
\crefname{repassumsinner}{Assums.}{Assums.}
\crefname{infassum}{Inf.~Assum.}{Inf.~Assums.}
\crefname{infassums}{Inf.~Assums.}{Inf.~Assums.}
\crefname{example}{Ex.}{Exs.}
\crefname{enumi}{}{}
\newtheorem{theorem}{Theorem}
\newtheorem{assum}{Assumption}
\newtheorem{assums}[assum]{Assumptions}
\newtheorem{lem}{Lemma}
\newtheorem*{remark}{Remark}
\newtheorem{definition}{Definition}
\newtheorem{reptheoreminner}{Theorem}
\newenvironment{reptheorem}[1]{%
  \begin{reptheoreminner}
}{\end{reptheoreminner}}
\newtheorem{repassumsinner}{Assumptions}
\newenvironment{repassums}[1]{%
  \begin{repassumsinner}
}{\end{repassumsinner}}
\let\ORGhypersetup\hypersetup
\protected\def\hypersetup{\ORGhypersetup}
  \def\hypersetup#1{}%
  \let\Cref\crtCref
  \let\cref\crtcref
  \def\gls#1{<#1>}%
  \def\glspl#1{<#1>}%
  \def\acrshort#1{<#1>}%
  \def\acrlong#1{<#1>}%
  \def\acrfull#1{<#1>}%
\setlist[enumerate,itemize]{
    noitemsep,
    nolistsep
}
\title{
Cross-Entropy is All You Need to Invert the Data Generating Process
}
\author{
Patrik Reizinger\thanks{
Joint first authorship;
$^{1}$Max Planck Institute for Intelligent Systems, Tübingen AI Center, ELLIS Institute, Tübingen, Germany;
$^{2}$Department of Computer Science, ETH Zürich and ETH AI Center, ETH Zürich, Zürich, Switzerland;
$^{3}$Department of Computer Science, Brown University, Rhode Island, USA;
$^{4}$Cold Spring Harbor Laboratory, Cold Spring Harbor, New York, USA;
}$^{\; \; 1}$, Alice Bizeul$^{*1,2}$, Attila Juhos$^{*1}$, \\
 Julia E. Vogt$^{2}$, Randall Balestriero$^3$, Wieland Brendel$^1$, David Klindt$^4$ \\
\texttt{\{patrik.reizinger, attila.juhos, wieland.brendel\}@tuebingen.mpg.de} \\
\texttt{\{alice.bizeul, julia.vogt\}@inf.ethz.ch}, \texttt{rbalestr@brown.edu}, \texttt{klindt@cshl.edu}
}
\begin{document}

\maketitle
\vspace{-2em}
\begin{abstract}
\vspace{-.6em}

Supervised learning has become a cornerstone of modern machine learning, yet a comprehensive theory explaining its effectiveness remains elusive. Empirical phenomena, such as neural analogy-making and the linear representation hypothesis, suggest that supervised models can learn interpretable factors of variation in a linear fashion. Recent advances in self-supervised learning, particularly nonlinear Independent Component Analysis, have shown that these methods can recover latent structures by inverting the data generating process. We extend these identifiability results to parametric instance discrimination, 
then show how insights transfer to the ubiquitous setting of supervised learning with cross-entropy minimization. We prove that even in standard classification tasks, models learn representations of ground-truth factors of variation up to a linear transformation under a certain DGP. We corroborate our theoretical contribution with a series of empirical studies. First, using simulated data matching our theoretical assumptions, we demonstrate successful disentanglement of latent factors. Second, we show that on DisLib, a widely-used disentanglement benchmark, simple classification tasks recover latent structures up to linear transformations. Finally, we reveal that models trained on ImageNet encode representations that permit linear decoding of proxy factors of variation.
Together, our theoretical findings and experiments offer a compelling explanation for recent observations of linear representations, such as superposition in neural networks. This work takes a significant step toward a cohesive theory that accounts for the unreasonable effectiveness of supervised learning.

\end{abstract}

\vspace{-1em}
\section{Introduction}
\vspace{-1em}

    Representation learning is a central task in machine learning, underpinning the success of extracting and encoding meaningful information from data \citep{bengio2013representation}. Among the various paradigms, supervised learning---particularly classification tasks using cross-entropy minimization---has become the dominant method in deep learning \citep{krizhevsky2012imagenet}. Despite its simplicity, this form of supervised learning has led to several intriguing and widely-observed phenomena, including: \textit{neural analogy making} \citep{mikolov_distributed_2013}, where models seemingly map between related concepts; the \textit{linear representation hypothesis} \citep{park_linear_2023}, which posits that interpretable features can be linearly decoded from neural representations; recent work on \textit{superposition} in neural networks \citep{elhage_toy_2022}, showing evidence that interpretable features are linearly represented in neural activations \citep{templeton_scaling_2024}; and the success of \textit{transfer learning} \citep{donahue2014decaf}, where a linear readout can be trained on top of learned representations to solve new tasks. These phenomena suggest that deep learning models encode various features in a manner that allows for linear decoding. Yet, a comprehensive theory that explains why these properties emerge in deep learning models has remained elusive \citep{arora_latent_2016,park_linear_2023}.

    We address this gap by building on the theory of \acrfull{ica}, which studies the conditions under which latent variables in probabilistic models can be uniquely identified \citep{comon1994independent, hyvarinen_independent_2001}. Recently, ICA has been extended to nonlinear models \citep{hyvarinen_identifiability_2023}, providing a theoretical foundation for recovering latent variables in a broad class of machine learning tasks ~\citep{hyvarinen_unsupervised_2016,hyvarinen_nonlinear_2019, gresele_incomplete_2019, khemakhem_variational_2020, klindt_towards_2021,khemakhem_ice-beem_2020,locatello_weakly-supervised_2020,morioka_independent_2021,halva_disentangling_2021,morioka_connectivity-contrastive_2023}. Most of these advances have focused on \gls{ssl}~\citep{hyvarinen_unsupervised_2016,hyvarinen_nonlinear_2019,zimmermann_contrastive_2021,von_kugelgen_self-supervised_2021,rusak_infonce_2024}, \ie, when neural networks are trained by solving a surrogate (classification) task to learn from unlabeled data---the exceptions that study supervised learning, though either in the multitask setting, or with a single task with additional assumptions, include~\citep{ahuja_towards_2022,lachapelle_synergies_2023,fumero_leveraging_2023}. However, we seek to understand whether similar identifiability guarantees can explain under what conditions cross-entropy-based supervised learning, \ie, when the labels for the classification task are provided in the dataset, recovers interpretable and transferable representations.
    
    Our journey starts with a recent development in SSL: nonlinear ICA has been shown to provide identifiability guarantees in contrastive learning, where models invert the \gls{dgp} and recover latent variables up to linear transformations \citep{hyvarinen_nonlinear_2019,zimmermann_contrastive_2021}. Building on this insight, we first extend nonlinear ICA to a simple form of SSL---\ie, \gls{pid} \citep{dosovitskiy2014discriminative}---through the DIET method~\citep{ibrahim_occams_2024}, 
    which streamlines the auxiliary task into an instance-discrimination paradigm. We model the \gls{dgp} in a new, cluster-centric way, and show that DIET's learned representation is linearly related to the ground-truth representation. 

    From this foundation, we take the crucial step of extending the theoretical framework to the more common paradigm of supervised learning. Specifically, we show that models can recover ground-truth latent variables up to a linear transformation even in standard classification tasks using the cross-entropy loss, which is the most prevalent setting in modern machine learning. By doing so, we aim to explain why deep learning, particularly supervised classification, is so effective in learning interpretable and transferable representations, offering a unifying framework to explain phenomena such as linear representations and neural analogy-making. Thus, our theoretical insights offer a potential explanation for the extraordinary success of supervised deep learning across a wide variety of tasks.
    Our \textbf{contributions} are 
    \vspace{-4pt}
    \begin{itemize}[leftmargin=*]
        \item We propose a cluster-centric \gls{dgp} as a model for the \acrlong{pid} method of~\citet{ibrahim_occams_2024} and prove the DGP's linear identifiability (\Cref{thm:ident_theo_main});
        \item We use our insight to extend the identifiability guarantee to standard cross-entropy-based supervised classification under the a cluster-centric \gls{dgp} (\Cref{thm:supervised});
        \item We provide a ``genealogy" of cross-entropy-based classification methods to connect our identifiability results in instance discrimination and supervised classification to auxiliary-variable nonlinear \acrfull{ica} \citep{hyvarinen_nonlinear_2019} and \acrfull{ssl} (\Cref{subsec:diet_in_ica_ssl}) \citep{zimmermann_contrastive_2021};
        \item We corroborate our findings in synthetic experiments matching our cluster-centric \gls{dgp}, the DisLib disentanglement benchmark \citep{locatello_challenging_2019}, and real-world ImageNet-X data \citep{idrissi2022imagenetx}, showing that the cross-entropy loss, irrespective of the meaningfulness of labels, can lead to linear identifiability of the features (\Cref{sec:exp_results}).
    \end{itemize}
    \vspace{-8pt}

   \begin{scpcmd}[
    \newscpcommand{\Cset}{}{\ensuremath{\mathscr{C}}}
    \newscpcommand{\C}{}{\ensuremath{C}}
    \newscpcommand{\c}{}{\ensuremath{c}}
    \newscpcommand{\classfunc}{}{\ensuremath{\mathcal{C}}}
    \newscpcommand{\Iset}{}{\ensuremath{\mathscr{I}}}
    \newscpcommand{\I}{}{\ensuremath{I}}
    \newscpcommand{\i}{}{\ensuremath{i}}
    \newscpcommand{\d}{}{\ensuremath{d}}
    \newscpcommand{\D}{}{\ensuremath{D}}
    \newscpcommand{\Sphere}{[1]}{\ensuremath{\mathbb{S}^{#1}}}
    \newscpcommand{\vv}{[1][]}{\ensuremath{\myvec{v}\optionalindex{#1}}}
    \newscpcommand{\z}{}{{\myvec{z}}}
    \newscpcommand{\infz}{}{{\myvec{\tilde{z}}}}
    \newscpcommand{\x}{}{\myvec{x}}
    \newscpcommand{\g}{}{\myvec{g}}
    \newscpcommand{\ig}{}{\ensuremath{\g^{-1}}}
    \newscpcommand{\f}{}{\myvec{f}}
    \newscpcommand{\h}{}{\myvec{h}}
    \newscpcommand{\W}{}{\myvec{W}}
    \newscpcommand{\w}{[1][i]}{\myvec{w}_{#1}}
    \newscpcommand{\tw}{[1][i]}{\ensuremath{\tilde{\myvec{w}}_{#1}}}
    \newscpcommand{\loss}{}{\ensuremath{\mathcal{L}}}
    \newscpcommand{\A}{}{\ensuremath{\mathcal{A}}}
    \newscpcommand{\igext}{}{\ensuremath{\myvec{F}}}
    \newscpcommand{\constvec}{}{\ensuremath{\myvec{\psi}}}
]

     \begin{figure}
        \centering
        \includegraphics[width=14cm,keepaspectratio]{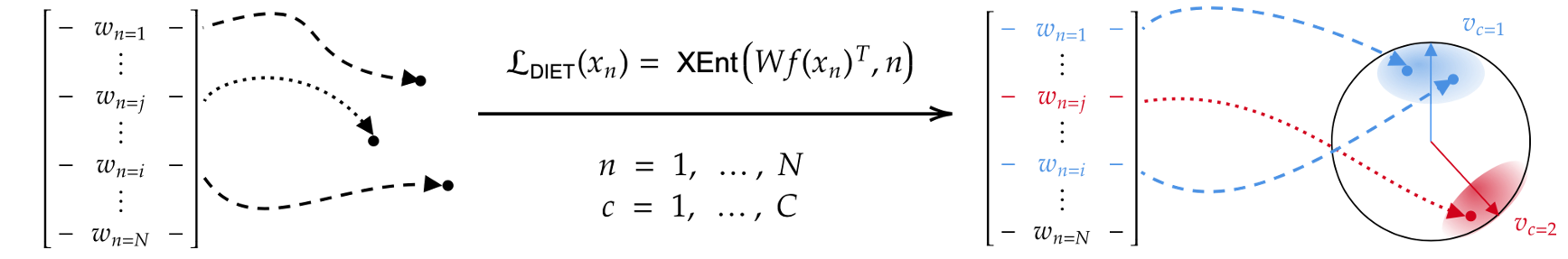}
        \caption{\textbf{DIET~\citep{ibrahim_occams_2024} learns identifiable features}: given $N$ samples and a $d-$dimensional latent representation,
        DIET learns a linear ${(N\times d)-}$dimensional classification head \W on top of a nonlinear encoder $\f$ through an instance discrimination objective~\eqref{eq:DIET_loss_x_main}. For unit-normalized $\f(\x_{n}),$ DIET maps samples and their augmentations close to the cluster vector $\vv_{\c}$ corresponding to the class---as if sampled from a \gls{vmf} distribution, centered around  $\vv_{\c}$. For duplicate samples, \ie, matching class labels, the corresponding rows of $\W$ will be the same, as shown for $\x_1$ and $\x_i$ with {\color{figblue}$\w[1]=\w[i]$}.}
        \label{fig:fig1}
    \end{figure}
\section{Background}\label{sec:bg}
        \vspace{-12pt}

        \paragraph{Empirical evidence of a linear latent representation.}
        The \textit{linear representation hypothesis}~\citep{park_linear_2023} has lately received a lot of attention.
        A weak version of this hypothesis could mean that there are directions in neural activation space that correspond to interpretable features.
        In the case of \textit{neural analogy making}, \citet{mikolov_distributed_2013} showed that there exist directions in word embeddings that are interpretable and preserved across input pairs.
        As an example for encoder \f, producing latent variables \z, the direction $\z=\f(man)-\f(woman)$ seems to correspond to gender and can be added to other words such as $\f(king)+\z \approx \f(queen)$.
        Several datasets, such as the Google Analogy Dataset (GA) \citep{mikolov2013efficient} and BATS \citep{drozd2016word}, have been developed to evaluate neural analogy-making.
        These were, for instance, evaluated in \citep{dufter2019analytical}.
        Theoretical explanations of linear representations have been proposed for word embeddings by~\citet{arora_latent_2016} and \citet{allen2019analogies}. Both approaches take a statistical learning theory perspective and focus on characterizing the pointwise mutual information. They do not consider cross-entropy-based classification; and, thus, do not make a connection to supervised classification, as we do in \Cref{thm:supervised}.
        \citet{park_linear_2023} provide a framework to specify what exactly is meant by the linear representation hypothesis. They also provide a strong, causal hypothesis where finding that a feature is linearly represented does not imply that an intervention on that linear subspace will causally remove the feature from the model output.
        \cite{engels2024not} point out that some latent representations are not linear. This makes intuitive sense if we consider that some latent features, such as the pose of an object, have a non-Euclidean topology that will have to be embedded on a curved manifold in a linear subspace of the latent representation \citep{higgins_towards_2018,pfau_disentangling_2020,keurti_desiderata_2023}. For instance, the quadrature pair of sines and cosines representing rotations in a $2D$ subspace in \citep[Fig.~15]{klindt_towards_2021} depends on the object symmetries \citep{bouchacourt2021addressing}.      
        \cite{roeder_linear_2020} prove that different models trained with a discriminative objective converge to learning the same latent representation. Importantly, their claim is about the linear relationship between \textit{any two learned} representations, and not the learned and the ground-truth one, as is usually the case in identifiability theory~\citep{hyvarinen_independent_2001}. They also show this empirically for pairs of models trained on different datasets. 
        Their results are corroborated even with widely varying training factors by \cite{moschella_relative_2023}. These findings are also supported by recent large scale empirical studies in the converging representations of vision models~\citep{chen2023canonical}. This could also explain the recently proposed \textit{platonic representation hypothesis}~\citep{huh2024platonic} about the convergence of representations, the improved disentanglement across model families~\citep{du_causal_2021}, and the better identifiability of biological mechanisms~\citep{genkin2020moving}.
        However, these insights from the literature fail to connect the linearity of learned representations to the identifiability of the assumed ground-truth \gls{dgp}---this is the gap our contribution aims to address.
        \vspace{-8pt}

        \paragraph{Identifiable weakly-/self-supervised learning and ICA.}
        \acrfull{ica} theory studies the conditions under which latent variables in probabilistic models can be uniquely identified \citep{comon1994independent, hyvarinen_independent_2001}. 
        \textit{Identifiability} means that the learned representation, at the global optimum of the training loss, relates to the \textit{ground-truth} representation (\ie, the ground-truth latent variables underlying the data) via a ``simple" transformation, such as permutations or elementwise invertible transformations---this is different to investigations relating two instances of learned representations, such as in~\citet{roeder_linear_2020,moschella_relative_2023,zhang_identifiable_2023}.
        Recently, ICA has been extended to nonlinear models \citep{hyvarinen_identifiability_2023}, providing a theoretical foundation for recovering latent variables in a broad class of learning tasks ~\citep{hyvarinen_unsupervised_2016,hyvarinen_nonlinear_2019, gresele_incomplete_2019, khemakhem_variational_2020, klindt_towards_2021,khemakhem_ice-beem_2020,locatello_weakly-supervised_2020,morioka_independent_2021,halva_disentangling_2021,morioka_connectivity-contrastive_2023}. Most of these advances have focused on \gls{ssl},~\citep{hyvarinen_unsupervised_2016,hyvarinen_nonlinear_2019,zimmermann_contrastive_2021,von_kugelgen_self-supervised_2021,rusak_infonce_2024}.
        \vspace{-8pt}

\section{Theory}\label{sec:theo_results}
\vspace{-8pt}
    This section presents our main theoretical contribution. We start with our motivation to understand \acrfull{ssl} with the help of the simplified DIET~\citep{ibrahim_occams_2024} algorithmic pipeline.
    For this, we propose a cluster-centric \acrfull{dgp} that can model semantic classes (\Cref{subsec:setup}). Then we state our main result in \Cref{subsec:main} and discuss an intuition behind the identifiability of the representation learned by DIET.
    We conclude by investigating how DIET fits into the vast literature of (identifiable) \gls{ssl} and auxiliary-variable \gls{ica} methods (\Cref{subsec:diet_in_ica_ssl}).
    This leads to a significant result for proving the identifiability of the latents learned via supervised classification under the DIET \gls{dgp} (\Cref{subsec:diet_supervised}).
    We provide the technical details for  \gls{gencl}~\citep{hyvarinen_nonlinear_2019} in \Cref{sec:app_gcl} and InfoNCE~\citep{chen_simple_2020,zimmermann_contrastive_2021} in \Cref{subsec:app_infonce}.
    \vspace{-8pt}
    \paragraph{Motivation.
    }        
         Despite significant theoretical progress~\citep{zimmermann_contrastive_2021,von_kugelgen_self-supervised_2021,rusak_infonce_2024}, it remains elusive why \gls{ssl} methods work well in practice. 
         \citet{rusak_infonce_2024} highlighted two remaining gaps between theory and practice: 1)   practitioners often discard the encoder's last few layers (termed the projector) for better performance, despite identifiability guarantees not reflecting this fact; and 2) the data is presumably clustered, not reflected in the common assumption of a uniform marginal.
         Despite a similar terminology in auxiliary-variable nonlinear \gls{ica} algorithms, such as \gls{tcl}~\citep{hyvarinen_unsupervised_2016} or \gls{gencl}~\citep{hyvarinen_nonlinear_2019}, it is unclear how such methods relate to \gls{ssl} at large. 
         Interestingly, the identifiability proofs for nonlinear \gls{ica} partition the model into a separate encoder and a regression function~\citep{hyvarinen_unsupervised_2016,hyvarinen_nonlinear_2019} and prove identifiability for the latent variables after the encoder, but before the regression function. This aligns with the practice of discarding the projector in \gls{ssl}~\citep{bordes_guillotine_2023}, though identifiability results do not reflect this fact~\citep{zimmermann_contrastive_2021,von_kugelgen_self-supervised_2021,rusak_infonce_2024}.
         These observations served as our motivation to investigate
         \vspace{-8pt}
         \begin{center}
             \textit{How can we extend the identifiability guarantees to more realistic self-supervised classification scenarios, and can we apply these insights to improve our understanding of supervised learning?}
         \end{center}
         \vspace{-8pt}
         
     \paragraph{Results overview.}
         We aim to advance our theoretical understanding of \gls{ssl}, for this, we use the recently proposed DIET~\citep{ibrahim_occams_2024} (detailed in \Cref{subsec:setup}), which, beyond its simplicity, promises the strongest and most realistic results, based on similarities to \gls{gencl}~\citep{hyvarinen_nonlinear_2019}. 
         Namely, DIET uses a separate encoder and classification head, and solves an auxiliary classification task akin to \gls{gencl}---furthermore, its loss correlates with downstream performance, a non-obvious and welcome fact~\citep{rusak_infonce_2024}.
         This provides the hope to resolve the two above points by modeling the cluster structure of the data and proving identifiability for the representation used for downstream tasks (\Cref{thm:ident_theo_main}).
         Subsequently, we leverage the insights from our identifiability theory and the DIET pipeline's similarity to \textit{supervised} classification to show how the latter is a special case of DIET, where the sample indices correspond to the semantic class labels (\Cref{thm:supervised}).
         \vspace{-6pt}

    \subsection{Setup}\label{subsec:setup}
    \vspace{-.6em}

         \paragraph{DIET~\citep{ibrahim_occams_2024}.}
            DIET solves an instance classification problem, where each sample $\x$ in the training dataset of size $N$ has a unique instance label $\i$. Augmentations do not affect this label. We have a composite model $\W \circ \f$, where the backbone $\f$ produces $\d$-dimensional representations, and a linear, bias-free classification head $\W \in \rr{N\times d}$ maps these representations to a logit vector equal in size to the cardinality of the training dataset. If the parameter vector corresponding to logit $\i$ is denoted as $\w[i]$, then $\W$ effectively computes similarity scores (scalar products) between the $\w[i]$'s and embeddings $\f(\x)$. DIET trains this architecture to predict the correct instance label using multinomial regression (with $\f,\W$ and temperature $\beta$ as learnable variables), \ie, it solves a \acrfull{pid} task \citep{dosovitskiy2014discriminative,wu_unsupervised_2018}:
        \begin{equation}
            \loss_{\mathrm{\gls{pid}}}(\f, \W, \beta) = \expectation{\parenthesis{\x, \i}} \brackets[\bigg]{-\ln \dfrac{e^{\beta\scprod{\w[\i], \f(\x)}}}{\sum_{j} e^{\beta\scprod{\w[j], \f(\x)}}}}. \label{eq:DIET_loss_x_main}
        \end{equation}
        An important fact is that \eqref{eq:DIET_loss_x_main} is the cross-entropy loss with instance labels, which we will leverage to connect instance discrimination to supervised classification.
        \vspace{-8pt}
        
        \paragraph{The proposed cluster-centric \acrfull{dgp}.} 
            To prove the identifiability of the latent variables,
            we need to formally define a \acrfull{lvm} for the \acrfull{dgp}.
            We take a cluster-centric approach, representing semantic classes by cluster vectors, similar to proxy-based metric learning~\citep{kirchhof_non-isotropic_2022}. 
            Then, we model the samples of a class with a \acrfull{vmf} distribution (intuitively, this is an isotropic multivariate Normal distribution that is restricted to the unit hypershere), centered around the class's cluster vector. 
            This conditional distribution jointly models intra-class sample selection and \emph{augmentations} of samples, together called \emph{intra-class variances}. 
            In contrast to conventional \gls{ssl} methods such as InfoNCE~\citep{zimmermann_contrastive_2021}, this conceptually separates global and local structure in the latent space: 1) the cluster-vectors describe the global structure of the latent space; 
            and 2) the cluster-centric conditional in~\eqref{eq:vmf_inf} describes the local structure.
            This cluster-centric conditional embodies that 
            data augmentations are selected such that they ought not to change the sample's semantic class. Our conditional does not mean that each sample pair transforms into each other via augmentations \textit{with high probability}. It does mean that---since we assume a \gls{lvm} on the hypersphere; \ie, all semantic concepts (color, position, \etc) correspond to a continuous latent variable---the latent manifold is connected, or equivalently, that the augmentation graph is connected, which is an assumption used in~\citep{wang_chaos_2022,balestriero_contrastive_2022,haochen_beyond_2022}.
            We provide an overview of our assumptions, and defer additional details to \Cref{assum:diet_dgp} in \Cref{sec:app_ident}:
            \begin{assums}[DGP with vMF samples around cluster vectors. \emph{Simplified.}]
            \label{assum:diet_dgp_informal}
            \begin{enumerate}[leftmargin=*, itemsep=2pt, label=(\roman*)]
                \item[] %
                \item There is a finite set of semantic classes $\Cset$, represented by a set of unit-norm $\d$-dimensional cluster-vectors $\braces{\vv[\c] | \c \in \Cset} \subseteq \Sphere{\d-1}$. The system $\braces{\vv[\c]}$ is sufficiently large and \makebox{spread out.}
                \item Any instance label $\i$ belongs to exactly one class $\c = \classfunc(\i)$.%
                \item The latent variable $\z \in \Sphere{\d-1}$ of our data sample with instance label $\i$ is drawn from a \gls{vmf} distribution with concentration parameter $\kappa$ around the cluster vector $\vv_{\c}$ of class $\c=\classfunc(\i)$:
                \begin{equation}
                    \z \sim p(\z|\c) \propto e^{\kappa \langle \vv_{\c}, \z \rangle}. \label{eq:vmf_inf}
                \end{equation}
                \item Sample $\x$ is generated by passing latent $\z$ through an injective generator function: $\x = \g(\z)$.
            \end{enumerate}
            \end{assums}
            \vspace{-8pt}

    \subsection{Main result: DIET identifies both latent variables and cluster vectors}\label{subsec:main}
    \vspace{-8pt}

        Under \Cref{assum:diet_dgp_informal}, we prove the identifiability of both the latent representations $\z$ and the cluster vectors, $\vv_c$, in all four combinations of unit-normalized (\ie, when the latent space is the hypersphere, commonly used, \eg, in InfoNCE~\citep{chen_simple_2020}); and non-normalized  (as in the original DIET paper~\citep{ibrahim_occams_2024}) learned embeddings, $\tilde{\z}$, and weight vectors, $\w[i]$. We state a concise version of our result and defer the full treatment and the proof to \Cref{thm:ident_theo} in \Cref{sec:app_ident}:

        \begin{theorem}[Identifiability of latent variables drawn from vMF around cluster vectors. \emph{Simplified.}]
            \label{thm:ident_theo_main}
            Let $(\f, \W, \beta)$ globally minimize the DIET objective~\eqref{eq:DIET_loss_x_main} under the following additional constraints:
            \begin{enumerate}[leftmargin=*,label=C\arabic*.,ref=C\arabic*]
                \item[C3.] \label{item:ident_theo_4_main} the embeddings $\f(\x)$ are unnormalized, while the $\w$'s are unit-normalized. Then $\w$ identifies the cluster vector $\vv[\classfunc(i)]$ up to an orthogonal linear transformation $\mathcal{O}$: $\w = \mathcal{O}\vv[\classfunc(i)]$, for any $i$.
                Furthermore, the inferred latent variables $\tilde{\z} = \f(\x)$ identify the ground-truth latent variables $\z$ up to a scaled orthogonal transformation with the same $\mathcal{O}$: $\z = \frac{\kappa}{\beta} \mathcal{O}\tilde{\z}$.
                \item[C4.] \label{item:ident_theo_3_main} neither the embeddings $\f(\x)$ nor the $\w$'s are unit-normalized. Then $\w$ identifies the cluster vectors $\vv[c]$ up to an affine linear transformation. Furthermore, the inferred latent variables $\tilde{\z}$ identify the ground-truth latent variables $\z$ up to a linear transformation.
            \end{enumerate}
            In all cases, the weight vectors belonging to samples of the same class are equal, \ie, 
            for any $i,j$, $\classfunc(i) = \classfunc(j)$ implies $\w[i] = \w[j]$.
        \end{theorem}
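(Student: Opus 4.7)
The plan is to characterize the global minimizers of the DIET cross-entropy objective by matching the model's predictive distribution to the true posterior $p(i\mid\x)$ induced by the cluster-centric vMF DGP. Since $\loss_{\mathrm{PID}} = \mathbb{E}_{\x}\bigl[H(p(\cdot\mid\x)) + \mathrm{KL}(p(\cdot\mid\x)\,\|\,q_{\f,\W,\beta}(\cdot\mid\x))\bigr]$, any global minimizer must satisfy $q = p$ for almost every $\x$. First I would compute $p(i\mid\x)$ explicitly: since $\g$ is injective we have $\z=\g^{-1}(\x)$, the instance prior is uniform, and the likelihood depends on $i$ only through $\classfunc(i)$, so the vMF normalizer cancels and
\[
p(i\mid\x)\;\propto\;\exp\bigl(\kappa\,\langle \vv_{\classfunc(i)},\z\rangle\bigr).
\]
Equating logarithms of $p$ and $q$ and matching the $i$-dependent parts gives, for every instance $i$ and $\x$,
\[
\beta\,\langle \w[i],\f(\x)\rangle \;=\; \kappa\,\langle \vv_{\classfunc(i)},\z\rangle \;+\; C(\x),
\]
where $C(\x)$ is the log-ratio of normalizers (independent of $i$).

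From this pointwise identity the same-class claim follows immediately: if $\classfunc(i)=\classfunc(j)$, subtracting the two equations gives $\langle \w[i]-\w[j],\f(\x)\rangle=0$ on the support of $\x$, which is $d$-dimensional (since the vMF has full support on $\Sphere{d-1}$ and $\g$ is a homeomorphism onto its image), forcing $\w[i]=\w[j]$. For the linear identifiability, I would fix a reference instance $i_0$ and subtract to eliminate $C(\x)$, yielding
\[
\beta\,\langle \w[i]-\w[i_0],\f(\x)\rangle \;=\; \kappa\,\langle \vv_{\classfunc(i)}-\vv_{\classfunc(i_0)},\z\rangle.
\]
The ``sufficiently large and spread out'' hypothesis on the cluster vectors ensures that the differences $\{\vv_c - \vv_{c_0}\}_c$ span $\R^d$; assembling these equations over enough classes yields an invertible linear relation that expresses $\f(\x)$ as a linear (case \ref{item:ident_theo_3_main}, C4) or orthogonal-up-to-scale (case \ref{item:ident_theo_4_main}, C3) image of $\z$, and correspondingly $\w[i]$ as an (affine) linear image of $\vv_{\classfunc(i)}$.

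The cases C3 and C4 are then separated by enforcing the stated normalization constraints. In C4 nothing is normalized, so $C(\x)$ can absorb a $\x$-dependent affine shift; the resulting linear transformations $\w[i]=A\vv_{\classfunc(i)}+b$ and $\tilde{\z}=B\z+c$ are arbitrary invertible (affine) linear maps, proving linear identifiability. In C3, unit-normalization of $\w[i]$ forces the image of the unit-sphere-valued $\vv_c$'s under $A$ to lie on the unit sphere; combined with the linearity derived above this forces $A$ to be orthogonal, and matching the coefficients $\beta$ and $\kappa$ on both sides of the pointwise identity yields the explicit scale factor $\z=(\kappa/\beta)\mathcal{O}\tilde{\z}$.

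The main obstacle I expect is Step 3 (extracting the linear relation) together with the normalizer-matching argument needed for C3: one must verify that the constant $C(\x)$, which on the left depends on $\f(\x)$ and on the right on $\z=\g^{-1}(\x)$, is consistent with the orthogonal linear relation rather than forcing a nonlinear correction; this is where the spread-out assumption on $\{\vv_c\}$ and the injectivity of $\g$ do the real work by allowing $\f$ to be solved for uniquely (up to $\mathcal{O}$) rather than merely constrained on a lower-dimensional subspace. The remaining bookkeeping --- turning the pointwise identity into equal $\w$'s for the same class, and verifying the two normalization regimes --- is then routine.
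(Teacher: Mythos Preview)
Your high-level strategy coincides with the paper's: decompose cross-entropy as entropy plus KL, match softmax to the true posterior $p(i\mid\z)$, take pairwise differences to kill the normalizer, and then exploit that the cluster-vector differences span $\R^d$. However, two steps in your sketch are genuine gaps, not just bookkeeping.

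\textbf{Full-dimensionality of $\f(\x)$.} Your argument for the same-class collapse says ``$\langle \w[i]-\w[j],\f(\x)\rangle=0$ on the support of $\x$, which is $d$-dimensional \ldots\ forcing $\w[i]=\w[j]$.'' But the support of $\x$ is $\g(\mathbb{S}^{d-1})$, a $(d{-}1)$-dimensional manifold, and a priori nothing prevents $\f(\g(\mathbb{S}^{d-1}))$ from lying in a proper hyperplane of $\R^d$. The paper first proves that $\h=\f\circ\g$ is \emph{injective} (using that $\{\vv_c-\vv_{c_0}\}$ spans $\R^d$, so $\h(\z_1)=\h(\z_2)$ forces $\langle \vv_c-\vv_{c_0},\z_1-\z_2\rangle=0$ for all $c$) and then invokes the Borsuk--Ulam theorem to conclude that the image of an injective continuous $\h:\mathbb{S}^{d-1}\to\R^d$ cannot lie in a $(d{-}1)$-dimensional subspace. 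Only then does $\langle\w[i]-\w[j],\h(\z)\rangle\equiv 0$ force $\w[i]=\w[j]$. This step is not routine and is missing from your outline.

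\textbf{Case C3: orthogonality does not follow from unit-norm alone.} You write that ``unit-normalization of $\w[i]$ forces the image of the unit-sphere-valued $\vv_c$'s under $A$ to lie on the unit sphere; combined with the linearity \ldots\ this forces $A$ to be orthogonal.'' This is false as stated: an affine map sending finitely many points of $\mathbb{S}^{d-1}$ to $\mathbb{S}^{d-1}$ need not be orthogonal (already in $\R^2$, three points on the unit circle can be sent affinely to three other points on the unit circle by a non-orthogonal map). The paper requires an \emph{additional} diversity assumption --- that the $|\Cset|\times 2d$ matrix with rows $\bigl((\vv_c\odot\vv_c)^\top,\;\vv_c^\top\bigr)$ have full column rank $2d$ --- and then runs a careful SVD argument: writing $\mathcal{O}=\mathcal{U}^\top\Sigma\mathcal{V}$, the unit-norm constraints $\|\Sigma\vv_c+\constvec\|=r$ for all $c$ become a linear system in $(\sigma\odot\sigma,\,\sigma\odot\constvec,\,r^2)$ whose solution space is one-dimensional and forces $\constvec=0$ and $\sigma=\mathbf{1}$. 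This is the most technical part of the proof and your sketch does not supply it. Incidentally, the obstacle you flagged (consistency of $C(\x)$ with the orthogonal relation) is a red herring: once you subtract to eliminate $C(\x)$, it plays no further role.
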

        \vspace{-8pt}

    \paragraph{Intuition.}
        DIET assigns a different (instance) label and a unique weight vector $\w[\i]$ to each training sample. The cross-entropy objective is optimized if the trained neural network can distinguish between the samples. 
        Thus, the learned representation $\tilde{\z} = \f(\x)$ %
        should capture enough information to distinguish different samples, even from the same class.        
        However, the weight vectors $\w[\i]$'s cannot be sensitive to the intra-class sample variance or the sample's instance label $\i$ (because the conditional distribution over latent variables is identical for all samples of the same class). This leads to the weight vectors taking the values of the cluster vectors.
        As cluster vectors only capture some statistics of the conditional~\eqref{eq:DIET_loss_x_main}, feature recovery is more fine-grained than cluster identifiability. 
        The interaction between the two is dictated by the cross-entropy loss, which is minimized if the representation $\tilde{\z}$ is most similar to its own assigned weight vector $\w[\i]$. \Cref{fig:fig1} provides a visualization conveying the intuition behind \Cref{thm:ident_theo_main}.

\subsection{Supervised classification}\label{subsec:diet_supervised}
\vspace{-.6em}
    This section relates our cluster-centric \gls{dgp} to \textit{supervised} classification. To see how supervised machine learning is a special case of self-supervised approaches, consider that the sample index (\ie, the target of the cross-entropy loss) can be defined \textit{arbitrarily} (as long as \Cref{assum:diet_dgp_informal} are still satisfied). This means that many labelings are possible, including the one used for supervised classification.
    This, \textit{in hindsight} obvious insight has important consequences: it can explain the success of supervised cross-entropy-based classification. Namely, supervised learning performs non-linear \gls{ica} under our proposed DGP (\cref{assum:diet_dgp_informal}). We demonstrate this in \Cref{subsec:exp_synth,subsec:exp_imagenet}. We state a concise version of our result and defer the full treatment to
    \Cref{sec:app_ident}:

    \begin{theorem}[Identifiability of latent variables drawn from a vMF around class vectors]
        \label{thm:supervised}
        Let \Cref{assum:sup_dgp} hold, and suppose that a continuous encoder \(\f: \mathbb{R}^\D \rightarrow \mathbb{R}^d\), a linear classifier \(\W\) with rows \(\{\w[c]^\top \mid c \in \Cset\}\), and \(\beta > 0\) globally minimize the cross-entropy objective:
        \[
        \loss_{\mathrm{supervised}}(\f, \W, \beta) = \mathbb{E}_{(\x, \C)} \left[ -\ln \frac{e^{\beta \langle \w[C], \f(\x) \rangle}}{\sum_{c' \in \Cset} e^{\beta \langle \w[c'], \f(\x) \rangle}} \right].
        \]
        Then, the composition \(\h = \f \circ \g\) is a linear map from \(\mathbb{S}^{d-1}\) to \(\mathbb{R}^d\).
    \end{theorem}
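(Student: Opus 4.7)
}
The plan is to exploit that the Bayes-optimal classifier is the unique minimizer of the cross-entropy objective (up to equivalent representations of the same conditional $p(c|\x)$), then use the very specific form of $p(c|\x)$ dictated by the vMF DGP to pin down $\f \circ \g$. Concretely, first I would invoke the standard fact that any global minimizer of $\loss_{\mathrm{supervised}}$ must satisfy, for every observation $\x$ in the support of the data distribution and every class $c \in \Cset$,
\begin{equation*}
    \frac{e^{\beta \langle \w[c], \f(\x) \rangle}}{\sum_{c' \in \Cset} e^{\beta \langle \w[c'], \f(\x) \rangle}} \;=\; p(c \mid \x).
\end{equation*}
By injectivity of $\g$ (Assumption~\ref{assum:diet_dgp_informal}), $\z = \g^{-1}(\x)$ is well-defined on the support, and Bayes' rule together with the vMF conditional~\eqref{eq:vmf_inf} gives $p(c \mid \x) \propto p(c)\, e^{\kappa \langle \vv[c], \z \rangle}$.

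Next, I would equate the log of the left-hand and right-hand sides and absorb every $c$-independent term into a single quantity $A(\x)$:
\begin{equation*}
    \beta \langle \w[c], \f(\x) \rangle \;=\; \kappa \langle \vv[c], \z \rangle \;+\; \log p(c) \;+\; A(\x),\qquad \forall c \in \Cset.
\end{equation*}
Taking differences between an arbitrary class $c$ and a fixed reference class $c_0$ cancels $A(\x)$, yielding, with $\h(\z) := \f(\g(\z))$,
\begin{equation*}
    \beta \bigl\langle \w[c] - \w[c_0],\, \h(\z) \bigr\rangle \;=\; \kappa \bigl\langle \vv[c] - \vv[c_0],\, \z \bigr\rangle \;+\; \log\!\frac{p(c)}{p(c_0)}.
\end{equation*}
This is a system of scalar linear equations in $\h(\z)$, one per class $c \ne c_0$, whose right-hand side is an affine function of $\z$.

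Then I would stack these equations into matrix form. Let $\tilde{W}$ be the matrix with rows $\beta(\w[c] - \w[c_0])^\top$ and $\tilde{V}$ the matrix with rows $\kappa(\vv[c] - \vv[c_0])^\top$, and $\b$ the vector of log-ratios $\log(p(c)/p(c_0))$. Then $\tilde{W}\,\h(\z) = \tilde{V}\z + \b$. Under the ``sufficiently large and spread out'' part of Assumption~\ref{assum:diet_dgp_informal}(i), which I would make precise as: the differences $\{\vv[c] - \vv[c_0]\}_{c \ne c_0}$ span $\mathbb{R}^d$, so that $\tilde{W}$ admits a left inverse. Applying this left inverse yields $\h(\z) = L \z + \myvec{b}$ for a constant matrix $L \in \mathbb{R}^{d \times d}$ and a constant vector $\myvec{b} \in \mathbb{R}^d$, i.e.\ $\h$ is the restriction of an affine map (which, under a uniform class prior, collapses to a purely linear map) from $\mathbb{S}^{d-1}$ to $\mathbb{R}^d$, as claimed.

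The main obstacle I anticipate is the verification that $\tilde{W}$ has full column rank $d$, which ties together the geometric condition on the cluster vectors $\{\vv[c]\}$ (from Assumption~\ref{assum:diet_dgp_informal}), the structural tie $\w[c] \leftrightarrow \vv[c]$ that must be inherited from the DIET-style argument in Theorem~\ref{thm:ident_theo_main}, and the consistency of the optimum across \emph{all} $\x$ in the support rather than almost surely. A subsidiary technical worry is handling a non-uniform class prior cleanly so that the constant $\myvec{b}$ is indeed absorbed into the statement ``linear map'' as used in the theorem; this should follow either by appealing to the uniform-prior case baked into Assumption~\ref{assum:diet_dgp_informal} or by noting that since both $\h(\z)$ and $\z$ lie on/near $\mathbb{S}^{d-1}$, an additive constant can be folded into $L$ without loss of generality.
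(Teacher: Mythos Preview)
Your first half mirrors the paper: rewrite the loss via $\h = \f \circ \g$, invoke the Bayes-optimality characterization of the cross-entropy minimizer, plug in the vMF conditional with the uniform class prior of Assumption~\ref{assum:sup_dgp}, and take log-differences across classes to kill the normalizer. The paper arrives at the identical key relation $\beta\langle \w[c] - \w[c'], \h(\z)\rangle = \kappa\langle \vv[c] - \vv[c'], \z\rangle$ for all $c,c'$ and all $\z\in\mathbb{S}^{d-1}$ (with $\myvec{b}=\myvec{0}$ under the uniform prior).

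The gap is exactly where you flag it, but the fix is much lighter than you suggest. After stacking into $\tilde{W}\h(\z) = \tilde{V}\z$, you assert that the spanning of $\{\vv[c]-\vv[c_0]\}$ makes $\tilde{W}$ left-invertible; but the affine-generator assumption only gives $\operatorname{rank}(\tilde{V})=d$, and says nothing \emph{a priori} about the learned $\w[c]$'s. You need not reach for Theorem~\ref{thm:ident_theo_main} here: from $\tilde{W}\h(\z)=\tilde{V}\z$ the image of the linear map $\tilde{W}$ contains $\{\tilde{V}\z:\z\in\mathbb{S}^{d-1}\}$, whose linear span is $\tilde{V}(\mathbb{R}^d)$, a $d$-dimensional subspace; since $\tilde{W}$ has only $d$ columns, its rank is forced to be $d$ and a left inverse exists. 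With that one line your argument is complete and in fact more elementary than the paper's, which at this juncture first proves $\h$ injective (using that the $\vv$-differences span $\mathbb{R}^d$), invokes the Borsuk--Ulam theorem to conclude that the image of $\h$ spans $\mathbb{R}^d$, then constructs a linear map $\mathcal{A}$ satisfying $\mathcal{A}\h(\z)=\z$ and reads off $\h=\mathcal{A}^{-1}$.

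Two minor clean-ups. Under Assumption~\ref{assum:sup_dgp} the class prior is uniform, so $\myvec{b}=\myvec{0}$ and $\h$ is linear outright; your remark that an additive constant ``can be folded into $L$'' because $\z\in\mathbb{S}^{d-1}$ is not valid in general and is unnecessary here. The passage from an almost-everywhere identity to an everywhere identity is handled, as in the paper, by continuity of both sides.
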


    \textbf{Intuition:} In the context of DIET, the cross-entropy objective encourages the learned representations to align with the cluster vectors corresponding to each class. The identifiability of the latent variables is ensured by the fact that the cluster structure reflects the underlying data distribution, modeled as a vMF distribution. This leads to a representation that captures the latent structure up to an orthogonal transformation.\textit{ Given the same underlying structure as in DIET, supervised learning can be viewed as a special case of instance discrimination, where the instance labels are replaced by class labels. }The cross-entropy objective, when applied to classification tasks and assuming our DGP from \cref{assum:diet_dgp_informal}, similarly encourages representations to align with class vectors. As a result, the latent variables are recovered up to a linear transformation, providing a theoretical explanation for the success of supervised classification in learning linearly decodable representations.
    \vspace{-4pt}

\subsection{The genealogy of identifiable classification with cross-entropy}\label{subsec:diet_in_ica_ssl}
\vspace{-.6em}

    Our main result in \Cref{thm:ident_theo_main}, and its corollary for supervised classification (\Cref{thm:supervised}) suggest the following surprising conclusion to invert the proposed \gls{dgp} (\cref{assum:diet_dgp_informal}):
    \vspace{-6pt}
    \begin{center}
        \textit{Solving an (almost) arbitrary classification task by optimizing the cross-entropy objective is sufficient to invert the \gls{dgp} and identify the ground-truth representation up to a linear transformation.}
    \end{center}
    \vspace{-8pt}

    To show how solving a cross-entropy-based classification task is a key component to invert the \gls{dgp} and to achieve linear identifiability, we provide a unified treatment of auxiliary-variable ICA (\ie, weakly supervised or self-supervised classification) and supervised classification methods. We call this a \textit{genealogy} to allude to the fact that these methods can be seen as special cases, descending from each other (\cf \Cref{fig:fig2} and \Cref{tab:method_comp} for an overview, and \Cref{sec:app_genealogy} for details).     

    \begin{figure}[tb]
        \centering
        \tikzset{every picture/.style={line width=0.75pt}} %

\begin{tikzpicture}[x=0.75pt,y=0.75pt,yscale=-.8,xscale=1]
\draw    (58,53) -- (97,53) -- (97,78) -- (58,78) -- cycle  ;
\draw (61,57) node [anchor=north west][inner sep=0.75pt]   [align=left] {GCL};
\draw    (167,55) -- (203,55) -- (203,80) -- (167,80) -- cycle  ;
\draw (170,59) node [anchor=north west][inner sep=0.75pt]   [align=left] {TCL};
\draw    (284,55) -- (326,55) -- (326,80) -- (284,80) -- cycle  ;
\draw (287,59) node [anchor=north west][inner sep=0.75pt]   [align=left] {DIET};
\draw    (154,113) -- (219,113) -- (219,138) -- (154,138) -- cycle  ;
\draw (157,117) node [anchor=north west][inner sep=0.75pt]   [align=left] {InfoNCE};
\draw    (400,55) -- (483,55) -- (483,80) -- (400,80) -- cycle  ;
\draw (403,59) node [anchor=north west][inner sep=0.75pt]   [align=left] {\textbf{Supervised}};
\draw (121,42) node [anchor=north west][inner sep=0.75pt]   [align=left] {(a)};
\draw (228,41) node [anchor=north west][inner sep=0.75pt]   [align=left] {(b)};
\draw (243,103) node [anchor=north west][inner sep=0.75pt]   [align=left] {(c)};
\draw (349,44) node [anchor=north west][inner sep=0.75pt]   [align=left] {(d)};
\draw    (97,65.86) -- (164,67.11) ;
\draw [shift={(167,67.17)}, rotate = 181.07] [fill={rgb, 255:red, 0; green, 0; blue, 0 }  ][line width=0.08]  [draw opacity=0] (8.93,-4.29) -- (0,0) -- (8.93,4.29) -- cycle    ;
\draw    (203,67.5) -- (281,67.5) ;
\draw [shift={(284,67.5)}, rotate = 180] [fill={rgb, 255:red, 0; green, 0; blue, 0 }  ][line width=0.08]  [draw opacity=0] (8.93,-4.29) -- (0,0) -- (8.93,4.29) -- cycle    ;
\draw    (212.04,113) -- (281.31,79.1) ;
\draw [shift={(284,77.78)}, rotate = 153.92] [fill={rgb, 255:red, 0; green, 0; blue, 0 }  ][line width=0.08]  [draw opacity=0] (8.93,-4.29) -- (0,0) -- (8.93,4.29) -- cycle    ;
\draw    (326,67.5) -- (397,67.5) ;
\draw [shift={(400,67.5)}, rotate = 180] [fill={rgb, 255:red, 0; green, 0; blue, 0 }  ][line width=0.08]  [draw opacity=0] (8.93,-4.29) -- (0,0) -- (8.93,4.29) -- cycle    ;

\end{tikzpicture}
        \caption{\textbf{The simplified genealogy of cross-entropy-based classification methods} (\cf \Cref{tab:method_comp} for details): The labeled arrows express how to go from general to special methods.
        \textbf{(a)} The most general auxiliary-variable \gls{ica} framework, \acrfull{gencl}~\citep{hyvarinen_nonlinear_2019}, yields \acrfull{tcl}~\citep{hyvarinen_unsupervised_2016} as the special case when the latent conditional is assumed to come from an exponential family (of order one) with a scalar auxiliary variable; \textbf{(b)} \gls{tcl} relates to non-unit-normalized DIET by further restricting the latent conditional to a \gls{vmf} distribution; \textbf{(c)} if the neural network used in InfoNCE is partitioned into a linear classifier head and a backbone, the marginal is assumed to be a \gls{vmf} instead of uniform, we get the unit-normalized version of DIET; \textbf{(d)} if the labeling function in DIET is assumed to assign the semantic class labels to the samples, we get classic supervised training
        }
        \label{fig:fig2}
    \end{figure}

        \begin{wraptable}{r}{0.59\textwidth}
        \setlength{\tabcolsep}{1.8pt}
        \begin{center}
        \begin{small}
            \caption{\textbf{Comparison of the components of different cross-entropy-based classification methods:} $\mathbf{u}$ denotes a (possibly) vector-valued auxiliary variable, $t$ is the scalar time step, $i$ the sample index, and $\c$ the semantic class; ExpFam stands for exponential family, $\perp^{\mathbf{u}}$ for conditionally independent sources given the auxiliary variable, $\W$ is the classifier head, $\f$ the encoder, whereas N/A stands for no assumption }
            \vspace{-6pt}
        \begin{tabular}{lccccc} \toprule\midrule
            Property & GCL & TCL& InfoNCE& DIET & Supervised\\\midrule\midrule
           Latent space & \rr{d} & \rr{d} & \Sd & $\rr{d}/\Sd$ & \rr{d}\\
           Network & $\W\circ\f$ & $\W\circ\f$ & $f$ & $\W\circ\f$ & $\W\circ\f$\\
           Aux.info & $\mathbf{u}$ & $t$ & $i$ & $i$ & $\c$\\
           Conditional & $\perp^{\mathbf{u}}$ & ExpFam & \gls{vmf} & \gls{vmf} &\gls{vmf}\\
           Marginal & N/A & N/A & uniform & uniform & uniform\\
        \midrule\bottomrule
        \end{tabular}
        \label{tab:method_comp}
        \end{small}
        \end{center}
        \vskip -0.13in
    \end{wraptable}

    \paragraph{From \gls{gencl} to \gls{tcl} (\Cref{fig:fig2}a: arbitrary scalar labels and exponential family latent variables).} The most general framework we consider is {\acrfull{gencl}}~\citep{hyvarinen_nonlinear_2019}, \ie, auxiliary-variable nonlinear \gls{ica}. \gls{gencl} works with conditionally independent latent variables in Euclidean space given (possibly vector-valued) auxiliary information $\mathbf{u}$. 
    It aims to classify different values of $\mathbf{u}$ by distinguishing $(\x, \mathbf{u})$ from $(\x, \mathbf{u}^*)$, where $\mathbf{u}^*$ is an arbitrary value of the auxiliary variable. At the Bayes optimum of the cross-entropy loss, \gls{gencl} provides identifiability of the latent variables after the encoder \f, but before the classifier head \W, up to elementwise invertible transformations.
    When the latent variables are distributed according to an exponential family distribution and the auxiliary variable is a scalar (\eg, time), then we get the more specialized method, named {\acrfull{tcl}}~\citep{hyvarinen_unsupervised_2016}. If the order of the exponential family is one, identifiability holds only up to a linear transformation, otherwise, up to elementwise invertible transformations.
    \vspace{-8pt}

    \paragraph{From \gls{tcl} to DIET (\Cref{fig:fig2}b: sample index as $u$ and \gls{vmf} latent variables).}
    Using our cluster-centric \gls{dgp} (\Cref{assum:diet_dgp_informal}), and assuming an even more special latent distribution (\ie, a \gls{vmf}), we get the identifiability guarantee for DIET, \ie, our main result in \Cref{thm:ident_theo_main}. The auxiliary variable is a scalar for our result, too; however, instead of time, it is the (arbitrary) sample index.
    \vspace{-8pt}

    \paragraph{From InfoNCE to DIET (\Cref{fig:fig2}c: a compositional model $\W\circ \f$ and unit-normalized latent variables).}
    Importantly, our main result also encompasses unit-normalized representations, the conventional choice in (identifiable) \gls{ssl} such as InfoNCE (\cf \Cref{subsec:app_infonce} for details on InfoNCE)---this is why we illustrate both InfoNCE and \gls{tcl} as being the ``parents" of DIET in \Cref{fig:fig2}. 
    Thus, \Cref{thm:ident_theo_main} is more general in terms of latent spaces than nonlinear \gls{ica}, and it proves identifiability for the latent variables that are used post-training, as opposed to the proofs for InfoNCE in~\citep{zimmermann_contrastive_2021,rusak_infonce_2024}, where practitioners discard the last few layers. 
    \vspace{-8pt}

    \paragraph{From DIET to supervised classification (\Cref{fig:fig2}d: semantic class labels).} When the labeling function assigns the semantic class labels, and not arbitrary indices, then our identifiability result still holds, yielding the case of supervised learning (\Cref{thm:supervised}). 
    \vspace{-8pt}

     \begin{table}[tb]
                \vspace{-7pt}
                \setlength{\tabcolsep}{2.5pt}
                \centering
                \caption{\textbf{Identifiability results for \acrfull{pid} in numerical simulations:} Mean $\scriptscriptstyle\pm$ standard deviation across 5 random seeds. Settings that match and violate our theoretical assumptions are denoted as $\color{figgreen}\checkmark$ and ${\color{figred}\myxmark}$, respectively. We report the \gls{r2} score for linear maps $\infz \rightarrow \z$ and $\w \rightarrow \vv_c$ with normalized (subscript $o$) and not normalized (subscript $a$) $\w$. For normalized $\w$, we verify that the $\infz \rightarrow \z$ maps are orthogonal by reporting the \gls{mae} between their singular values and those of an orthogonal transformation.}
                \begin{tabular}{ccrcccccc|ccccc}
                \toprule\midrule
                & & & & & \multicolumn{4}{c}{\text{normalized $\w$}}  & \multicolumn{2}{c}{\text{unnormalized $\w$}}\\
                 & & & & & \multicolumn{2}{c}{$R^2_{\text{o}} (\uparrow)$} & \multicolumn{2}{c}{$\text{MAE}_{\text{o}} (\downarrow)$}  & \multicolumn{2}{c}{$R^2_{\text{a}} (\uparrow)$}\\
                 $N$& $d$  & $|\mathscr{C}|$  & $p(\z|\vv_c)$ & M. & $\infz\rightarrow\z$ & $\w\rightarrow\vv_c$& $\infz\rightarrow\z$ & $\w\rightarrow\vv_c$& $\infz\rightarrow\z$ & $\w\rightarrow\vv_c$  \\
                \midrule\midrule
                     $10^{3}$&   $5$&  $100$&   \small{vMF$(\kappa\!=\!10)$}& $\color{figgreen}\checkmark$& $98.6\scriptscriptstyle\pm0.01$& $99.9\scriptscriptstyle\pm0.00$&  $0.01\scriptscriptstyle\pm0.00$ & $0.00\scriptscriptstyle\pm0.00$    &    $99.0\scriptscriptstyle\pm0.00$    &   $99.9\scriptscriptstyle\pm0.00$         \\
                 $10^{5}$&   $5$&  $100$&    \small{vMF$(\kappa\!=\!10)$}& $\color{figgreen}\checkmark$& $98.2\scriptscriptstyle\pm0.01$& $99.5\scriptscriptstyle\pm0.00$&  $0.00\scriptscriptstyle\pm0.00$  & $0.00\scriptscriptstyle\pm0.00$ &$99.7\scriptscriptstyle\pm0.00$    &   $99.8\scriptscriptstyle\pm0.00$\\
                 \midrule
                    $10^{3}$&$5$&  $100$&      \small{vMF$(\kappa\!=\!10)$}& $\color{figgreen}\checkmark$& $98.6\scriptscriptstyle\pm0.01$& $99.9\scriptscriptstyle\pm0.00$&   $0.01\scriptscriptstyle\pm0.00$ & $0.00\scriptscriptstyle\pm0.00$  &  $99.0 \scriptscriptstyle\pm0.00$ & $99.9\scriptscriptstyle\pm0.00$ \\
                    $10^{3}$&$10$&  $100$&     \small{vMF$(\kappa\!=\!10)$}&   $\color{figgreen}\checkmark$& $92.5\scriptscriptstyle\pm0.01$& $99.6\scriptscriptstyle\pm0.00$&   $0.01\scriptscriptstyle\pm0.00$ &  $0.00\scriptscriptstyle\pm0.00$ &  $93.0 \scriptscriptstyle\pm0.03$ & $99.6\scriptscriptstyle\pm0.00$ \\
                     $10^{3}$&$20$&  $100$&     \small{vMF$(\kappa\!=\!10)$}&  $\color{figgreen}\checkmark$& $70.8\scriptscriptstyle\pm0.02$& $97.1\scriptscriptstyle\pm0.01$&   $0.03\scriptscriptstyle\pm0.00$ &$0.00\scriptscriptstyle\pm0.00$ &   $81.9 \scriptscriptstyle\pm0.01$ & $99.7\scriptscriptstyle\pm0.00$\\
                 \midrule
                     $10^{3}$&$5$&  $10$&     \small{vMF$(\kappa\!=\!10)$}&   $\color{figgreen}\checkmark$& $88.6\scriptscriptstyle\pm0.05$& $85.7\scriptscriptstyle\pm0.15$&       $0.02\scriptscriptstyle\pm0.00$  &   $0.00\scriptscriptstyle\pm0.00$ &  $90.0\scriptscriptstyle\pm0.05$ &  $99.0 \scriptscriptstyle\pm 0.03$       \\
                    $10^{3}$&$5$&  $100$&    \small{vMF$(\kappa\!=\!10)$}& $\color{figgreen}\checkmark$& $98.6\scriptscriptstyle\pm0.01$& $99.9\scriptscriptstyle\pm0.01$&       $0.01\scriptscriptstyle\pm0.00$   &   $0.00\scriptscriptstyle\pm0.00$   &  $99.0\scriptscriptstyle\pm0.00$ &  $99.9 \scriptscriptstyle\pm 0.00$      \\
                    $10^{3}$&$5$&  $1000$&   \small{vMF$(\kappa\!=\!10)$}&  $\color{figgreen}\checkmark$& $99.3\scriptscriptstyle\pm0.00$& $99.9\scriptscriptstyle\pm0.00$&     $0.00\scriptscriptstyle\pm0.00$    &  $0.00\scriptscriptstyle\pm0.00$&  $99.2\scriptscriptstyle\pm0.00$ &  $99.9 \scriptscriptstyle\pm 0.00$        \\
                 \midrule
                      $10^{3}$&   $5$&  $100$&    \small{vMF$(\kappa\!=\!5)$}&   $\color{figgreen}\checkmark$& $98.6\scriptscriptstyle\pm0.01$& $99.9\scriptscriptstyle\pm0.01$&     $0.01\scriptscriptstyle\pm0.00$  &   $0.00\scriptscriptstyle\pm0.00$  & $99.0\scriptscriptstyle\pm0.00$  &   $99.8\scriptscriptstyle\pm0.00$        \\
                  $10^{3}$&   $5$&  $100$&    \small{vMF$(\kappa\!=\!10)$}&  $\color{figgreen}\checkmark$& $99.0\scriptscriptstyle\pm0.00$& $99.9\scriptscriptstyle\pm0.00$&   $0.00\scriptscriptstyle\pm0.00$  & $0.00\scriptscriptstyle\pm0.00$ & $99.1\scriptscriptstyle\pm0.00$  & $99.9\scriptscriptstyle\pm0.00$       \\
                  $10^{3}$&   $5$&  $100$&    \small{vMF$(\kappa\!=\!50)$}& $\color{figgreen}\checkmark$& $45.0\scriptscriptstyle\pm0.06$& $49.7\scriptscriptstyle\pm0.06$&   $0.30\scriptscriptstyle\pm0.00$  &   $0.00\scriptscriptstyle\pm0.00$ & $72.5\scriptscriptstyle\pm0.03$  &   $75.5\scriptscriptstyle\pm0.00$      \\
                \midrule
                    $10^{3}$&$5$&  $100$&    \small{vMF$(\kappa\!=\!10)$}&  $\color{figgreen}\checkmark$& $98.6\scriptscriptstyle\pm0.01$& $99.9\scriptscriptstyle\pm0.01$&$0.01\scriptscriptstyle\pm0.00$  & $0.00\scriptscriptstyle\pm0.00$&$99.0\scriptscriptstyle\pm0.00$ & $99.9\scriptscriptstyle\pm0.00$ \\  
                    $10^{3}$&$5$&  $100$&    \small{Laplace ($b\!=\!1.0$)}& ${\color{figred}\myxmark}$& $85.2\scriptscriptstyle\pm0.01$& $99.7\scriptscriptstyle\pm0.01$& $0.01\scriptscriptstyle\pm0.00$ & $0.00\scriptscriptstyle\pm0.00$ & $85.4\scriptscriptstyle\pm0.00$ & $99.5\scriptscriptstyle\pm0.00$ \\
                    $10^{3}$&$5$&  $100$&   \small{Normal ($\sigma^{2}\!=\!1.0$)}&  ${\color{figred}\myxmark}$& $98.7\scriptscriptstyle\pm0.00$& $99.8\scriptscriptstyle\pm0.00$& $0.01\scriptscriptstyle\pm0.00$ &$0.00\scriptscriptstyle\pm0.00$ & $98.6\scriptscriptstyle\pm0.00$ & $99.6\scriptscriptstyle\pm0.00$ \\
                \midrule
                \bottomrule
                \end{tabular}
                \label{tab:exp_results_ortho}
                 \vspace{-5pt}
            \end{table}

\section{Empirical Results}\label{sec:exp_results}
    In \Cref{subsec:exp_synth}, we empirically verify the claims made in \Cref{thm:ident_theo_main} and \Cref{thm:supervised} in the synthetic setting. We generate data samples according to \Cref{assum:diet_dgp_informal}: ground-truth latent variables are sampled around cluster centroids $\vv_\c$ following a \gls{vmf} distribution. Data augmentations, which share the same instance label $\i$, are sampled from the same \gls{vmf} distribution around $\vv_\c$.
    In \Cref{subsec:dislib}, we describe our results on the DisLib disentanglement benchmark~\citep{locatello_challenging_2019}, and \Cref{subsec:exp_imagenet} includes our experiments on ImageNet-X~\citep{idrissi2022imagenetx}. We made our code publicly available on GitHub\footnote{\url{https://github.com/klindtlab/csi}}.

    \subsection{Synthetic data}\label{subsec:exp_synth}
        \paragraph{Setup.}
        We consider $N$ latent samples of dimensionality $\d$ generated from the conditional \gls{vmf} $\z \sim p(\z|\vv_\c)$, sampled around a set of $|\mathscr{C}|$ class vectors $\vv_\c,$ which are uniformly distributed across the unit hyper-sphere \Sd. We use an invertible multi-layer perceptron (MLP) to map ground-truth latent variables to data samples. We train a classification head $\W\!=\!\brackets{\w[i]^\top|_{i=1}^{N}}$ and an MLP encoder that maps samples to representations $\infz \in \mathbb{R}^{d}$ using the DIET objective~\eqref{eq:DIET_loss_x_main}. While to verify \Cref{item:ident_theo_3_main} case C4., we do not normalize $\W$, we do unit-normalize the weight vectors to validate \Cref{item:ident_theo_4_main} case C3.  We verify our theoretical claims by measuring the predictability of the ground-truth $\z$ from $\infz$ and $\vv_c$ from $\w$ using the \gls{r2} score on a held-out dataset~\citep{wright1921correlation}. For identifiability up to orthogonal linear transformations, we train linear mappings with no intercept, assess the \gls{r2} score and verify that the singular values of this transformation converge to $1$, while for identifiability up to affine linear transformations, we simply assess the \gls{r2} of a linear predictor with intercept.

            \begin{wraptable}{r}{0.5\textwidth}
                \centering
                    \setlength{\tabcolsep}{2.5pt}
                    \vspace{-12pt}
                    \begin{tabular}{crccc}
                    \toprule\midrule
                     $d$  & $|\mathscr{C}|$  & $p(\z|\vv_c)$ & M. & $R^2\!:\infz\!\rightarrow\!\z$  \\
                    \midrule\midrule
                        $5$&  $100$&      \small{vMF$(\kappa\!=\!10)$}& $\color{figgreen}\checkmark$& $99.8\scriptscriptstyle\pm0.00$\\
                        $10$&  $100$&     \small{vMF$(\kappa\!=\!10)$}&   $\color{figgreen}\checkmark$& $97.2\scriptscriptstyle\pm0.01$ \\
                        $20$&  $100$&     \small{vMF$(\kappa\!=\!10)$}&  $\color{figgreen}\checkmark$& $82.1\scriptscriptstyle\pm0.02$\\
                     \midrule
                        $5$&  $10$&     \small{vMF$(\kappa\!=\!10)$}&   $\color{figgreen}\checkmark$& $97.5\scriptscriptstyle\pm0.03$      \\
                        $5$&  $100$&    \small{vMF$(\kappa\!=\!10)$}& $\color{figgreen}\checkmark$& $99.8\scriptscriptstyle\pm0.00$    \\
                        $5$&  $1000$&   \small{vMF$(\kappa\!=\!10)$}&  $\color{figgreen}\checkmark$& $99.8\scriptscriptstyle\pm0.00$   \\
                        $5$&  $10000$&   \small{vMF$(\kappa\!=\!10)$}&  $\color{figgreen}\checkmark$& $99.8\scriptscriptstyle\pm0.00$   \\
                     \midrule
                        $5$&  $100$&    \small{vMF$(\kappa\!=\!5)$}&   $\color{figgreen}\checkmark$& $99.7\scriptscriptstyle\pm0.00$ \\
                        $5$&  $100$&    \small{vMF$(\kappa\!=\!10)$}&  $\color{figgreen}\checkmark$& $99.7\scriptscriptstyle\pm0.00$    \\
                        $5$&  $100$&    \small{vMF$(\kappa\!=\!50)$}& $\color{figgreen}\checkmark$& $65.5\scriptscriptstyle\pm0.09$     \\
                    \midrule
                        $5$&  $100$&    \small{vMF$(\kappa\!=\!10)$}&  $\color{figgreen}\checkmark$& $99.8\scriptscriptstyle\pm0.00$ \\  
                        $5$&  $100$&    \small{Laplace ($b\!=\!1.0$)}& ${\color{figred}\myxmark}$& $85.4\scriptscriptstyle\pm0.01$  \\
                        $5$&  $100$&   \small{Normal ($\sigma^{2}\!=\!1.0$)}&  ${\color{figred}\myxmark}$& $99.6\scriptscriptstyle\pm0.00$ \\
                    \midrule
                    \bottomrule
                    \end{tabular}
                    \caption{
                    \textbf{Identifiability results for supervised learning in numerical simulations:}
                    Mean $\scriptscriptstyle\pm$ standard deviation across 5 random seeds. Settings that match and violate our theoretical assumptions are denoted as $\color{figgreen}\checkmark$ and ${\color{figred}\myxmark}$, respectively. We report the \gls{r2} score for linear mappings $\infz \rightarrow \z$, and not normalized $\w$. We used $N = 10^3$ samples
                    }
                    \label{tab:sim_results_sup}
                    \vspace{-12pt}
            \end{wraptable}

        \paragraph{Results for DIET.}
            In \Cref{tab:exp_results_ortho}, we report the \gls{r2} scores for the recovery of the cluster vectors $\vv_\c$ from \W's rows and of the ground-truth latent variables $\z$ from the learned latent variables $\tilde{\z}$.
            For DIET's \gls{pid} task, we also consider cases with row-normalized \W.
            We observe scores close to $100\%$ ($\geq98\%$), even with many clusters ($\geq10^3$) and samples ($\sim10^5$). 
            High latent dimensionality ($>10$) does impact the recovery of ground-truth latent variables---such scalability problems are a common artifact in \gls{ssl}~\citep{zimmermann_contrastive_2021,rusak_infonce_2024}. 
            For a higher concentration of samples around $\vv_\c$ (\ie, $\kappa\!=\!50$) as well as a lower number of clusters (\ie, $|\mathscr{C}|\!=\!10$), the \gls{r2} score decreases, which is also a common phenomenon, and is possibly explained by too strong augmentation overlap~\citep{wang_chaos_2022,rusak_infonce_2024}. For a low number of clusters, high $\kappa$ and a fixed number of training samples, the concentration of samples in regions surrounding centroids, $v_c$, increases, a setting, refered to as ``overly overlapping augmentations'', known to be suboptimal and leading to a drop in downstream performance \citep{wang_chaos_2022}.
            Our results also suggest that even under model misspecification (last two rows in \Cref{tab:exp_results_ortho} with non-\gls{vmf} distributions), identifiability still holds.
            For unit-normalized \W rows, the \gls{mae} is lower, confirming the orthogonality of the map $\w\!\rightarrow\!\vv_\c$. We additionally ablate over batch size, concentration, and conditional in \Cref{sec:app_exp}.
            \vspace{-8pt}

        \paragraph{Results for Supervised Classification.}
            In \Cref{tab:sim_results_sup}, where the semantic class labels were used instead of the sample index, we only report the \gls{r2} score for the recovery of the ground-truth latent variables $\z$ from the learned latent variables $\tilde{\z}$.
            In all but one setting, we observe higher \gls{r2} from representations learned with class labels rather than instance indices. 
            This suggests that even a coarser classification task may suffice to learn linearly identifiable representations of the underlying latent variables.
            \vspace{-8pt}
    
        \begin{table}[htb]
            \centering
            \caption{
            \textbf{Identifiability in DisLib datasets~\citep{locatello_challenging_2019}:}
            We train different models to predict the categorical variable in each setting: \textbf{($\x$):} as a baseline, from the inputs; \textbf{$\parenthesis{\f_{\text{MLP}}(\x)}$:} from a three-layer MLP; and \textbf{ $\parenthesis{\f_{\text{CNN}}(\x)}$:} from a CNN (ResNet18).
            All continuous latent variables can be decoded from the learned representations, corroborated by 
            the Pearson correlation---reported with mean $\scriptscriptstyle\pm$ standard deviation across 3 random seeds. 
            Including the category is informative to see how well the underlying training classification task was solved.
            }
            \begin{tabular}{lllll}
            \toprule\midrule
                 Model &     Latent &                                 \x &                            $\f_{\text{MLP}}(\x)$ &                            $\f_{\text{CNN}}(\x)$ \\
            \midrule\midrule
              dSprites &   category & $0.26 \scriptscriptstyle\pm 0.00$ &              $0.94 \scriptscriptstyle\pm 0.01$ & $\boldsymbol{1.00 \scriptscriptstyle\pm 0.00}$ \\
              dSprites &      scale & $0.62 \scriptscriptstyle\pm 0.00$ & $\boldsymbol{0.98 \scriptscriptstyle\pm 0.00}$ &              $0.92 \scriptscriptstyle\pm 0.05$ \\
              dSprites &       posX & $0.92 \scriptscriptstyle\pm 0.00$ &              $0.97 \scriptscriptstyle\pm 0.00$ & $\boldsymbol{0.99 \scriptscriptstyle\pm 0.00}$ \\
              dSprites &       posY & $0.92 \scriptscriptstyle\pm 0.00$ &              $0.97 \scriptscriptstyle\pm 0.00$ & $\boldsymbol{0.99 \scriptscriptstyle\pm 0.00}$ \\
            \hline\hline
              Shapes 3D &   category & $0.42 \scriptscriptstyle\pm 0.00$ &              $1.00 \scriptscriptstyle\pm 0.00$ & $\boldsymbol{1.00 \scriptscriptstyle\pm 0.00}$ \\
              Shapes 3D &    objSize & $0.21 \scriptscriptstyle\pm 0.00$ &              $0.89 \scriptscriptstyle\pm 0.01$ & $\boldsymbol{0.99 \scriptscriptstyle\pm 0.00}$ \\
              Shapes 3D & objAzimuth & $0.04 \scriptscriptstyle\pm 0.00$ &              $0.85 \scriptscriptstyle\pm 0.02$ & $\boldsymbol{0.93 \scriptscriptstyle\pm 0.01}$ \\
            \hline\hline
                 MPI 3D &   category & $0.03 \scriptscriptstyle\pm 0.00$ &              $0.71 \scriptscriptstyle\pm 0.01$ & $\boldsymbol{0.97 \scriptscriptstyle\pm 0.00}$ \\
                 MPI 3D &       posX & $0.28 \scriptscriptstyle\pm 0.00$ &              $0.76 \scriptscriptstyle\pm 0.01$ & $\boldsymbol{0.90 \scriptscriptstyle\pm 0.01}$ \\
                 MPI 3D &       posY & $0.46 \scriptscriptstyle\pm 0.00$ &              $0.76 \scriptscriptstyle\pm 0.01$ & $\boldsymbol{0.84 \scriptscriptstyle\pm 0.01}$ \\
            \hline\hline
            MPI 3D real &   category & $0.19 \scriptscriptstyle\pm 0.00$ &              $0.88 \scriptscriptstyle\pm 0.01$ & $\boldsymbol{0.98 \scriptscriptstyle\pm 0.00}$ \\
            MPI 3D real &       posX & $0.14 \scriptscriptstyle\pm 0.00$ &              $0.74 \scriptscriptstyle\pm 0.01$ & $\boldsymbol{0.83 \scriptscriptstyle\pm 0.01}$ \\
            MPI 3D real &       posY & $0.44 \scriptscriptstyle\pm 0.00$ &              $0.54 \scriptscriptstyle\pm 0.01$ & $\boldsymbol{0.71 \scriptscriptstyle\pm 0.02}$ \\
            \hline\hline
                Cars 3D &   category & $0.05 \scriptscriptstyle\pm 0.00$ &              $0.63 \scriptscriptstyle\pm 0.11$ & $\boldsymbol{0.77 \scriptscriptstyle\pm 0.02}$ \\
                Cars 3D &  elevation & $0.15 \scriptscriptstyle\pm 0.00$ & $\boldsymbol{0.87 \scriptscriptstyle\pm 0.03}$ &              $0.78 \scriptscriptstyle\pm 0.02$ \\
            \hline\hline
             smallNORB &   category & $0.22 \scriptscriptstyle\pm 0.00$ &              $0.94 \scriptscriptstyle\pm 0.01$ & $\boldsymbol{1.00 \scriptscriptstyle\pm 0.00}$ \\
             smallNORB &  elevation & $0.15 \scriptscriptstyle\pm 0.00$ & $\boldsymbol{0.83 \scriptscriptstyle\pm 0.01}$ &              $0.79 \scriptscriptstyle\pm 0.01$ \\
            \midrule
            \bottomrule
            \end{tabular}
            \label{tab:dislib}
        \end{table}

    \subsection{dislib}\label{subsec:dislib}
        \vspace{-8pt}
        \paragraph{Setup.}
        Next, we evaluate our methods on the DisLib disentanglement benchmark \citep{locatello_challenging_2019}, which provides a controlled setting for testing disentanglement and latent variable recovery. It includes the vision datasets dSprites, Shapes 3D, MPI 3D, Cars 3D, and smallNORB. We train both a three-layer MLP with $512$ latent dimensions and BatchNorm (which helped with trainability) and a CNN (ResNet18) also with $512$ latent dimensions. We only consider latent variables with Euclidean topology, as non-Euclidean, \eg, periodic latent variables such as orientation, are problematic to learn and are potentially mapped to a nonlinear manifold~\citep{higgins_towards_2018,pfau_disentangling_2020,keurti_desiderata_2023,engels2024not}. We evaluate the recovery of latent variables by computing the Pearson correlation between ground-truth and predicted factors. We detail our setup in \cref{subsec:app_exp_det_dislib}.
        \vspace{-8pt}

        \paragraph{Results.}
        The models trained using cross-entropy were able to recover latent variables such as object position, scale, and orientation with high accuracy. As shown in \Cref{tab:dislib}, the Pearson correlation is generally highest when predicting the latent variables from the CNN's representation, which we attribute to the CNN's suitable inductive bias for images. In few cases, such as the position in dSprites, this can be done with fairly high accuracy even on the input data. Nevertheless, in all settings the nonlinear function estimated by the model is necessary to linearly identify the correct latent variables.

         \begin{figure}[tb]
            \centering
            \includegraphics[width=0.99\linewidth]{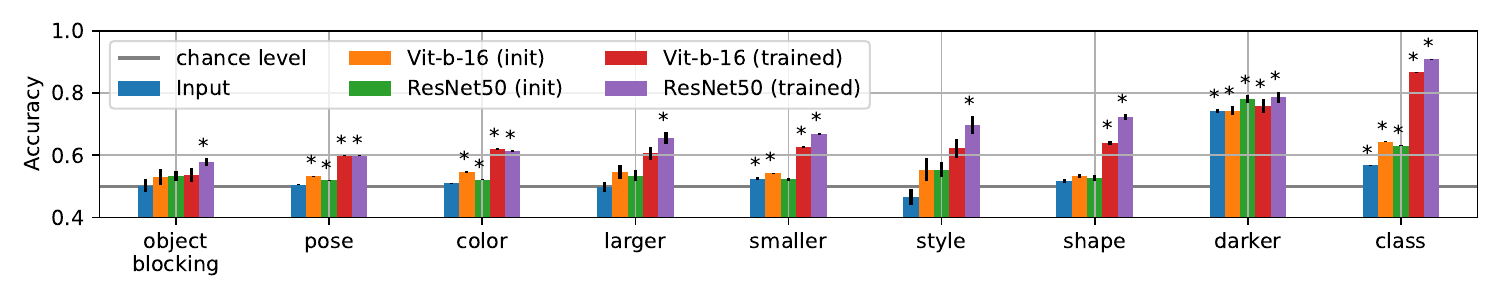}
            \includegraphics[width=0.99\linewidth]{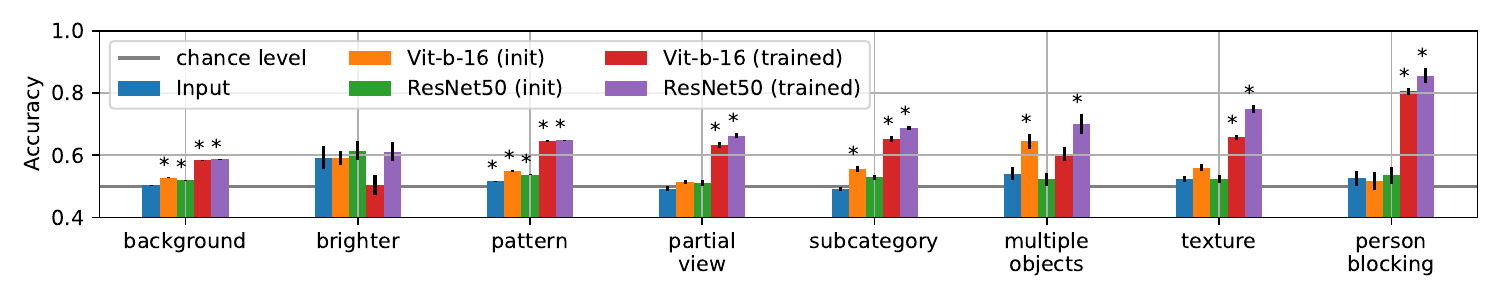}
            \vspace{-12pt}
            \caption{
            \textbf{Approximate identifiability on ImageNet-X against a random (shuffled) baseline:}
            Using ImageNet-X~\citep{idrissi2022imagenetx}, we test how well linear decoders are able to predict each latent from the second-to-last layer of different models, \ie, when the classification head is discarded. We train a linear classifier on the features, and plot the accuracy of predicting different latent variables. As baselines, we also try decoding from the raw input and from the randomly initialized model representations. Error-bars indicate standard error of the mean (SEM) across $10$ seeds of balanced resampling. Asterisks indicate significant $p$-values (against a null hypothesis of $0.5$ chance level accuracy) at an $\kappa = 0.05 / 85$ multiple comparison (Bonferroni) adjusted significance level.
            }
            \label{fig:imagenetx}
        \end{figure}
        \vspace{-8pt}

    \subsection{Real Data: ImageNet-X}\label{subsec:exp_imagenet}
    \vspace{-8pt}
    \paragraph{Setup.}
        Finally, we test the generalizability of our theoretical insights on real-world data using ImageNet-X~\citep{idrissi2022imagenetx}. The latent variables are binary proxies, defined by human annotators \citep{idrissi2022imagenetx}. We evaluate how well linear decoders can predict latent variables from pretrained model representations. We use two architectures, a ResNet50 and a Vit-b-16 both trained on standard supervised classification using a cross-entropy loss on the full ImageNet dataset \citep{deng2009imagenet}. As baselines, we also decode from the inputs and the randomly initialized models. After balanced sub-sampling, over $10$ random seeds, we report accuracies. We use $t$-tests against a chance level of $50\%$ with a Bonferroni adjusted significance level of $\kappa = \frac{0.05}{17 \cdot 5}$. Detail are in \cref{subsec:app_exp_det_imgnetx}.
        \vspace{-10pt}

        \paragraph{Results.} \Cref{fig:imagenetx} shows that even in complex, high-dimensional data, latents can be linearly decoded from representations learned via supervised learning, in most cases significantly above chance level. Some factors (\eg, \textit{darker} and \textit{brighter}) are linearly decodable even from untrained models or input space. Unsurprisingly, decoding \textit{class} (binarized ImageNet labels, every index $<\!500$ is set to $0$ and every index $\geq\! 500$ is set to $1$) works well for the trained models. ResNet50 has slightly higher decoding performance, possibly due to the larger latent space ($d\!=\!2048$, compared to $d\!=\!768$ in ViT). While texture information may be expected \citep{geirhos2018imagenet}, the presence of shape information suggests that shortcut learning may be mitigated even after standard training \citep{geirhos_shortcut_2020}.
        \vspace{-8pt}

\section{Discussion}
    \vspace{-10pt}

    \paragraph{Limitations.} One limitation of our work is that we mainly focus on synthetic and controlled datasets. While the results on ImageNet-X~\citep{idrissi2022imagenetx} are promising, they only provide some supporting evidence for our theory on real data. The factors in ImageNet-X are likely not the true latent variables of the data generating process, still, the linear identifiablity results on these proxy latent variables support our theoretical results. Further experiments on other large-scale datasets would support the generality of our findings. However, this would require the availability of such datasets with full latent variable annotations. Although our cluster-centric modeling of the \acrlong{dgp} allows capturing the inherent structure of the data, our assumption about the latent variables' geometric properties (such as being drawn from a vMF distribution on a hypersphere), may not hold in all real-world settings. For instance, the pose of an object in a scene is, arguably, an independent component/subspace corresponding to a point on $SO(3)$, which has a distinct topology from our assumed latent variables on a hypersphere.
    Moreover, the assumption that a data sample and its augmented version are conditionally independent given their semantic class could be relaxed in future work, since it may be misaligned with realistic scenarios \citep{wang_chaos_2022}.
    Despite these simplifications, our experimental results also suggest that our assumptions can be relaxed, as linear identifiability seems to hold even when some of the assumptions are violated (\cf \Cref{tab:exp_results_full}). In \cref{sec:app_exp}, we demonstrate the remarkable robustness of latent identifiability (\cref{fig:noise_sup}), the interaction between batch size, latent dimensionality, concentration, and latent conditional.
    \vspace{-10pt}

    \paragraph{Implications for Deep Learning.} Our results indicate that deep learning models trained using cross-entropy and assuming a certain DGP recover the underlying latent variables up to linear transformations. As our identifiability proof for \acrlong{pid} illustrates with DIET, this statement also holds when the classification task is standard supervised learning. 
    Our analysis on the key role of cross-entropy-based classification provides a theoretical foundation for phenomena such as neural analogy-making, transfer learning, and linear decoding of features.
    \vspace{-10pt}

    \paragraph{Conclusion.}
    We extend the identifiability results of the auxiliary-variable nonlinear \acrfull{ica} literature to \acrlong{pid} with a cluster-centric \acrlong{dgp}. Our modeling choice can capture the clustered structure of the data, accommodates non-normalized (as in \gls{ica}) and unit-normalized (as in InfoNCE) representations (\Cref{thm:ident_theo_main}). Furthermore, our identifiability result holds for the latent representation used post-training, \ie, for the latent variables before the classification head.
    Our results offer new insights into the success of deep learning, particularly in supervised classification tasks, which we show is a special case of the DIET \acrlong{pid} algorithm, where the instance labels equal the semantic class labels (\Cref{thm:supervised}).
    By linking self-supervised learning---via nonlinear ICA and DIET---to supervised classification for a specific DGP, we provide a theoretical framework that explains why simple classification tasks recover interpretable and transferable representations.
    \vspace{-10pt}
    
    \paragraph{Future Work.} Future research could extend these insights to connections between nonlinear ICA and other forms of supervised learning and testing the scalability of our theoretical results to larger models and datasets. To assess our theory's predictions beyond proxy labels~\citep{idrissi2022imagenetx}, we need real world image datasets with full specification of the latent variables, e.g., in rendered scenes.

\end{scpcmd}

\ificlrfinal

\subsubsection*{Acknowledgments}
The authors thank the International Max Planck
Research School for Intelligent Systems (IMPRS-IS) for supporting Patrik Reizinger and Attila Juhos. Patrik Reizinger acknowledges his membership in the European Laboratory for Learning and Intelligent Systems (ELLIS) PhD program. This work was supported by the German Federal Ministry of Education and Research (BMBF): Tübingen AI Center, FKZ: 01IS18039A. Wieland Brendel acknowledges financial support via an Emmy Noether Grant funded by the German Research Foundation (DFG) under grant no. BR 6382/1-1 and via the Open Philantropy Foundation funded by the Good Ventures Foundation. Wieland Brendel is a member of the Machine Learning Cluster of Excellence, EXC number 2064/1 – Project number 390727645. This research utilized compute resources at the Tübingen Machine Learning Cloud, DFG FKZ INST 37/1057-1 FUGG. Alice Bizeul's work is supported by an ETH AI Center Doctoral fellowship.

\fi

\bibliography{references,references2}
\bibliographystyle{iclr2025_conference}

\appendix
\newpage
\begin{scpcmd}[
    \newscpcommand{\Cset}{}{\mathscr{C}}
    \newscpcommand{\C}{}{C}
    \newscpcommand{\c}{}{c}
    \newscpcommand{\classfunc}{}{\mathcal{C}}
    \newscpcommand{\Iset}{}{\mathscr{I}}
    \newscpcommand{\I}{}{I}
    \newscpcommand{\i}{}{i}
    \newscpcommand{\d}{}{d}
    \newscpcommand{\D}{}{D}
    \newscpcommand{\S}{[1]}{\mathbb{S}^{#1}}
    \newscpcommand{\vv}{[1][]}{\myvec{v}\optionalindex{#1}}
    \newscpcommand{\z}{}{{\myvec{z}}}
    \newscpcommand{\infz}{}{{\myvec{\tilde{z}}}}
    \newscpcommand{\x}{}{\myvec{x}}
    \newscpcommand{\g}{}{\myvec{g}}
    \newscpcommand{\ig}{}{\g^{-1}}
    \newscpcommand{\f}{}{\myvec{f}}
    \newscpcommand{\h}{}{\myvec{h}}
    \newscpcommand{\W}{}{\myvec{W}}
    \newscpcommand{\w}{[1][i]}{\myvec{w}_{#1}}
    \newscpcommand{\tw}{[1][i]}{\tilde{\myvec{w}}_{#1}}
    \newscpcommand{\loss}{}{\mathcal{L}}
    \newscpcommand{\A}{}{\mathcal{A}}
    \newscpcommand{\igext}{}{\myvec{F}}
    \newscpcommand{\constvec}{}{\myvec{\psi}}
]

\section{Identifiability of latents drawn from a vMF around cluster vectors}\label{sec:app_ident}

This section contains the formal statement and proof of our main theoretical result. 
\Cref{subsec:app_assum} contains the relevant definition of affine generator systems.  \Cref{subsec:app_diet} contains the assumptions and the proof for all four combinations of unit-normalized and non-normalized features/cluster vectors for \acrlong{pid}. \Cref{subsec:app_supervised} discusses a special case, supervised classification.

\subsection{Affine Generator Systems}\label{subsec:app_assum}

\begin{definition}[Affine Generator System]
A system of vectors $\braces{\vv[\c] \in \rr{\d} |\c \in \Cset}$ is called an \emph{affine generator system} if any vector in $\rr{\d}$ is an affine linear combination of the vectors in the system. Put into symbols: for any $\vv \in \rr{\d}$ there exist coefficients $\alpha_\c \in \rr{}$, such that
\begin{equation}
    \vv = \sum_{\c \in \Cset} \alpha_\c \vv[\c] \quad\text{and}\quad \sum_{\c\in\Cset} \alpha_\c = 1.
\end{equation}
    
\end{definition}

\begin{lem}[Properties of affine generator systems]
    \label{lem:affine_generator}
    The following hold for any affine generator system $\braces{\vv[\c] \in \rr{\d} |\c \in \Cset}$:
    \begin{enumerate}[leftmargin=*]
        \item for any $a \in \Cset$ the system $\braces{\vv[\c] -\vv[a] |\c \in \Cset}$ is now a generator system of $\rr{\d}$;
        \item the invertible linear image of an affine generator system is also an affine generator system.
    \end{enumerate}
\end{lem}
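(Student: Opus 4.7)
The strategy for both parts is to unwind the definition of an affine generator system and feed a carefully chosen vector into the hypothesis that $\braces{\vv[\c] | \c \in \Cset}$ is affinely generating.

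For part 1, I would pick an arbitrary $\vv \in \rr{\d}$ and apply the affine generator property not to $\vv$ itself but to the translated vector $\vv + \vv[a]$. This yields coefficients $\alpha_\c \in \rr{}$ with $\sum_{\c \in \Cset} \alpha_\c = 1$ such that $\vv + \vv[a] = \sum_{\c \in \Cset} \alpha_\c \vv[\c]$. Rewriting $\vv[a] = \parenthesis{\sum_{\c \in \Cset} \alpha_\c}\vv[a]$ and subtracting from both sides, the right hand side regroups as $\sum_{\c \in \Cset} \alpha_\c \parenthesis{\vv[\c]-\vv[a]}$, exhibiting $\vv$ as a (no-longer-constrained) linear combination of the translated system. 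In particular the $\c = a$ term contributes zero and drops out automatically, so only the $|\Cset|-1$ nontrivial differences are needed. Hence $\braces{\vv[\c]-\vv[a] | \c \in \Cset}$ spans $\rr{\d}$.

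For part 2, let $T: \rr{\d} \to \rr{\d}$ be an invertible linear map. Given an arbitrary target $\vv \in \rr{\d}$, I would pull it back through $T^{-1}$, apply the affine generator property to $T^{-1}\vv$ to obtain coefficients $\alpha_\c$ with $\sum_{\c \in \Cset} \alpha_\c = 1$ satisfying $T^{-1}\vv = \sum_{\c \in \Cset} \alpha_\c \vv[\c]$, and then push forward by $T$ to conclude $\vv = \sum_{\c \in \Cset} \alpha_\c \parenthesis{T\vv[\c]}$. Since exactly the same scalars reappear on the right, their sum is still $1$, so $\braces{T\vv[\c] | \c \in \Cset}$ is again affinely generating.

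No real obstacle is anticipated: both parts collapse to a one-line manipulation once the correct auxiliary vector ($\vv + \vv[a]$ in part 1, $T^{-1}\vv$ in part 2) is substituted into the hypothesis. The only subtlety worth flagging is that the normalization $\sum_{\c \in \Cset} \alpha_\c = 1$ is precisely what lets the translation by $\vv[a]$ be absorbed in part 1, while in part 2 it is automatically preserved because the linear pushforward leaves the coefficients untouched.
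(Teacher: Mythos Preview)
Your proposal is correct. The paper states this lemma without proof, so there is no paper argument to compare against; your direct approach---feeding $\vv+\vv[a]$ into the affine hypothesis for part 1 and $T^{-1}\vv$ for part 2---is the natural one and works exactly as you describe. One small remark: later in the paper the lemma is invoked for a bijective \emph{affine} map $\vv[a]\mapsto \frac{\kappa}{\beta}\A\vv[a]+\constvec$, not merely a linear one; your part-2 argument extends immediately to that case (the translation $\constvec$ is absorbed using $\sum_\c\alpha_\c=1$ just as in part 1), but strictly speaking the lemma as stated covers only the linear case.
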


\subsection{Identifiability of \acrlong{pid}}\label{subsec:app_diet}

\begin{repassums}{assum:diet_dgp_informal}[DGP with vMF samples around cluster vectors]\label{assum:diet_dgp}
Assume the following \gls{dgp}:
\begin{enumerate}[leftmargin=*, label=(\roman*)]
    \item There exists a finite set of classes $\Cset$, represented by a set of unit-norm $\d$-dimensional cluster-vectors $\braces{\vv[\c] | \c \in \Cset} \subseteq \S{\d-1}$ such that they form an affine generator system of $\rr{\d}$.
    \item There is a finite set of instance labels $\Iset$ and a well-defined, surjective \emph{class function} $\classfunc:\Iset \rightarrow \Cset$ (every label belongs to exactly one class and every class is in use).
    \item A data sample \x\ belongs to class $\C = \classfunc(\I)$ and is labeled with a uniformly-chosen instance label, \ie, $\I \in Uni(\Iset)$.
    \item The latent $\z \in \S{\d-1}$ of our data sample with label $\I$ is drawn from a vMF distribution around the cluster vector $\vv_{\C}$, where $\C = \classfunc(\I)$:
    \begin{equation}
        \z \sim p(\z|\C) \propto e^{\kappa \langle \vv_{\C}, \z \rangle}. \label{eq:vmf}
    \end{equation}
    \item The data sample $\x$ is generated by passing the latent $\z$ through a continuous and injective generator function $\g:\! \S{d-1}\!\! \rightarrow \!\rr{\D}$, \ie, $\x = \g(\z)$.
\end{enumerate}
\end{repassums}

Assume that, using the DIET objective~\eqref{eq:DIET_loss_x}, we train a continuous encoder $\f: \rr{\D} \rightarrow \rr{d}$ on $\x$ and a linear classification head $\W$ on top of $\f$. The rows of $\W$ are $\braces[\big]{\w[i]^\top\,|\,i\in\Iset}$. In other  words, $\W$ computes similarities (scalar products) between its rows and the embeddings:
\begin{equation}
    \W : \f(\x) \mapsto \brackets[\big]{ \scprod{\w[i], \f(\x)\,} \,|\,_{i\in\Iset}}.
\end{equation}
In DIET, we optimize the following objective among all possible continuous encoders $\f$, linear classifiers $\W$, and $\beta > 0$:
\begin{equation}
    \loss(\f, \W, \beta) = \expectation{\parenthesis{\x, \I}} \brackets[\bigg]{-\ln \dfrac{e^{\beta\scprod{\w[\I], \f(\x)}}}{\sum_{j\in\Iset} e^{\beta\scprod{\w[j], \f(\x)}}}} \label{eq:DIET_loss_x}
\end{equation}

In the special case where the embeddings $\f(\x)$ are unnormalized, but the parameter vectors $\w$ are unit-normalized, the identifiability proof will solicit another, technical assumption:

\begin{assum}[Diverse data]\label{assum:diversity}
    The system $\{\vv[\c]|\c\in\Cset\}$ is said to be diverse enough, if the following $|\Cset|\times 2\d$ matrix has full column rank of $2d$:
    \begin{equation}
        \begin{pmatrix}
            \cdots\cdots\cdots & \cdots\cdots\cdots \\
            (\vv[\c]\odot\vv[\c])^\top & \vv[\c]^\top\\
            \cdots\cdots\cdots & \cdots\cdots\cdots          
        \end{pmatrix},
    \end{equation}
    where $[\myvec{x}\odot\myvec{y}]_i = x_iy_i$ is the elementwise- or Hadamard product. 
    
    As long as $|\Cset| \geq 2\d$, this property holds almost surely \wrt the Lebesgue-measure of $\S{\d-1}$ or any continuous probability distribution of $\vv[\c]\in\S{\d-1}$.
    \label{def:diverse}
\end{assum}

\begin{mdframed}[linecolor=black, linewidth=0.5pt, roundcorner=5pt]
\begin{reptheorem}{thm:ident_theo_main}[Identifiability of latents drawn from a vMF around cluster vectors]
    \label{thm:ident_theo}
    Let $(\f, \W, \beta)$ globally minimize the DIET objective~\eqref{eq:DIET_loss_x} under \Cref{assum:diet_dgp} and the following additional constraints:
    \begin{enumerate}[leftmargin=*,label=C\arabic*.,ref=C\arabic*,itemsep=5pt]
        \item \label{item:ident_theo_1} both the embeddings $\f(\x)$ and $\w$'s  are unit-normalized. Then:
            \begin{enumerate}
                \item $\h = \f \circ \g$ is orthogonal linear, \ie, the latents are identified up to an orthogonal linear transformation;
                \item $\w = \h(\vv_{\classfunc(i)})$ for any $i \in \Iset$, \ie,  $\w$'s identify the cluster-vectors $\vv_\c$ up to the same orthogonal linear transformation;
                \item $\beta = \kappa$, the temperature of the vMF distribution is also identified.
            \end{enumerate}
        \item \label{item:ident_theo_2} the embeddings $\f(\x)$ are unit-normalized, the $\w$'s are unnormalized. Then:
            \begin{enumerate}
                \item $\h = \f \circ \g$ is orthogonal linear;
                \item $\w = \frac{\kappa}{\beta}\h(\vv_{\classfunc(i)}) + \constvec$ for any $i \in \Iset$, where $\constvec$ is a constant vector independent of $i$.
            \end{enumerate}
        \item \label{item:ident_theo_4} the embeddings $\f(\x)$ are unnormalized, while the $\w$'s are unit-normalized. If the system $\{\vv[\c]|\c\}$ \textbf{is diverse enough in the sense of} \Cref{def:diverse}, then:
            \begin{enumerate}
                \item $\w = \mathcal{O}\vv[\classfunc(i)]$, for any $i \in \Iset$, where $\mathcal{O}$ is orthogonal linear;
                \item $\h = \f \circ \g = \frac{\kappa}{\beta}\mathcal{O}$ with the same orthogonal linear transformation, but scaled with $\frac{\kappa}{\beta}$.
            \end{enumerate}
        \item \label{item:ident_theo_3} neither the embeddings $\f(\x)$ nor the rows of $\W$ are unit-normalized. Then:
            \begin{enumerate}
                \item $\h = \f \circ \g$ is linear;
                \item $\w$ identifies $\vv[\classfunc(i)]$ up to an affine linear transformation.
            \end{enumerate}
    \end{enumerate}
    Furthermore, in all cases, the row vectors that belong to samples of the same class are equal, \ie, 
    for any $i,j \in \Iset$, $\classfunc(i) = \classfunc(j)$ implies $\w[i] = \w[j]$.

\end{reptheorem}
\end{mdframed}

\begin{remark}
    In cases~\Cref{item:ident_theo_2,item:ident_theo_3}, the cluster vectors are unnormalized and, therefore, can absorb the temperature parameter $\beta$. Thus $\beta$ can be set to $1$ without loss of generality. In case~\Cref{item:ident_theo_4}, it is $\f$ that can absorb $\beta$.
\end{remark}

\begin{proof}
    \textbf{Step 1: Deriving an equation characterizing the global optimizers of the objective.}

    \paragraph{Rewriting the objective in terms of latents:} we plug the expression $\x = \g(\z)$ into the optimization objective~\eqref{eq:DIET_loss_x} to express the dependence in terms of the latents $\z$:
    \begin{equation}
        \loss(\f, \W, \beta) = \expectation{\parenthesis{\z, \I}} \brackets[\bigg]{-\ln \dfrac{e^{\beta\scprod{\w[\I], \f\circ\g(\z)}}}{\sum_{j\in\Iset} e^{\beta\scprod{\w[j], \f\circ\g(\z)}}}} = \loss_\z (\f\circ\g, \W, \beta),
    \end{equation}
    where the optimization is still over $\f$ (and not $\h = \f\circ\g$).

    We note that the generator $\g$ is, by assumption, continuously invertible on the \emph{compact} set $\S{\d-1}$. Therefore, its image $\g(\S{\d-1})$ is compact, too, and its inverse $\ig$ is also continuous. By Tietze's extension theorem \citep{wiki_tietze}, $\ig$ can be continuously extended to a function $\igext:\rr{\D}\rightarrow\S{\d-1}$. Therefore, any continuous function $\h:\S{\d-1}\rightarrow\rr{\d}$ can take the role of $\f\circ\g$ by substituting $\f=\h\circ\igext$ continuous, since now $\f\circ\g = \h \circ (\igext \circ\g) = \h \circ id_{\S{d-1}} = \h$.

    Hence, minimizing $\loss_\z(\f \circ \g, \W, \beta)$ (and by extension $\loss(\f, \W, \beta)$) for continuous $\f$ equates to minimizing $\loss_\z(\h, \W, \beta)$ for continuous $\h$:
    \begin{equation}
        \loss_\z(\h, \W, \beta) = \expectation{\parenthesis{\z, \I}} \brackets[\bigg]{-\ln \dfrac{e^{\beta\scprod{\w[\I], \h(\z)}}}{\sum_{j\in\Iset} e^{\beta\scprod{\w[j], \h(\z)}}}}. \label{eq:DIET_loss_z}
    \end{equation}

    \paragraph{Expressing the condition for global optimality of the objective:} We rewrite the objective~\eqref{eq:DIET_loss_z} by 1) using the indicator variable $\delta_{\I = \i}$ of the event $\braces{\I = \i}$ and 2) applying the law of total expectation:
    \begin{align}
        \loss_{\z}(\h, \W, \beta) &= \expectation{\parenthesis{\z, \I}} \brackets[\bigg]{-\sum_{i\in\Iset} \delta_{\I = i} \ln \dfrac{e^{\beta\scprod{\w[i], \h(\z)}}}{\sum_{j\in\Iset} e^{\beta\scprod{\w[j], \h(\z)}}}}\\
        &= \expectation{\z} \brackets[\Bigg]{ \expectation{\I} \brackets[\bigg]{-\sum_{i\in\Iset} \delta_{\I = i} \ln \dfrac{e^{\beta\scprod{\w[i], \h(\z)}}}{\sum_{j\in\Iset} e^{\beta\scprod{\w[j], \h(\z)}}} \,\bigg|\, \z}  }.
    \end{align}
    Using the properties that $\expectation{}{\big[A \, f(B)\big|B\big]} = \expectation{}{\big[A\big|B\big]} f(B)$ and that $\expectation{}{[\delta_{\I=i}]} = \prob(\I=i)$, we conclude that:
    \begin{align}
        \loss_{\z}(\h, \W, \beta) &= \expectation{\z} \brackets[\Bigg]{ -\sum_{i\in\Iset} \expectation{\I} \brackets[\bigg]{ \delta_{\I = i} \ln \dfrac{e^{\beta\scprod{\w[i], \h(\z)}}}{\sum_{j\in\Iset} e^{\beta\scprod{\w[j], \h(\z)}}} \,\bigg|\, \z}  }\\
        &= \expectation{\z} \brackets[\bigg]{ -\sum_{i\in\Iset} \expectation{\I} \brackets[\Big]{ \delta_{\I = i}  \big| \z} \ln \dfrac{e^{\beta\scprod{\w[i], \h(\z)}}}{\sum_{j\in\Iset} e^{\beta\scprod{\w[j], \h(\z)}}} }\\
        &= \expectation{\z} \brackets[\bigg]{-\sum_{i\in\Iset} \prob(\I = i | \z) \ln \dfrac{e^{\beta\scprod{\w[i], \h(\z)}}}{\sum_{j\in\Iset} e^{\beta\scprod{\w[j], \h(\z)}}}}.
    \end{align}
    By Gibbs' inequality \citep{wiki_gibbs}, the cross-entropy inside the expectation is globally minimized if and only if
    \begin{equation}
        \dfrac{e^{\beta\scprod{\w[i], \h(\z)}}}{\sum_{j\in\Iset} e^{\beta\scprod{\w[j], \h(\z)}}} = \prob(\I = i | \z), \quad\text{for any $i \in \Iset$} .
        \label{eq:prob_equal_softmax}
    \end{equation}
    Moreover, the entire expectation is globally minimized if and only if the above equality \eqref{eq:prob_equal_softmax} holds almost everywhere for $\z \in \S{d-1}$.

    Using that instance label $\I$ is uniformly distributed, or $\prob(\I = j) = \prob(\I = i)$, the likelihood of the sample being in class $i$ can be expressed via Bayes' theorem as:
    \begin{equation}
        \prob(\I = i | \z) = \dfrac{\dist(\z | \I = i)\prob(\I = i) }{\sum_{j\in\Iset} \dist(\z | \I = j)\prob(\I = j) } = \dfrac{\dist(\z | \I = i)}{\sum_{j\in\Iset} \dist(\z | \I = j)}.\label{eq:sample_class_prob}
    \end{equation}
    Substituting \eqref{eq:sample_class_prob} into \eqref{eq:prob_equal_softmax} yields that for any $i\in \Iset$ and almost everywhere \wrt $\z \in \S{d-1}$:
    \begin{equation}
         \dfrac{e^{\beta\scprod{\w[i], \h(\z)}}}{\sum_{j\in\Iset} e^{\beta\scprod{\w[j], \h(\z)}}} = \dfrac{\dist(\z | \I = i)}{\sum_{j\in\Iset} \dist(\z | \I = j)}. \label{eq:sum_sum}
    \end{equation}

    We now divide the equation \eqref{eq:sum_sum} for the probability of a sample having label $i$ with that of having label $k$ and take the logarithm. This yields that $\loss_{\z}(\h, \W, \beta)$ is globally minimized if and only if
    \begin{equation}
         \beta \scprod{\w[i] - \w[k], \h(\z)} =
         \ln \dfrac{\dist(\z | \I = i)}{\dist(\z | \I = k)} \label{eq:sum_sum_log}
    \end{equation}
    holds for any $i, k\in \Iset$ and almost everywhere \wrt $\z \in \S{d-1}$.

    \paragraph{Plugging in the vMF distribution:} Plugging the assumed conditional distribution from \eqref{eq:vmf} into \eqref{eq:sum_sum_log} yields the equivalent expression:
    \begin{equation}
         \beta \scprod{\w[i] - \w[k], \h(\z)} = \kappa \scprod{\vv_{\classfunc(i)} - \vv_{\classfunc(k)}, \z},
    \end{equation}
    which holds for any $i, k\in \Iset$ and almost everywhere \wrt $\z \in \S{d-1}$. Since $\h$ is continuous, the equation holds almost everywhere \wrt $\z$ if and only if it holds for all $\z \in \S{d-1}$.

    Observe that if $\h=id|_{\S{d-1}}, \w[i] = \vv[\classfunc(i)]$ for any $i\in\Iset$, and $\beta=\kappa$, then the equation is satisfied. Thus, we can conclude that the global minimum of the cross-entropy loss is achieved.

    \textbf{Step 2: Solving the equation for $\h,\W$ and proving identifiability.}

    We now find all solutions to prove the identifiability of the latent variables and that of the cluster vectors.
    Denote $\tw[i] = \frac{\beta}{\kappa}\w[i]$ to simplify the above equation to:
    \begin{equation}
         \scprod{\tw[i] - \tw[k], \h(\z)} = \scprod{\vv_{\classfunc(i)} - \vv_{\classfunc(k)}, \z}. \label{eq:simplified}
    \end{equation}

    \paragraph{$\h$ is injective and has full-dimensional image:} We prove that $\h$ is injective. Assume that $\h(\z_1) = \h(\z_2)$ for some $\z_1, \z_2 \in \S{d-1}$. Plugging $\z_1$ and $\z_2$ into \eqref{eq:simplified} and subtracting the two equations yields:
    \begin{equation}
         0 = \scprod{\tw[i] - \tw[k], \h(\z_1) - \h(\z_2)} = \scprod{\vv_{\classfunc(i)} - \vv_{\classfunc(k)}, \z_1 - \z_2},
    \end{equation}
    for any $i,k$. However, as the cluster vectors $\braces{\vv_\c | \c}$ form an affine generator system, the vectors $\braces{\vv_{\classfunc(i)} - \vv_{\classfunc(k)} | i,k}$ form a generator system of $\rr{\d}$ (see \Cref{lem:affine_generator}). Therefore, $\scprod{\myvec{y}, \z_1-\z_2} = 0$, for any $\myvec{y} \in \rr{d}$, which holds if and only if $\z_1 = \z_2$. Hence, $\h$ is injective.

    By the Borsuk-Ulam theorem, for any continuous map from $\S{\d-1}$ to a space of dimensionality at most $\d-1$ there exists some pair of antipodal points that are mapped to the same point. Consequently, no such function can be injective at the same time. Since $h:\S{\d-1} \rightarrow \rr{\d}$ is injective, the linear span of its image must be $\rr{\d}$.

    \paragraph{Collapse of $\w$'s:} We prove that $\tw[i] = \tw[k]$ if $\classfunc(i) = \classfunc(k)$, \ie, samples from the same cluster will have equal rows of $\W$ associated with them.

    Assume that $\classfunc(i) = \classfunc(k)$ and substitute them into \eqref{eq:simplified}:
    \begin{equation}
         \scprod{\tw[i] - \tw[k], \h(\z)} = 0 \quad \text{for any $\z \in \S{d-1}$}.
    \end{equation}
    However, we have just seen that the linear span of the image of $\h$ is $\rr{d}$, which implies that $\tw[i] = \tw[k]$.
    We may abuse our notation by setting $\tw[\c] = \tw[i]$ if $\classfunc(i) = \c$, which yields a new form for \eqref{eq:simplified}:
    \begin{equation}
         \scprod{\tw[a] - \tw[b], \h(\z)} = \scprod{\vv_{a} - \vv_{b}, \z}, \label{eq:simplified_class}
    \end{equation}
    for any $a,b\in \Cset$ and any $\z \in \S{d-1}$.

    \paragraph{Linear transformation from $\vv_{a} - \vv_{b}$ to $\tw[a] - \tw[b]$:}
    We now prove the existence of a linear map $\A$ on $\rr{d}$ such that $\A(\vv_{a} - \vv_{b}) = \tw[a] - \tw[b]$ for any $a,b\in \Cset$. For this, we prove that the following mapping is well-defined:
    \begin{equation}
        \A: \sum_{a,b \in \Cset} \lambda_{ab}(\vv_{a} - \vv_{b}) \mapsto \sum_{a,b \in \Cset} \lambda_{ab}(\tw[a] - \tw[b]).
    \end{equation}

    Since the system $\{\vv_a - \vv_b | a,b\}$ is not necessarily linearly independent, we have to prove that the mapping is independent of the choice of the linear combination. More precisely if for some coefficients $\lambda_{ab}, \lambda'_{ab}$
    \begin{equation}
        \sum_{a,b \in \Cset} \lambda_{ab}(\vv_{a} - \vv_{b}) = \sum_{a,b \in \Cset} \lambda'_{ab}(\vv_{a} - \vv_{b})
        \label{eq:linear_comb_equal_v}
    \end{equation}
    holds, then it should be implied that 
    \begin{equation}
        \sum_{a,b \in \Cset} \lambda_{ab}(\tw[a] - \tw[b]) = \sum_{a,b \in \Cset} \lambda'_{ab}(\tw[a] - \tw[b]).
        \label{eq:linear_comb_equal_w}
    \end{equation}
    Assume that \eqref{eq:linear_comb_equal_v} holds. Then, the difference of the two sides is:
    \begin{equation}
        0 = \sum_{a,b \in \Cset} (\lambda_{ab}-\lambda'_{ab})(\vv_{a} - \vv_{b}).
    \end{equation}

    Taking the scalar product with an arbitrary $\z \in \S{d-1}$ and using the linearity of the scalar product gives us:
    \begin{equation}
        0 = \scprod{\sum_{a,b \in \Cset} (\lambda_{ab}-\lambda'_{ab})(\vv_{a} - \vv_{b}), \z} = \sum_{a,b \in \Cset} (\lambda_{ab}-\lambda'_{ab})\scprod{\vv_{a} - \vv_{b}, \z}.
    \end{equation}

    Now using \eqref{eq:simplified_class} yields:
    \begin{equation}
        0 = \sum_{a,b \in \Cset} (\lambda_{ab}-\lambda'_{ab})\scprod{\tw[a] - \tw[b], \h(\z)} = \scprod{\sum_{a,b \in \Cset} (\lambda_{ab}-\lambda'_{ab})(\tw[a] - \tw[b]), \h(\z)}.
    \end{equation}

    However, the linear span of the image of $\h$ is $\rr{d}$, which implies that
    \begin{equation}
        \sum_{a,b \in \Cset} (\lambda_{ab}-\lambda'_{ab})(\tw[a] - \tw[b]) = 0,
    \end{equation}
    equivalent to \eqref{eq:linear_comb_equal_w}. Therefore, the mapping is well-defined and the linearity of $\A$ follows.

    \paragraph{$\h$ is linear:}
    Equation \eqref{eq:simplified_class} becomes:
    \begin{equation}
         \scprod{\A(\vv_{a} - \vv_{b}), \h(\z)} = \scprod{\vv_{a} - \vv_{b}, \z}, \label{eq:simplified_class_A}
    \end{equation}
    for any $a,b\in \Cset$ and any $\z \in \S{d-1}$. Nevertheless, $\{\vv_a - \vv_b | a,b\in\Cset \}$ is a generator system of $\rr{d}$, and, hence, \eqref{eq:simplified_class_A} is equivalent to
    \begin{equation}
         \scprod{\A\myvec{y}, \h(\z)} = \scprod{\myvec{y}, \z}, \quad\text{for any $\myvec{y} \in \rr{d}$ and any $\z \in \S{d-1}$}.
         \label{eq:simplified_A}
    \end{equation}
    This is further equivalent to
    \begin{equation}
        \scprod{\myvec{y}, \A^\top\h(\z)} = \scprod{\myvec{y}, \z}.
    \end{equation}
    Since $\myvec{y}$ is arbitrary, we conclude that $\A^\top\h(\z) = \z$ for any $\z \in \S{d-1}$. Therefore $\A$ is an invertible transformation and $\h = (\A^\top)^{-1}$ is linear.

    \paragraph{Proving \Cref{thm:ident_theo} case \Cref{item:ident_theo_3}:}
    We have shown that $\h$ is linear. Furthermore, from \eqref{eq:simplified_class_A} it follows, by fixing $b$ and defining $\constvec =  \A\vv[b] - \w[b]$, that
    \begin{equation}
        \tw[a] = \A \vv[a] + \constvec, \quad\text{for any $a\in\Cset$},
        \label{eq:tilde_w_A}
    \end{equation}
    which proves  case \Cref{item:ident_theo_3} of \Cref{thm:ident_theo}.

    \paragraph{Proving \Cref{thm:ident_theo} case \Cref{item:ident_theo_2}:}
    As a special case of the previous one, now we assume that $\h(\z)$ is unit-normalized and maps $\S{d-1}$ to $\S{d-1}$. That amounts to $\h = (\A^\top)^{-1}$ being linear, norm-preserving, and therefore orthogonal. Consequently $\A$ is also orthogonal, $\h=\A$ and \eqref{eq:tilde_w_A} simplifies to $\frac{\beta}{\kappa}\w[a] = \tw[a] = \A\vv[a] + \constvec = \h(\vv[a]) +  \constvec$, which proves \Cref{item:ident_theo_2} of \Cref{thm:ident_theo}.

    \paragraph{Proving \Cref{thm:ident_theo} case \Cref{item:ident_theo_1}:}
    We now assume that both $\h$ and $\w$'s are unit-normalized. Consequently, $\h=\A$ is orthogonal linear and $\w[a] = \frac{\kappa}{\beta}\A\vv[a] + \constvec$. 
    
    Therefore, on one hand, the $\w[a]$'s lie on a $\d$-dimensional hypersphere of radius $\frac{\kappa}{\beta}$ and center $\constvec$. On the other hand, by definition, $\w[a]$'s also lie on the unit hypersphere $\S{\d-1}$. 
    
    Since the system $\braces{\w[a]|a\in\Cset}$ is the bijective affine linear image of the affine generator system $\braces{\vv[a]|a\in\Cset}$, $\braces{\w[a]|a\in\Cset}$ is also an affine generator system (\Cref{lem:affine_generator}). Consequently, there could be at most one hypersphere in $\rr{d}$ which contains all the $\w[a]$'s. Hence $\frac{\kappa}{\beta} = 1$, $\constvec = \myvec{0}$, and $\w[a] = \h(\vv[a])$, which proves \Cref{item:ident_theo_1} of \Cref{thm:ident_theo}.

    \paragraph{Proving \Cref{thm:ident_theo} case \Cref{item:ident_theo_4}:}
     Finally, we assume that $\w[i]$'s are unit-normalized. As this is a special case of \Cref{thm:ident_theo} \Cref{item:ident_theo_3}, we know that there exists a constant vector $\constvec$ such that:
     \begin{equation}
         \w[a] = \frac{\kappa}{\beta}\A \vv[a] + \constvec,
     \end{equation}
     for any $a \in \Cset$. We are going to prove that $\mathcal{O} = \frac{\kappa}{\beta}\A$ is orthogonal and $\constvec = \myvec{0}$.
     
     Let $\mathcal{O}= \mathcal{U}^\top\Sigma\mathcal{V}$ be the singular value decomposition (SVD) of $\mathcal{O}$. Premultiplying with $\mathcal{U}$ yields:
     \begin{equation}
         \mathcal{U}\w[a] = \Sigma\mathcal{V}\vv[a] + \mathcal{U}\constvec.
     \end{equation}
     As orthogonal transformations $\mathcal{U}$ and $\mathcal{V}$ keep their arguments unit-normalized and $\{\mathcal{V}\vv[a]-\mathcal{V}\vv[b]\}$ is still an affine generator system (\Cref{lem:affine_generator}), we may assume without the loss of generality that
     \begin{equation}
         \w[a] = \Sigma\vv[a] + \constvec,
         \label{eq:sigma_v}
     \end{equation}
     for any $a \in \Cset$, where all $\vv[a]$'s and $\w[a]$'s are unit-normalized.
    
    Let us assume that $\constvec \neq \myvec{0}$. In that case both sides of \eqref{eq:sigma_v} can be scaled such that the offset $\constvec$ has unit norm. In this case $\w[a]$'s are no longer on the unit hypersphere, but they instead have a mutual norm $r$. Assuming that the diagonal elements of $\Sigma$ are $\myvec{\sigma} = (\sigma_1,\ldots,\sigma_d)$, this is equivalent to:
    \begin{align}
        r^2 &= \normsquared{\Sigma\vv_a + \constvec} = \normsquared{\Sigma\vv[a]} + 2\scprod{\Sigma\vv[a],\constvec} + \normsquared{\constvec} \\
        &= \scprod{\vv[a]\odot\vv[a], \myvec{\sigma}\odot\myvec{\sigma}} + \scprod{\vv[a], 2\myvec{\sigma}\odot\constvec } +1, \label{eq:hadamard2}
    \end{align}
    where $[\myvec{x}\odot\myvec{y}]_i = x_iy_i$ is the elementwise product. \Cref{eq:hadamard2} is equivalent to the following:
    \begin{equation}
        (\vv[a]\odot\vv[a])^\top(\myvec{\sigma}\odot\myvec{\sigma}) + \vv[a]^\top (2\myvec{\sigma}\odot\constvec ) -r^2= - 1.
    \end{equation}
    Collecting the equations for all $a\in\Cset$ yields:
    \begin{equation}
        \mathcal{D} \begin{pmatrix}
            \myvec{\sigma}\odot\myvec{\sigma} \\
            2\myvec{\sigma}\odot\constvec \\
            r^2
        \end{pmatrix} = -\myvec{1}_{|\Cset|},
        \label{eq:D_linear_eqs}
    \end{equation}
    where $\mathcal{D}$ is the following $|\Cset|\times (2\d+1)$ matrix:
    \begin{equation}
        \mathcal{D} = \begin{pmatrix}
            \cdots\cdots\cdots & \cdots\cdots\cdots & \cdots \\
            (\vv[a]\odot\vv[a])^\top & \vv[a]^\top & -1\\
            \cdots\cdots\cdots & \cdots\cdots\cdots & \cdots            
        \end{pmatrix}.
    \end{equation}

    By \Cref{assum:diversity}, the left $|\Cset|\times 2\d$ submatrix of $\mathcal{D}$ has full rank of $2\d$. Consequently, the solution space to the more general, linear equation $\mathcal{D}\myvec{t}=-\myvec{1}_{|\Cset|}$, $\myvec{t} \in \rr{\d}$, has a dimensionality of at \makebox{most $1$}.
    By the unit-normality of $\vv[a]$, we have $(\vv[a]\odot\vv[a])^\top\myvec{1}_d=1$. From this, the solutions are exactly the following: 
    \begin{equation}
        \myvec{t} = \begin{pmatrix}
            \gamma \cdot\myvec{1}_\d \\
            \myvec{0}_\d \\
            \gamma + 1
        \end{pmatrix}, \quad\text{where $\gamma \in \rr{}$}.\label{eq:t}
    \end{equation}
    Therefore, for any solution of \eqref{eq:D_linear_eqs} there exists $\gamma$ such that:
    \begin{align}
        \myvec{\sigma}\odot\myvec{\sigma} &= \gamma \cdot\myvec{1}_\d\\
        \myvec{\sigma}\odot\constvec &= \myvec{0}_\d.
    \end{align}
    However, as the original transformation $\A$ was invertible, all singular values $\sigma_i$ are strictly positive and, thus, it follows that $\constvec = \myvec{0}$. This is a technical contradiction to our initial assumption that $\constvec \neq \myvec{0}$. Thus, it follows that $\constvec=\myvec{0}$.

    Therefore, \eqref{eq:sigma_v} becomes:
         \begin{equation}
         \w[a] = \Sigma\vv[a],
         \label{eq:sigma_v_zeroconst}
     \end{equation}
     where all $\vv[a]$'s and $\w[a]$'s are unit-normalized. Following the same derivation yields:
     \begin{equation}
         1 = \normsquared{\Sigma\vv[a]} =  (\vv[a]\odot\vv[a])^\top(\myvec{\sigma}\odot\myvec{\sigma}),
     \end{equation}
     or, after collecting the equations for all $a \in \Cset$:
     \begin{equation}
         \mathcal{B}(\myvec{\sigma}\odot\myvec{\sigma}) = \myvec{1}_{|\Cset|},
         \label{eq:B_linear_eqs}
     \end{equation}
     where $\mathcal{B}$ is the $|\Cset|\times\d$ matrix
     \begin{equation}
        \mathcal{B} = \begin{pmatrix}
            \cdots\cdots\cdots\\
            (\vv[a]\odot\vv[a])^\top \\
            \cdots\cdots\cdots           
        \end{pmatrix}.
    \end{equation}
    By \Cref{assum:diversity}, $\mathcal{B}$ has full rank, thus, there is at most one solution to the equation $\mathcal{B}\myvec{t}=\myvec{1}_{|\Cset|}$. Due to the unit-normality of $\vv[a]$'s, this solution is exactly $\myvec{t} = \myvec{1}_\d$. However, as the singular values $\sigma_i$ are all positive, the only solution to $\myvec{\sigma}\odot\myvec{\sigma} = \myvec{1}_\d$ is $\myvec{\sigma} = \myvec{1}_\d$. Equivalently, $\mathcal{O}=\frac{\kappa}{\beta}\A$ is orthogonal.

    Furthermore, $\h = (\A^\top)^{-1} = (\frac{\beta}{\kappa}\mathcal{O}^\top)^{-1} = \frac{\kappa}{\beta}\mathcal{O}$. \qedhere

\end{proof}

\subsection{Identifiability of supervised classification}\label{subsec:app_supervised}

\begin{assum}[DGP with vMF samples around cluster vectors]\label{assum:sup_dgp}
Assume the following \gls{dgp}:
\begin{enumerate}[leftmargin=*, label=(\roman*)]
    \item There exists a finite set of classes $\Cset$, represented by a set of unit-norm $\d$-dimensional cluster-vectors $\braces{\vv[\c] | \c \in \Cset} \subseteq \S{\d-1}$ such that they form an affine generator system of $\rr{\d}$.
    \item A data sample \x\ belongs to a \emph{uniformly} chosen class $\C \in Uni(\Cset)$.
    \item The latent $\z \in \S{\d-1}$ of our data sample $\x$ with label $\C$ is drawn from a vMF distribution around the cluster vector $\vv_{\C}$:
    \begin{equation}
        \z \sim p(\z|\C) \propto e^{\kappa \langle \vv_{\C}, \z \rangle}. \label{eq:vmf_sup}
    \end{equation}
    \item The data sample $\x$ is generated by passing the latent $\z$ through a continuous and injective generator function $\g:\! \S{d-1}\!\! \rightarrow \!\rr{\D}$, \ie, $\x = \g(\z)$.
\end{enumerate}
\end{assum}

We would like to point out that the assumption of the class label $\C$ being uniform restricts the scope the following theorem as it cannot account for imbalanced class labels. This shortcoming did not affect \Cref{thm:ident_theo_main}, as the uniform distribution over instance labels is a natural choice in practical scenarios with finite datasets.

\begin{mdframed}[linecolor=black, linewidth=0.5pt, roundcorner=5pt]
\begin{reptheorem}{thm:supervised}[Identifiability of latent variables drawn from a vMF around class vectors]
    Let \cref{assum:diet_dgp} hold
    and suppose that a continuous encoder \(\f: \mathbb{R}^\D \rightarrow \mathbb{R}^d\), a linear classifier \(\W\) with rows \(\{\w[c]^\top \mid c \in \Cset\}\), and \(\beta > 0\) globally minimize the cross-entropy objective:
    \[
    \loss(\f, \W, \beta) = \mathbb{E}_{(\x, \C)} \left[ -\ln \frac{e^{\beta \langle \w[C], \f(\x) \rangle}}{\sum_{c' \in \Cset} e^{\beta \langle \w[c'], \f(\x) \rangle}} \right].
    \]
    Then, the composition \(\h = \f \circ \g\) is a linear map from \(\mathbb{S}^{d-1}\) to \(\mathbb{R}^d\).
\end{reptheorem}
\end{mdframed}

\begin{proof}
    \noindent
    \paragraph{Step 1: Rewriting the Objective in Terms of \(\h\).} 
    We begin by expressing the loss function in terms of the latent variable \(\z\). Recall that \(\x = \g(\z)\) and \(\h = \f \circ \g\). Substituting into the loss function:
    
    \begin{equation}
        \loss(\f, \W, \beta) = \mathbb{E}_{(\z, \C)} \left[ -\ln \frac{e^{\beta \langle \w[C], \h(\z) \rangle}}{\sum_{c' \in \Cset} e^{\beta \langle \w[c'], \h(\z) \rangle}} \right].
    \end{equation}
    
    Since \(\g\) is continuous and injective on the compact set \(\mathbb{S}^{d-1}\), its inverse \(\g^{-1}\) exists and is continuous on \(\g(\mathbb{S}^{d-1})\). By Tietze's extension theorem, we can extend \(\g^{-1}\) to a continuous function $\g^{-1}_{\text{ext}}: \mathbb{R}^\D \rightarrow \mathbb{S}^{d-1}$. Therefore, any continuous function \(\h: \mathbb{S}^{d-1} \rightarrow \mathbb{R}^d\) corresponds to a continuous encoder \(\f = \h \circ \g^{-1}_{\text{ext}}\), satisfying \(\f(\x) = \h(\z)\).
    
    \paragraph{Step 2: Optimality Condition of the Cross-Entropy Loss.}
    At the global minimum of the cross-entropy loss, the predicted class probabilities match the true conditional probabilities almost everywhere. That is, for all \(\z \in \mathbb{S}^{d-1}\) and all \(c \in \Cset\):
    
    \begin{equation}
        \frac{e^{\beta \langle \w[c], \h(\z) \rangle}}{\sum_{c' \in \Cset} e^{\beta \langle \w[c'], \h(\z) \rangle}} = \mathbb{P}(\C = c \mid \z).
    \end{equation}
    
    \paragraph{Step 3: Expressing the True Conditional Probabilities.}
    Using Bayes' theorem and the fact that classes are uniformly distributed (\(\mathbb{P}(\C = c)\) is constant\footnote{We acknowledge that this assumption does not hold in many realistic scenarios, where the data distribution is unbalanced between the classes}), we have:
    
    \begin{equation}
    \mathbb{P}(\C = c \mid \z) = \frac{p(\z \mid \C = c)}{\sum_{c' \in \Cset} p(\z \mid \C = c')}.
    \end{equation}
    
    Given that, by assumption, the latent \(\z\) follows a von Mises-Fisher (vMF) distribution centered at \(\vv_c\):
    
    \begin{equation}
    p(\z \mid \C = c) \propto e^{\kappa \langle \vv_c, \z \rangle}.
    \end{equation}
    
    Substituting into the conditional probability:
    
    \begin{equation}
    \mathbb{P}(\C = c \mid \z) = \frac{e^{\kappa \langle \vv_c, \z \rangle}}{\sum_{c' \in \Cset} e^{\kappa \langle \vv_{c'}, \z \rangle}}.
    \end{equation}
    
    \paragraph{Step 4: Equating Predicted and True Probabilities.}
    Setting the predicted probabilities equal to the true probabilities, we obtain:
    
    \begin{equation}
    \frac{e^{\beta \langle \w[c], \h(\z) \rangle}}{\sum_{c' \in \Cset} e^{\beta \langle \w[c'], \h(\z) \rangle}} = \frac{e^{\kappa \langle \vv_c, \z \rangle}}{\sum_{c' \in \Cset} e^{\kappa \langle \vv_{c'}, \z \rangle}}.
    \end{equation}
    
    Dividing the expressions for classes \(c\) and \(c'\), we eliminate the denominators:
    
    \begin{equation}
    \frac{e^{\beta \langle \w[c], \h(\z) \rangle}}{e^{\beta \langle \w[c'], \h(\z) \rangle}} = \frac{e^{\kappa \langle \vv_c, \z \rangle}}{e^{\kappa \langle \vv_{c'}, \z \rangle}}.
    \end{equation}
    
    Taking the logarithm of both sides:
    
    \begin{equation}
    \beta \left( \langle \w[c], \h(\z) \rangle - \langle \w[c'], \h(\z) \rangle \right) = \kappa \left( \langle \vv_c, \z \rangle - \langle \vv_{c'}, \z \rangle \right).
    \end{equation}
    
    Simplifying:
    
    \begin{equation}
    \beta \langle \w[c] - \w[c'], \h(\z) \rangle = \kappa \langle \vv_c - \vv_{c'}, \z \rangle.
    \end{equation}
    
    \paragraph{Step 5: Defining Scaled Parameters.}
    Let us define:
    
    \begin{equation}
    \tilde{w}_c = \frac{\beta}{\kappa} \w[c].
    \end{equation}
    
    Then the key equation becomes:
    
    \begin{equation}
    \langle \tilde{w}_c - \tilde{w}_{c'}, \h(\z) \rangle = \langle \vv_c - \vv_{c'}, \z \rangle, \quad \forall c, c' \in \Cset.
    \end{equation}
    
    \paragraph{Step 6: Establishing a Linear Relationship.}
    Define the difference vectors:
    
    \begin{equation}
    \delta_{\tilde{w}_{cc'}} = \tilde{w}_c - \tilde{w}_{c'}, \quad \delta_{\vv_{cc'}} = \vv_c - \vv_{c'}.
    \end{equation}
    
    Our key equation is now:
    
    \begin{equation}
    \langle \delta_{\tilde{w}_{cc'}}, \h(\z) \rangle = \langle \delta_{\vv_{cc'}}, \z \rangle, \quad \forall c, c' \in \Cset.\label{eq:key_eq_sup_step_6}
    \end{equation}
    
    Since the set \(\{\delta_{\vv_{cc'}} \mid c, c' \in \Cset\}\) spans \(\mathbb{R}^d\) (due to the affine generator system property), we can interpret this equation as stating that the inner products between \(\h(\z)\) and \(\delta_{\tilde{w}_{cc'}}\) correspond to the inner products between \(\z\) and \(\delta_{\vv_{cc'}}\).
    
    \paragraph{Step 7: Proving Injectivity and Full Rank of \(\h\).}
    Suppose there exist \(\z_1, \z_2 \in \mathbb{S}^{d-1}\) such that \(\h(\z_1) = \h(\z_2)\). Then, for all \(c, c' \in \Cset\):
    
    \begin{equation}
    \langle \delta_{\vv_{cc'}}, \z_1 - \z_2 \rangle = \langle \delta_{\tilde{w}_{cc'}}, \h(\z_1) - \h(\z_2) \rangle = 0.
    \end{equation}
    
    Since \(\{\delta_{\vv_{cc'}}\}\) spans \(\mathbb{R}^d\), it follows that \(\z_1 - \z_2 = \mathbf{0}\), i.e., \(\z_1 = \z_2\). Therefore, \(\h\) is injective.
    
    By the Borsuk-Ulam theorem, an injective continuous map from \(\mathbb{S}^{d-1}\) to \(\mathbb{R}^{d'}\) with $d'<d$ cannot exist. Thus, the image of \(\h\) must be full-dimensional in \(\mathbb{R}^d\).
    
    \paragraph{Step 8: Defining a Linear Map \(\A\).}
    We aim to find a linear map \(\A: \mathbb{R}^d \rightarrow \mathbb{R}^d\) such that:
    
    \begin{equation}
    \delta_{\tilde{w}_{cc'}} = \A^\top \delta_{\vv_{cc'}}, \quad \forall c, c' \in \Cset.
    \end{equation}
    
    This is well-defined because any linear dependency among the \(\delta_{\vv_{cc'}}\) translates to the same linear dependency among the \(\delta_{\tilde{w}_{cc'}}\), as shown below.
    
    Suppose there are scalars \(\{\lambda_{cc'}\}\) such that:
    
    \begin{equation}
    \sum_{c, c'} \lambda_{cc'} \delta_{\vv_{cc'}} = \mathbf{0}.
    \end{equation}
    
    Then, using the key equation~\eqref{eq:key_eq_sup_step_6}:
    
    \begin{equation}
    \sum_{c, c'} \lambda_{cc'} \langle \delta_{\tilde{w}_{cc'}}, \h(\z) \rangle = \sum_{c, c'} \lambda_{cc'} \langle \delta_{\vv_{cc'}}, \z \rangle = \left\langle \sum_{c, c'} \lambda_{cc'} \delta_{\vv_{cc'}}, \z \right\rangle = 0.
    \end{equation}
    
    Since \(\h\) is injective and its image spans \(\mathbb{R}^d\), the only way for this to hold for all \(\h(\z)\) is if:
    
    \begin{equation}
    \sum_{c, c'} \lambda_{cc'} \delta_{\tilde{w}_{cc'}} = \mathbf{0}.
    \end{equation}
    
    Therefore, \(\A^\top\) is a well-defined linear map.
    
    \paragraph{Step 9: Concluding that \(\h\) is Linear.}
    Using the linear map \(\A^\top\), the key equation becomes:
    
    \begin{equation}
    \langle \A^\top \delta_{\vv_{cc'}}, \h(\z) \rangle = \langle \delta_{\vv_{cc'}}, \z \rangle.
    \end{equation}
    
    This implies:
    
    \begin{equation}
    \langle \delta_{\vv_{cc'}}, \A \h(\z) - \z \rangle = 0, \quad \forall c, c' \in \Cset.
    \end{equation}
    
    Since \(\{\delta_{\vv_{cc'}}\}\) spans \(\mathbb{R}^d\), it follows that:
    
    \begin{equation}
    \A \h(\z) = \z, \quad \forall \z \in \mathbb{S}^{d-1}.
    \end{equation}
    
    Therefore, \(\h\) is the inverse of \(\A\) restricted to \(\mathbb{S}^{d-1}\), and since \(\A\) is linear and invertible (due to the injectivity of \(\h\)), it follows that \(\h\) is linear:
    
    \begin{equation}
    \h(\z) = \A^{-1} \z.
    \end{equation}
    
    This completes the proof that \(\h\) is linear.
    
    \paragraph{Step 10: Conclusion.}
    Under the given assumptions, we have shown that \(\h = \f \circ \g\) must be a linear function. This means that the latent variables \(\z\) are identifiable up to a linear transformation determined by \(\A^{-1}\).

\end{proof}

\section{The genealogy of cross-entropy--based classification methods}\label{sec:app_genealogy}

        This section provides the necessary background on auxiliary-variable \gls{ica} and discusses the connection between ICA and DIET, and InfoNCE and DIET.

        \subsection{Auxiliary-variabe nonlinear ICA: Generalized Contrastive Learning (GCL)}\label{sec:app_gcl}

    In this section, we discuss the most general auxiliary-variable nonlinear ICA, termed Generalized Contrastive Learning (GCL)~\citep{hyvarinen_nonlinear_2019}.
    GCL uses a conditionally factorizing source distribution (given auxiliary variable $u$):
    $\log p(\mathbf{s}|u)$ is a sum of components $q_i(s_i, u)$:
    \begin{align}
        \log p(\mathbf{s}|u) = \sum_i q_i(s_i, u)\label{eq:cond_sources_pdf}
    \end{align}

    For this generalized model, \citet{hyvarinen_nonlinear_2019} define the following variability condition:
    \begin{assum}[Assumption of Variability]\label{assum:var_gcl}
        For any $\mathbf{y} \in \mathbb{R}^n$ (used as a drop-in replacement for the sources $\mathbf{s}$), there exist $2 n+1$ values for the auxiliary variable $\mathbf{u}$, denoted by $\mathbf{u}_j, j=$ $0 \ldots 2 n$ such that the $2 n$ vectors in $\mathbb{R}^{2 n}$ given by
        \[
        \begin{array}{r}
        \left(\mathbf{w}\left(\mathbf{y}, \mathbf{u}_1\right)-\mathbf{w}\left(\mathbf{y}, \mathbf{u}_0\right)\right),\left(\mathbf{w}\left(\mathbf{y}, \mathbf{u}_2\right)-\mathbf{w}\left(\mathbf{y}, \mathbf{u}_0\right)\right) 
        \ldots,\left(\mathbf{w}\left(\mathbf{y}, \mathbf{u}_{2 n}\right)-\mathbf{w}\left(\mathbf{y}, \mathbf{u}_0\right)\right)
        \end{array}
        \]
        with
        $$
        \begin{aligned}
        \mathbf{w}(\mathbf{y}, \mathbf{u})= & \left(\frac{\partial q_1\left(y_1, \mathbf{u}\right)}{\partial y_1}, \ldots, \frac{\partial q_n\left(y_n, \mathbf{u}\right)}{\partial y_n},\right. 
        & \left.\frac{\partial^2 q_1\left(y_1, \mathbf{u}\right)}{\partial y_1^2}, \ldots, \frac{\partial^2 q_n\left(y_n, \mathbf{u}\right)}{\partial y_n^2}\right)
        \end{aligned}
        $$      
        are linearly independent.
    \end{assum}

    \Cref{assum:var_gcl} constrains the components of the first- and second derivatives of the functions constituting the sources' conditional log-density, given the auxiliary variable $\mathbf{u}$. As the authors write: \textit{``[\Cref{assum:var_gcl}] is basically saying that the auxiliary variable must have a sufficiently strong and diverse effect on the distributions of the independent components."
            }

       We state the required assumptions for the identifiability of \gls{gencl}, adapted from \citep[Thm.~1]{hyvarinen_nonlinear_2019}:
    \begin{assum}[Auxiliary-variable ICA with conditionally independent sources (GCL)]\label{assum:gcl}
        We assume the following for latent factors \z, observations \x, generative model \g, encoder \f (parametrized by a neural network), linear map \W with $(\f,\W)$ solving a multinomial regression problem:
        \begin{enumerate}[leftmargin=*]
            \item The observations are generated with a diffeomorphism $\g: \x = \g(\z),$ where $\dim \x = \dim \z = d$
            \item The source components $z_i$ are conditionally independent, given a fully observed, $m-$dimensional \gls{rv} $\mathbf{u},$ \ie, 
            \begin{align}
                \log p(\z|\mathbf{u}) = \sum_i q_i(z_i, \mathbf{u}),\label{eq:cond_sources_pdf_assum}
            \end{align}
            \item The conditional log-pdf $q_i$ is sufficiently smooth as a function of $z_i$ for any fixed $\mathbf{u}$
            \item \Cref{assum:suff_var} holds
            \item the multinomial regression function 
            \begin{align}
                r(\x,\mathbf{u}) &= \sum_i^n \psi_i(f_i(\x), \mathbf{u}), \label{eq:regr_gcl}
            \end{align}
            discriminating $(\x,\mathbf{u})$ vs $(\x,\mathbf{u}^*)$ has universal approximation capability, both for $\psi_i$ and a diffeomorphic $\f = (f_1, \dots, f_n)$ (parametrized by a neural network)
        \end{enumerate}
    \end{assum}

    When \Cref{assum:gcl} holds, \citet{hyvarinen_nonlinear_2019} showed identifiability up to component-wise invertible transformations. 

    For the special case when the conditional distribution comes from the exponential family (in the case of our chosen \gls{vmf} conditional, the distribution has order one), \Cref{assum:gcl} turns into a simpler form (\Cref{assum:suff_var}).

        \subsection{Parametric Instance Discrimination (DIET) and \acrfull{tcl}}
        
        \paragraph{Arbitrary labels: time and sample index}

            As \citet{hyvarinen_nonlinear_2019} note in \citep[5.4]{hyvarinen_nonlinear_2019}, $\mathbf{u}$ can stand for many types of additional information. \gls{tcl} uses the time index, which is assumed to be a \gls{rv}. Importantly, an arbitrarily defined class label, such as in DIET, can serve the same purpose. In this case, we denote the auxiliary variable $u = \c$

        \paragraph{Adapting the assumptions between \gls{tcl} and DIET.}
           The only reason we cannot apply~\citep[Thm.~1]{hyvarinen_nonlinear_2019} is that our exponential family has order one, violating ~\Cref{assum:var_gcl}. This fact, however, shows our theory's consistency as we cannot go beyond identifiability up to linear (orthogonal or affine) transformation.

            To fit our theory into the \gls{ica} family of methods, we note that modeling the \gls{dgp} in DIET with a cluster-centric approach, we naturally fit most of the ICA assumptions. To compare our \Cref{assum:diet_dgp} to all the assumptions used for \citep[Thm.~3]{hyvarinen_nonlinear_2019} (\cf \Cref{assum:gcl}), we note that the \gls{vmf} distribution belongs to the exponential family, and that requiring that the cluster vectors form an affine generator system (\cf \Cref{subsec:app_assum} for a definition and properties) satisfies the special case of the general sufficient variability~\Cref{assum:var_gcl} condition:  
            \begin{assum}[Sufficient variability]\label{assum:suff_var}
                Define the modulation parameter matrix $\mat{L} \in \rr{(E-1)\times dk}$ for $d-$dimensional exponential family distributions of order $k$ with rows as:
                \begin{align}
                    \rows{\mat{L}}{j} &= (\boldsymbol{\theta}^j - \boldsymbol{\theta}^0)^\top\\
                    \boldsymbol{\theta}^j &= \brackets{\theta^j_{11}, \dots, \theta^j_{dk}}.
                \end{align}
                Then, sufficient variability means that $\rank{\mat{L}} = dk,$ \ie, the modulation parameter matrix has full \textbf{column} rank.
            \end{assum}

            To see how a \gls{vmf} fulfills \Cref{assum:suff_var}, consider that the log-pdf $q_i(z_i, \c)$ comes from a conditional exponential family, \ie:
            \begin{align}
                q_i\left(z_i, \c\right) &= \sum_{j=1}^k\brackets{\tilde{q}_{ij }\left(z_{i}\right) \theta_{ij }(\c)} -\log N_i(\c) +\log Q_i(z_i),\label{eq:cond_exp}\\
                &= {\kappa \langle \vv_{\c}, \z \rangle} +\log C_d(\kappa)\label{eq:cond_exp_vmf}
            \end{align}
            where 
            $k$ is the order of the exponential family, 
            $N_i$ is the normalizing constant, $Q_i$ the base measure,
            $\tilde{q}_{i}$ is the sufficient statistics, and the modulation parameters $\theta_i := \theta_{i}(\c)$ depend on $\c$. 
            In our cluster-centric \gls{vmf} conditional in~\eqref{eq:vmf_inf}, $k=1$ (\ie, we can drop the $j$ index) and $\theta_{i}(\c)=  \vv_{\c}.$ This corresponds to \eqref{eq:cond_exp_vmf} above, where $C_d(\kappa)$ is the concentration- and dimension-dependent normalization constant.

            As our \gls{dgp} assumes that the cluster vectors form an affine generator system, and in the above \Cref{eq:cond_exp} the cluster vectors take the role of $\theta_{ij}(\c),$ we can prove that our DGP fulfils \Cref{assum:suff_var}.

             \begin{lem}[The cluster-based DIET DGP is sufficiently variable]\label{lem:diet_suff_var}
                Assuming that the cluster-vectors form an affine generator system (\Cref{assum:diet_dgp}), then the modulation parameter matrix \mat{L} (defined in \Cref{assum:suff_var}) formed by the cluster vectors $\vv_c-\vv_a$ has full column rank.
            \end{lem}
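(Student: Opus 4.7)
The plan is to unpack the definition of the modulation parameter matrix $\mat{L}$ for the vMF conditional of our cluster-centric DGP and then to reduce the claim directly to \Cref{lem:affine_generator}.

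First, I would fix a reference class $a \in \Cset$ whose cluster vector plays the role of $\boldsymbol{\theta}^0$, and enumerate the remaining classes $c \in \Cset \setminus \{a\}$ as $j = 1, \dots, E-1$ to index the rows of $\mat{L}$. Since the vMF distribution is an exponential family of order $k=1$, comparing the general form \eqref{eq:cond_exp} with the vMF case \eqref{eq:cond_exp_vmf} gives that the sufficient statistic along coordinate $i$ is $\tilde{q}_i(z_i) = z_i$ and the modulation parameter is $\theta_i(c) = \kappa (\vv_c)_i$. Consequently $\boldsymbol{\theta}^j = \kappa \vv_c^\top$ (for the class $c$ associated with index $j$), and the $j$-th row of $\mat{L}$ equals $\kappa (\vv_c - \vv_a)^\top$. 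The overall scalar $\kappa > 0$ does not affect the rank, so $\mat{L}$ has full column rank if and only if the family $\{\vv_c - \vv_a \mid c \in \Cset \setminus \{a\}\}$ spans $\rr{d}$.

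Next, I would invoke \Cref{lem:affine_generator}, part 1: because $\{\vv_c \mid c \in \Cset\}$ is assumed to be an affine generator system of $\rr{d}$ in \Cref{assum:diet_dgp}, the shifted system $\{\vv_c - \vv_a \mid c \in \Cset\}$ is an ordinary (linear) generator system of $\rr{d}$. Dropping the zero vector $\vv_a - \vv_a$ obviously does not change the span, so the remaining $E-1$ differences still generate $\rr{d}$. Therefore the rows of $\mat{L}$ span $\rr{d}$, so $\rank(\mat{L}) = d = dk$, which is precisely \Cref{assum:suff_var}.

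I do not anticipate any genuine obstacle in this argument; the lemma is essentially a translation of the affine-generator assumption into the ICA vocabulary of sufficient variability. The only things to get right are (i) the bookkeeping between the exponential-family index $j$ and the class label $c$, and (ii) noting explicitly that $k=1$ for the vMF so the row length of $\mat{L}$ collapses to $d$ rather than $dk$ with $k>1$ — this is also the reason why, consistent with the rest of the paper's theory, identifiability can only be concluded up to a linear transformation and not up to elementwise nonlinearities.
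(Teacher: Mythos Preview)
Your proposal is correct and follows essentially the same route as the paper: identify the vMF modulation parameters as $\kappa\vv_c$, form the rows of $\mat{L}$ as the differences $\kappa(\vv_c-\vv_a)$, and then invoke \Cref{lem:affine_generator} to conclude that these differences span $\rr{d}$, giving full column rank $d=dk$. The only cosmetic difference is that the paper extracts $\kappa\vv_c$ by differentiating the log-pdf \eqref{eq:cond_exp_vmf}, whereas you read it off directly from the exponential-family parametrization; your bookkeeping (the $j\leftrightarrow c$ correspondence, the irrelevance of the positive scalar $\kappa$, and the observation that $k=1$ collapses the row length to $d$) is, if anything, more explicit than the paper's own argument.
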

            \begin{proof}
                First we need to show that the cluster vectors $\vv_c$ have the same role as $\theta_{ij}(\c).$
                The derivative of the log-pdf of the \gls{vmf} distribution in \eqref{eq:cond_exp_vmf} \wrt $\z$ is the exponent in the DIET conditional (we can differentiate for non-normalized $\z$, which is the case for auxiliary-bvariable ICA).
                \begin{align}
                    \derivative{}{\z}\brackets{{\kappa \langle \vv_{\c}, \z \rangle} +\log C_d(\kappa)} = \kappa \vv_{\c}
                \end{align}
                Then, we need $\d+1$ cluster vectors to use one as a pivot to calculate \mat{L} as defined in~\Cref{assum:suff_var}. 
                By \Cref{lem:affine_generator}, this new set of vectors (\ie, offset by $\vv_a$, expressed as $\vv_\c-\vv_a$) also forms a generator system of \rr{d}, which implies that \mat{L} has rank $d,$ concluding the proof.
            \end{proof}

             To apply ~\citep[Thm.~3]{hyvarinen_nonlinear_2019} to recover the identifiability result of \gls{tcl}, we need to show that our setting can solve the regression problem defined in \Cref{assum:gcl}.
             What we will show, \wolog, is that our regression function akin to \citep[(11)]{hyvarinen_nonlinear_2019} does not have an auxiliary-variable dependent constant. 

             \begin{lem}[Regression function]
                 The regression function in~\citep[Thm.~3]{hyvarinen_nonlinear_2019}, which solves the multiclass classification problem, consists of three items: 1) a scalar product of vector-valued functions of either $\z$ or $\c$, and scalar-valued functions of 2) $\z$ and  3) $\c$. Our DGP and neural network pipeline used for learning can also match this regression function, by choosing a pivot (zero) value for the $\c$-dependent scalar function. This is without loss of generality.
             \end{lem}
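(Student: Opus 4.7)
The plan is to identify the three parts of Hyvärinen's optimal regression function (HN2019 Thm.~3) with the structure of the DIET multinomial regression head and show that the two terms absent from DIET---the $\z$-only base measure and the $\c$-only log-partition---contribute nothing to the minimizer of \eqref{eq:DIET_loss_x_main}. First I would instantiate the GCL regression form under \Cref{assum:diet_dgp}: the Bayes-optimal log-ratio decomposes as $r(\x,\c) = \kappa\langle \vv[\c], \ig(\x)\rangle + A(\ig(\x)) + B(\c)$, where $A$ aggregates the log-Jacobian of $\ig$ and the data-marginal terms, while $B(\c) = \log C_d(\kappa)$ is the vMF log-normalizer appearing in \eqref{eq:cond_exp_vmf}. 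This matches item~(1) of the lemma via the bilinear piece, item~(2) via $A$, and item~(3) via $B$.

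Next I would exploit two softmax invariances of the DIET objective. The first is invariance under adding a $\z$-only (equivalently $\x$-only) scalar: any term $A(\f(\x))$ factors out of both the numerator and the denominator of the softmax in \eqref{eq:DIET_loss_x_main}, so whether or not DIET's parameterization explicitly represents $A$ is irrelevant to the set of minimizers. The second is that the cross-entropy loss depends only on differences of logits across classes, hence any $\c$-only term $B(\c)$ contributes solely through $B(\c)-B(\c^*)$ for a chosen pivot class $\c^*$; declaring the pivot value $B(\c^*) = 0$ is therefore without loss of generality, and is exactly the ``pivoting'' step invoked in the lemma statement.

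In our specific vMF DGP the pivoting is in fact vacuous: because the concentration $\kappa$ is shared across all classes, $B(\c) = \log C_d(\kappa)$ is independent of $\c$ and cancels globally in the softmax. What remains of $r(\x,\c)$ after these two reductions is exactly the bilinear term $\kappa\langle \vv[\c], \ig(\x)\rangle$, which is matched by DIET's logit $\beta\langle \w[\c], \f(\x)\rangle$ via the identifications $\beta\w[\c] \leftrightarrow \kappa\vv[\c]$ and $\f \leftrightarrow \ig$. The main technical point I expect to verify carefully is that pivoting does not collapse the sufficient-variability condition required by GCL: this is exactly what \Cref{lem:affine_generator} guarantees, since subtracting a common vector from an affine generator system still yields a generator system of $\rr{d}$, so the full-rank property of the modulation-parameter matrix in \Cref{assum:suff_var} carries over---as already leveraged in \Cref{lem:diet_suff_var}---and the reduction therefore does not weaken any assumption downstream in the GCL argument.
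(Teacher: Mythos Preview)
Your argument is correct and reaches the same conclusion, but via the opposite direction from the paper. The paper works from the \emph{learned} side: it invokes the already-established affine identifiability of the cluster vectors (case~\ref{item:ident_theo_3} of \Cref{thm:ident_theo}), namely $\tw[\c] = \A\vv[\c] + \constvec$, and then expands the learned logit $\scprod{\tw[\c], \h(\z)} = \scprod{\A\vv[\c], \h(\z)} + \scprod{\constvec, \h(\z)}$, reading off the bilinear piece and the $\z$-only scalar $a(\x) = \scprod{\constvec, \h(\z)}$ directly from the identified parameters, with the $\c$-only scalar set to zero by the usual multinomial-regression pivoting convention. You instead work from the \emph{ground-truth} DGP: you write down the Bayes-optimal log-ratio under the vMF conditional, then use the two softmax invariances to strip the $\z$-only and $\c$-only scalar pieces, showing that DIET's bare bilinear head already suffices to represent the optimum.

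Both routes are sound. The paper's is shorter once \Cref{thm:ident_theo} is in hand, and it exhibits the bias $\constvec$ concretely as the source of the $\z$-only term. Yours is more self-contained---it does not presuppose the identifiability theorem---and sharpens the pivoting justification by observing that $B(\c) = \log C_d(\kappa)$ is actually $\c$-independent under a shared $\kappa$, so pivoting is vacuous here rather than merely without loss of generality. One small caveat: the ``log-Jacobian of $\ig$'' phrasing is loose in this setting since $\g: \S{d-1} \to \rr{D}$ is not a diffeomorphism between equal-dimensional Euclidean spaces; but the substantive point---that whatever relates $p(\x|\c)$ to $p(\z|\c)$ is $\c$-independent and therefore cancels in the softmax---is correct.
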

             \begin{proof}
                 In \Cref{item:ident_theo_4_main}, the identifiability of the cluster vectors is up to an affine transformation, where the bias is denoted by $\constvec$. Calculating the scalar product of the learned cluster vector with the learned latents yields two terms:
                 \begin{enumerate}
                     \item a scalar product term between $\z-$ and $\c-$dependent vectors; and
                     \item a $\constvec\cdot \h(\z)$ term, which depends only on \z
                 \end{enumerate}
                  Comparing to \citep[Eq.~(11)]{hyvarinen_nonlinear_2019}, we see that a $\c-$dependent scalar function is missing. Following the common practice in multinomial regression, we can, \wolog, arbitrarily choose the pivot value of the $\c-$dependent scalar function to be 0, thus we do not need that term. This yields the following expression for the regression function:
                 \begin{align}
                     r(\x,\c) &= \h(\x)^\top \tw[\c] =  \h(\x)^\top(\A \vv[\c] + \constvec) = 
                     \h(\x)^\top\A \vv[\c] + \h(\x)^\top\constvec\\
                     &=  \h(\x)^\top\A \vv[\c] +  a(\x),
                 \end{align}
                 where \h\xspace is linear, so the first term depends on both \x, \c, the second term  $a(\x)$ only on \x, and we can choose (as usual practice in MLR), \wolog, $b(\c)=0,$ 
                 which concludes the proof.
             \end{proof}

          \subsection{The relationship between InfoNCE and DIET}\label{subsec:app_infonce}

            Last, we show how DIET relates to InfoNCE, where we reframe InfoNCE in form of instance disrimination. InfoNCE optimizes the cross entropy between the true conditional of the underlying \gls{dgp} (a \gls{vmf} distribution) and the approximate conditional parametrized by an encoder network. 
            This cross entropy can be formulated as a loss for an $N-$class classification problem, where $N$ is the dataset size:

            \begin{equation}
                \mathcal{L} = \sum_{i=1}^B CE(q(\x_i^{+}), {\mathbf e}_i)\quad\mathrm{s.t.}\quad q_k(\x_i^{+}) = \frac{\exp{(\f(\x_i^{+})^\top \f(\x_k))}}{\sum_{b=1\dots B}\exp{(\f(\x_i^{+})^\top \f(\x_b))}},
            \end{equation}
            where ${\mathbf e}_i$ is the \ith unit vector, encoding the class label in a one-hot fashion, and $\x_i^{+}$ denotes the positive pairs.
            Note that the last part is simply the standard softmax $\sigma(.)$ over the innner product $(\f(\x_i^{+})^\top \f(\x_k)).$
            To go from InfoNCE to DIET, we need to make the following modifications:
            \begin{enumerate}
                \item Sum over the whole dataset $N$, not just the batch $B$.
                \item Replace the encoding of the anchor sample $\f(\x_k)$ with a learnable linear projection $\W$, \ie, setting $ q(\x_i^{+}) = \sigma(\W\f(\x_i^{+}))$
            \end{enumerate}
            A remaining difference to the original DIET formulation is that InfoNCE assumes unit-normalized features. However, our theory (\cf \Cref{thm:ident_theo}) can accommodate unit-normalized vectors, so this is not a problem.

            Let $(\x_n, \x_n^{+})$ be  positive pair for sample $n$ and let there be $N$ samples in total. The InfoNCE loss is equivalent to a multi-class $N-$pair loss of the form:
            \begin{equation}
                \mathcal{L} = \sum_{i=1}^B CE(q(\x_i^{+}), {\mathbf e}_i)\quad\mathrm{s.t.}\quad q_k(\x_i^{+}) = \frac{\exp{(\f(\x_i^{+})^\top \f(\x_k))}}{\sum_{b=1\dots B}\exp{(\f(\x_i^{+})^\top \f(\x_b))}}.
            \end{equation}
            Now instead of having mini-batches of size $B$, we take the loss over the whole dataset:
            \begin{equation}
                \mathcal{L} = \sum_{i=1}^N CE(q(\x_i^{+}), {\mathbf e}_i)\quad\mathrm{s.t.}\quad q_k(\x_i^{+}) = \frac{\exp{(\f(\x_i^{+})^\top \f(\x_k))}}{\sum_{b=1\dots {\color{figred}N}}\exp{(\f(\x_i^{+})^\top \f(\x_b))}}.
            \end{equation}
            Next, replace $\f(\x_k)$ with a learnt and normalized weight vector $\w[k]$:
            \begin{equation}
                \mathcal{L} = \sum_{i=1}^N CE(q(\x_i^{+}), {\mathbf e}_i)\quad\mathrm{s.t.}\quad q_k(\x_i^{+}) = \frac{\exp{(\f(\x_i^{+})^\top \w[k])}}{\sum_{b=1\dots {\color{figred}N}}\exp{(\f(\x_i^{+})^\top \w[b])}}.
            \end{equation}
            Note that the last part is simply the standard softmax $\sigma(.)$ over a linear projection:
            \begin{equation}
                \mathcal{L} = \sum_{i=1}^N CE(q(\x_i^{+}), {\mathbf e}_i)\quad\mathrm{s.t.}\quad q(\x_i^{+}) = \sigma(\W\f(\x_i^{+}))
            \end{equation}
            where $\W$ is the projection matrix for which the \ith[k] row corresponds to $\w[k]$. Since $i$ in this case corresponds to the sample index in the dataset, we recovered DIET up to normalization, and so $\W$ is simply the linear classifier. 

\section{Additional experimental details}\label{sec:app_exp_det}

\subsection{synthetic data}\label{subsec:app_exp_syn_dat}
The code is based on \url{https://brendel-group.github.io/cl-ica/}.

\subsection{DisLib}\label{subsec:app_exp_det_dislib}
We evaluate our methods on the DisLib disentanglement benchmark \citep{locatello_challenging_2019}, which provides a controlled setting for testing disentanglement and latent variable recovery. 
We used the version of the DisLib \citep{locatello_challenging_2019} dataset based on the GitHub repository from \citep{roth2022disentanglement}\footnote{\url{https://github.com/facebookresearch/disentangling-correlated-factors}}.
It includes the vision datasets dSprites, Shapes 3D, MPI 3D, Cars 3D, and smallNORB. 
Using Pytorch, we train both a three-layer MLP with $512$ latent dimensions and BatchNorm (which helped with trainability) and a CNN (ResNet18) also with $512$ latent dimensions \citep{he2016deep}. 
We only consider latent variables with Euclidean topology, as non-Euclidean, \eg, periodic latent variables such as orientation, are problematic to learn and are potentially mapped to a nonlinear manifold~\citep{higgins_towards_2018,pfau_disentangling_2020,keurti_desiderata_2023,engels2024not}.
We evaluate the recovery of latent variables by computing the Pearson correlation between ground-truth and predicted factors.
Both models were trained for $100$ epochs, with the Adam optimizer, a learning rate of $0.001$ and a batch size of $4096$.

\subsection{Real Data: ImageNet-X}\label{subsec:app_exp_det_imgnetx}
Finally, we test the generalizability of our theoretical insights on real-world data using ImageNet-X~\citep{idrissi2022imagenetx}. 
The latent variables are proxies, defined by human annotators \citep{idrissi2022imagenetx}. 
They are binary labels, representing the deviation of a certain latent variable on a given sample from the mode of that latent variable.
We evaluate how well linear decoders can predict latent variables from pretrained model representations. 
We use two architectures, a ResNet50 (latent dimension $d=2048$) and a Vit-b-16 (latent dimension $d=768$) both trained on standard supervised classification using a cross-entropy loss on the full ImageNet dataset \citep{deng2009imagenet}. 
Moreover, to get a baseline decoding performance from inputs (like in the DisLib experiments), we also fix a random linear projection from the full $224 \cdot 224 \cdot 3 = 150,528$ ImageNet input dimensionality down to $2048$ the ResNet50 latent dimensionality.
This is purely for computational reasons and can be justified based on the Johnson–Lindenstrauss lemma\footnote{\url{https://en.wikipedia.org/wiki/Johnson–Lindenstrauss\_lemma}}.

We randomly split the data into $70\%$ training and $30\%$ testing data.
For some latent variables, the label distribution was heavily imbalanced with less than $1\%$ positive examples.
To compensate class imbalance, for each latent variable, we resampled both training and testing data to achieve an even distribution.
We repeat this and all following analysis averaged over $10$ random seeds.
Using the \texttt{LogisticRegression} module from \texttt{sklearn}\footnote{\url{https://scikit-learn.org/1.5/modules/generated/sklearn.linear_model.LogisticRegression.html}}, we fit a linear decoder to predict the latent variable. Finally, we compute $p$-values based on one sample $t$-tests against a null hypothesis of chance level ($50\%$) accuracy with a multi-comparison Bonferroni adjusted significance level of $\alpha = \frac{0.05}{17 \cdot 5} < 0.0006$ ($17$ factors and $5$ models).

\section{Additional experimental results}\label{sec:app_exp}

\paragraph{Ablating the choice of the cluster vectors.}
In \Cref{tab:exp_results_full}, we present additional ablation studies exploring the effect of varying the distribution of the cluster vectors $\vv_c$ on the unit hyper-sphere. We do not observe any significant impact on the $R^2$ scores of more concentrated cluster centroids $\vv_c$.
\begin{table}[ht]
    \setlength{\tabcolsep}{2.5pt}
    \centering
     \caption{Identifiability in the synthetic setup. Mean $\scriptscriptstyle\pm$ standard deviation across 5 random seeds. Settings that match our theoretical assumptions are $\color{figgreen}\checkmark$. We report the $R^2$ score for linear mappings, $\infz \rightarrow \z$ and $\w \rightarrow \vv_c$ for cases with normalized (o) and unormalized (a) $\w$. For unormalized $\w$, we verify that mappings $\infz \rightarrow \z$ are orthogonal by reporting the mean absolute error between their singular values and those of an orthogonal transformation.}
    \vspace{5pt}
    \begin{tabular}{ccrccccccc|ccccc}
    \toprule\midrule
    & & & & & & \multicolumn{4}{c}{\text{normalized $\w$ cases}}  & \multicolumn{2}{c}{\text{unnormalized $\w$}}\\
     & & & & & & \multicolumn{2}{c}{$R^2_{\text{o}} (\uparrow)$} & \multicolumn{2}{c}{MAE$_{\text{o}} (\downarrow)$} & \multicolumn{2}{c}{$R^2_{\text{a}} (\uparrow)$}\\
     $N$ & $d$  & $|\mathscr{C}|$ &  $p(\vv_c)$ & $p(\z|\vv_c)$ & M. & $\infz\rightarrow\z$ & $\w\rightarrow\vv_c$& $\infz\rightarrow\z$ & $\w\rightarrow\vv_c$ & $\infz\rightarrow\z$ & $\w\rightarrow\vv_c$ \\
    \midrule\midrule
     $10^{3}$&   $5$&  $100$&   \small{Uniform}&    \small{vMF$(\kappa\!=\!10)$}&   $\color{figgreen}\checkmark$& $98.6\scriptscriptstyle\pm0.01$& $99.9\scriptscriptstyle\pm0.01$&     $0.01\scriptscriptstyle\pm0.00$  &  $0.00\scriptscriptstyle\pm0.00$  & $99.0\scriptscriptstyle\pm0.00$ & $99.9\scriptscriptstyle\pm0.00$  \\
      $10^{3}$&   $5$&  $100$&    \small{Laplace}&   \small{vMF$(\kappa\!=\!10)$}&  $\color{figgreen}\checkmark$& $98.7\scriptscriptstyle\pm0.00$& $99.5\scriptscriptstyle\pm0.00$& $0.01\scriptscriptstyle\pm0.00$   &    $0.00\scriptscriptstyle\pm0.00$  & $99.1\scriptscriptstyle\pm0.00$ & $99.8\scriptscriptstyle\pm0.00$    \\
       $10^{3}$&   $5$&  $100$&    \small{Normal}&    \small{vMF$(\kappa\!=\!10)$}&   $\color{figgreen}\checkmark$& $98.2\scriptscriptstyle\pm0.01$& $99.2\scriptscriptstyle\pm0.01$& $0.01\scriptscriptstyle\pm0.00$ &$0.00\scriptscriptstyle\pm0.00$   & $99.2\scriptscriptstyle\pm0.00$ & $99.8\scriptscriptstyle\pm0.00$           \\
     \midrule
    \bottomrule
    \end{tabular}
    \label{tab:exp_results_full}
\end{table}

\paragraph{Quantifying the violation of the assumption on the conditional with a generalized normal.}

         \begin{figure}[!htb]
            \centering
            \includegraphics[width=0.99\linewidth]{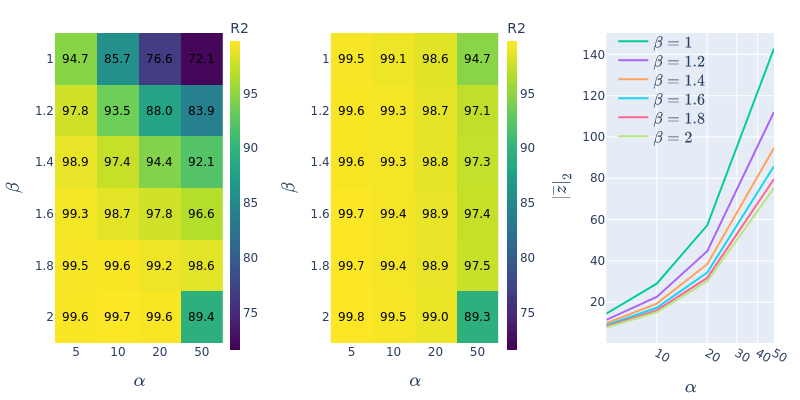}
            \caption{
            \textbf{Quantifying the assumption violation of a Laplace conditional:} \cref{tab:exp_results_ortho,tab:sim_results_sup} show that using a Laplace conditional leads to substantially lower \gls{r2} scores. Numbers are averages across 5 seeds. \textbf{(Left and Middle:)} using a generalized normal distribution to ``interpolate" between a normal $(\beta=2)$ and a Laplace $(\beta=1)$ distribution for different scale values (denoted as $\alpha$, which is conceptually akin to our concentration $\kappa$) and show the \gls{r2} score for recovering $\z$ \textbf{(Left)} and $\vv_\c$ \textbf{(Middle)}. \textbf{(Right:)} The average norm of the representation for the one-dimensional case for different $\beta$ values. As $\beta$ approaches $1$, the average norm increases, indicating a larger spread
            }
            \label{fig:gen_normal}
        \end{figure}

    \cref{tab:exp_results_ortho,tab:sim_results_sup} show that using a Laplace conditional instead of a \gls{vmf} or normal distribution leads to substantially lower \gls{r2} scores, though one might argue that the Laplace distribution is not that different (according to some intuitive notion) from the \gls{vmf} or normal distributions. 
    To understand why using a Laplace conditional leads to such a poor performance, we ran synthetic experiments with a generalized normal conditional with scale $\alpha$ (this is conceptually similar to our concentration parameter $\kappa$) and shape $\beta$:
    \begin{equation}
        \z \sim p(\z|\C) \propto e^{\alpha \norm{ \vv_{\C}- \z }_\beta^\beta}, \text{where}
        \norm{\x}_\beta^\beta = \sum_{i=1}^\d |x_i|^\beta.
        \label{eq:gen_normal}
    \end{equation}
    Importantly, $\beta=1$ gives a Laplace, whereas $\beta=2$ gives a normal distribution. 
    Thus, the generalized normal can be thought of as ``interpolating" between these two distributions, providing the perfect testbed to determine when performance starts to break down. 
    We show the \gls{r2} scores for both recovering $\z$ (\Cref{fig:gen_normal} Left) and $\vv_\c$ (\cref{fig:gen_normal} Middle) across multiple scale ($\alpha$) and shape $(\beta)$ values, averaged over 5 seeds. We also report the average representation norm across multiple scale ($\alpha$) and shape $(\beta)$ values (calculated in the one-dimensional case and averaged over 5 seeds) and the crucial effect of having a fat tail (\Cref{fig:heatmap} Right) with a truncated Laplace distribution.
    Our results indicate that:
    \begin{enumerate}
        \item \textbf{$\vv_\c$ is easier to recover than $\z$:} the numbers are higher in \cref{fig:gen_normal}(Middle) than in~\cref{fig:gen_normal}(Left) 
        \item \textbf{More concentrated conditionals degrade identifiability for all shapes:} \gls{r2} scores decrease as $\alpha$ increases in both~\cref{fig:gen_normal}(Left, Middle) 
        \item \textbf{The average representation norm increases with increasing scale and decreasing shape:} as the conditional approaches the Laplace distribution $\beta\to 1,$ the samples have a larger norm, \ie, they are further away from the unit hypersphere, potentially leading to insufficient overlap~\cref{fig:gen_normal}(Right) 
        \item \textbf{Fat tails worsen identifiability performance:} Allow more and more of the tail of a Laplace distribution to be included in the support (truncated symmetrically between ${-1,1}$ and ${-3,3}$ shows a strong anti-correlation with the \gls{r2} score for multiple scales $(\alpha)$.
    \end{enumerate}

        \begin{figure}[!htb]
            \centering
            \includegraphics[width=0.99\linewidth]{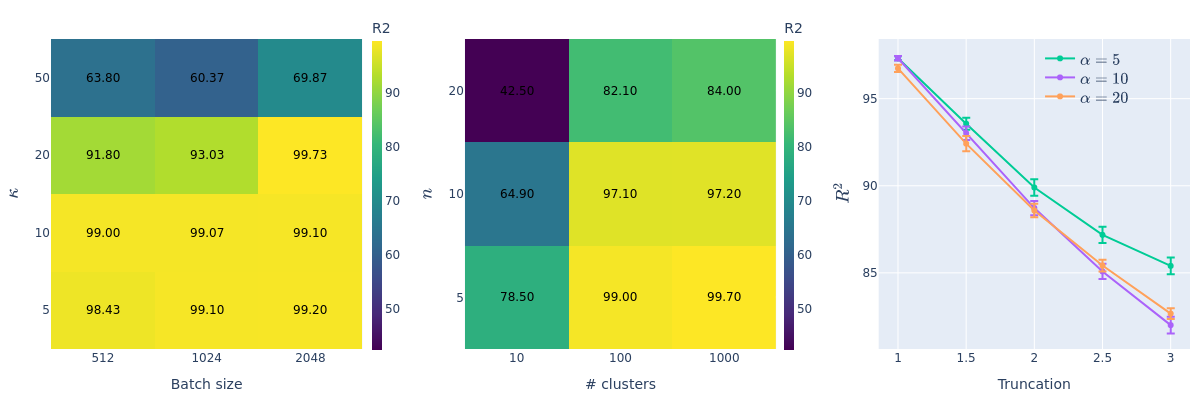}
            \caption{
            \textbf{The role of batch size, number of clusters, and fat tails for identifiability:}  \textbf{(Left:)} Increasing batch size improves \gls{r2} scores, counteracting the detrimental effect of more concentrated (higher $\kappa$) conditionals; \textbf{(Middle:)} More clusters improve \gls{r2} scores, counteracting the detrimental effect higher dimensional representations (10 clusters for 10 and 20 dimensions violate \cref{assum:diversity}; thus, the low \gls{r2} score); \textbf{(Right:)} The Laplace distribution leads to low \gls{r2} scores due to its fat tails. Experiments with a truncated Laplace conditional (where the support is restricted to $[-\text{Truncation};\text{Truncation}]$) shows that the closer the truncated Laplace distribution is to the Laplace distribution (\ie, with increasing $\text{Truncation}$), \gls{r2} scores decrease for all tested scales $\alpha$. Averages and error bars are reported across 5 seeds
            }
            \label{fig:heatmap}
        \end{figure}

    \paragraph{Loss saturation: the role of batch size and latent dimensionality.}
    
    Our results in \cref{tab:exp_results_ortho,tab:sim_results_sup} show that increasing latent dimensionality leads to substantially lower \gls{r2} scores---in line with the findings in many prior works on the identifiability of SSL methods~\citep{zimmermann_contrastive_2021,von_kugelgen_self-supervised_2021,rusak_infonce_2024}. In general, the issue of extracting large dimensional representations from practical, real-world datasets is an open question~\citep{simon_stepwise_2023,jing_understanding_2022}. 
    We investigate the interaction between batch size and concentration in \cref{fig:heatmap}(Left).

    With increasing concentrations, intra-cluster samples become more indistinguishable. 
    This means that achieving close to optimum instance discrimination loss is easy with relatively coarse-grain features.
    The intuition is that most SSL objectives (including DIET) saturate, i.e.. get close to optimum as the underlying pretext problem (in our cases the instance discrimination) is nearly solved---a very good example is learning only the content features (but not the style ones) in \citet{von_kugelgen_self-supervised_2021}.
    The result is a population gradient with a very low norm.
    As the empirical loss is calculated based on a finite batch size, the variance of the gradient overtakes the norm, and the training effectively stalls. 
    Additionally, higher concentrations result in less overlap between classes, which can have a detrimental effect on source recovery. 
    However, the signal-to-noise ratio improves with a larger batch size, as the increasing \gls{r2} scores show in \cref{fig:heatmap}(Left) in the columns from  left to right.
    The results suggest that this issue can, at least theoretically, be mitigated---note that identifiability results hold in the infinite-sample regime, so requiring a larger batch size does not contradict our results.

    When the latent dimensionality increases, the saturating loss implies that relatively low-dimensional features are sufficient to achieve this near-optimum loss. 
    This phenomenon can also be seen in \cref{fig:heatmap}(Middle), where for each column (\ie, a fixed number of clusters), the \gls{r2} score deteriorates with increasing latent dimensionality.

    It is important to mention that none of these contradict our theory, which holds for the global optimizer of the DIET population loss. In practice, the additional challenges of estimation error (due to finite sample size and finite batch size) and algorithmic error (using GD-based methods to solve a likely non-convex problem) may impose adverse effects on the evaluation.

    \paragraph{Diversity: the role of the number of clusters.}
    To investigate the role of \cref{assum:diversity}, we investigate how the number of clusters affects the \gls{r2} score. Though the number of clusters is intrinsic to the dataset, thus, it cannot be chosen arbitrarily, knowing its effect on performance can inform practitioners about potential failure cases.
    We ablated the number of clusters for different latent dimensionalities  (\cref{fig:heatmap}(Middle)).
    For a given dimensionality, the \gls{r2} score improves with more clusters, although there seems to be a sweet spot where a further increase in the number of clusters only marginally improves the \gls{r2} score.
    We also see that our requirement of $\d+1$ clusters (affine generator systems of $\rr{\d}$ have at least this many members) is essential for good performance. This is reflected in the extremely poor \gls{r2} scores for $n\in\{10,20\}$, when the number of clusters is only 10 (first column from the left in \cref{fig:heatmap}(Middle)).

    \paragraph{Robustness to label noise.}

        \begin{figure}[!htb]
            \centering
            \includegraphics[width=0.99\linewidth]{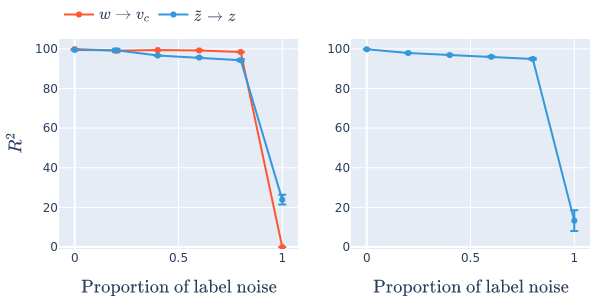}
            \caption{
            \textbf{The robustness of DIET and supervised classification to label noise:} The $x-$axis shows the proportion of instances with perturbed labels; the $y-$axis the \gls{r2} score of learning the ground truth latents (and the cluster vectors on the left). Perturbed labels are uniformly resampled from the whole instance label (left) or cluster label (right) sets, respectively. \textbf{(Left:)} DIET perfectly recovers the cluster vectors $\vv_\c$ up to $80\%$ label noise, and shows only a small degradation for the latents $\z$;  \textbf{(Right:)} Supervised classification robustly recovers the latents up to $80\%$ label noise with only a small degradation. Averages and error bars are reported across 5 seeds.
            }
            \label{fig:noise_sup}
        \end{figure}

        In this section, we evaluate the robustness of our method under increasing label noise.         
        For DIET (\cref{fig:noise_sup} Left) we perturbed the instance label for each sample with a probability equal to the label noise ratio ($x$-axis in the figure). The perturbed labels were drawn uniformly from the set of all instance labels. For the supervised case (\cref{fig:noise_sup} Right), the cluster label was perturbed instead. In both cases, the $y$-axis represents the identifiability score (\gls{r2} score).
        
        We believe this setup reflects realistic scenarios, where label noise is equally likely to affect any data point as commonly assumed in the literature \citep{nettleton2010study,frenay2013classification}.

        Despite this increasingly challenging setup, the results demonstrate remarkable robustness to label noise. Up to an $80\%$ label noise ratio, latent recovery shows minimal degradation, and the cluster recovery performance of DIET remains perfect---though both metric substantially decrease for a larger $(100\%)$ ratio. We attribute this robustness to the symmetry of label noise across all instances and labels. While increased uncertainty does reduce the accuracy of individual label predictions, the optimal logit values predicted by the encoder shouldn’t change under symmetrical label noise. The stability of deep learning models to label noise has also been shown by~\citet{rolnick2017deep}.

\end{scpcmd}
\printglossaries

\end{document}